\newcommand{\titre}{On Best-Arm Identification with a Fixed Budget \\ in Non-Parametric Multi-Armed Bandits}
\newcommand{\titrecourt}{Fixed Budget Best-Arm Identification in Non-Parametric Multi-Armed Bandits}
\title[\titrecourt]{\titre}
\newtcolorbox{nbox}[1][]{
  enhanced,
  fonttitle=\scshape,
  #1
}
\renewcommand{\leq}{\leqslant}
\renewcommand{\geq}{\geqslant}
\renewcommand{\P}{\mathbb{P}}
\newcommand{\E}{\mathbb{E}}
\newcommand{\R}{\mathbb{R}}
\newcommand{\unu}{\underline{\nu}}
\newcommand{\ula}{\underline{\lambda}}
\newcommand{\cA}{\mathcal{A}}
\newcommand{\cB}{\mathcal{B}}
\newcommand{\cD}{\mathcal{D}}
\newcommand{\cK}{\mathcal{K}}
\newcommand{\cP}{\mathcal{P}}
\newcommand{\cN}{\mathcal{N}}
\newcommand{\eqdef}{\stackrel{\mbox{\tiny\rm def}}{=}}
\newcommand{\base}{\mbox{\tiny base}}
\newcommand{\ind}[1]{\mathbb{I}_{\{ {#1} \}}}
\newcommand{\KL}{\mathrm{KL}}
\newcommand{\Ed}{\mathrm{E}}
\newcommand{\lriT}{\underset{T \to +\infty}{\longrightarrow}}
\newcommand{\Ber}{\mathrm{Ber}}
\newcommand{\ov}[1]{\overline{#1}}
\newcommand{\argmin}{\mathop{\mathrm{argmin}}}
\DeclareMathOperator{\Supp}{Supp}
\DeclareMathOperator{\Alt}{Alt}
\newcommand{\cDall}{\cP[0,1]}
\newcommand{\cDexp}{\cD_{\text{exp}}}
\newcommand{\cDber}{\mathcal{B}_{[1/4,\,3/4]}}
\newcommand{\cDberp}{\mathcal{B}_{[p,\,1-p]}}
\renewcommand{\d}{\mathrm{d}}
\renewcommand{\epsilon}{\varepsilon}
\newcommand{\kl}{\mathrm{kl}}
\newcommand{\e}{\mathrm{e}}
\newcommand{\ol}{\overline}
\newcommand{\cG}{\mathcal{G}}
\newcommand{\cI}{\mathcal{I}}
\newcommand{\cM}{\mathcal{M}}
\newcommand{\idunit}{\mathrm{id}_{[0,1]}}
\newcommand{\idR}{\mathrm{id}_{\R}}
\newcommand{\Linfstr}{\mathcal{L}_{\inf}^{<}}
\newcommand{\Linfeq}{\mathcal{L}_{\inf}^{\leq}}
\newcommand{\Lsupstr}{\mathcal{L}_{\inf}^{>}}
\newcommand{\Lsupeq}{\mathcal{L}_{\inf}^{\geq}}
\newcommand{\Ksupstr}{\mathcal{K}_{\inf}^{>}}
\newcommand{\chern}{\mathcal{L}}
\newcommand{\bln}{\mathop{\ov{\ln}}}
\newcommand{\sk}{\textbf{sketch}}
\begin{document}

\maketitle

\begin{abstract}%
We lay the foundations of a non-parametric theory of best-arm identification in multi-armed bandits with a fixed budget~$T$.
We consider general, possibly non-parametric, models $\cD$
for distributions over the arms; an overarching example is the model $\cD = \cDall$
of all probability distributions over $[0,1]$.
We propose upper bounds on the average log-probability of misidentifying the optimal arm based on information-theoretic quantities that we name $\Linfstr(\,\cdot\,,\nu)$ and $\Lsupstr(\,\cdot\,,\nu)$ and that correspond to
infima over Kullback-Leibler divergences between some distributions in $\cD$ and a given distribution $\nu$.
This is made possible by a refined analysis of the successive-rejects strategy of \citet{ABM10}.
We finally provide lower bounds on the same average log-probability, also in terms of the same new information-theoretic quantities; these lower bounds are larger when the (natural) assumptions on the considered strategies are stronger.
All these new upper and lower bounds generalize existing bounds based, e.g., on gaps between distributions.
\end{abstract}

\begin{keywords}%
Multi-armed bandits, best-arm identification,
non-parametric models,
Kullback-Leibler divergences,
information-theoretic bounds \\
\end{keywords}

\section{Introduction and brief literature review}
\label{sec:intro}

We consider a class $\cD$ of distributions over $\R$ with finite first moments,
which we refer to as the model $\cD$.
A $K$--armed bandit problem in $\cD$ is a $K$--tuple
$\unu = (\nu_1,\ldots,\nu_K)$ of distributions in $\cD$.
We denote by $(\mu_1,\ldots,\mu_K)$ the $K$--tuple of their expectations.
An agent sequentially interacts with $\unu$:
at each step $t \geq 1$, she selects an arm $A_t$
and receives a reward $Y_t$ drawn from the distribution $\nu_{A_t}$. This is the only feedback that she obtains.

While regret minimization has been vastly studied (see \citealp{LS20}), another relevant objective is \emph{best-arm identification}, that is, identifying the distribution with highest expectation.
In the fixed-confidence setting, this identification is performed  under the constraint that a given confidence level $1-\delta$ is respected, while minimizing the expected number of pulls of the arms (the expected sample complexity). This setting is fairly well understood
(see \citealp[Chapter~33]{LS20} for a review). A turning point in this literature was achieved by~\cite{GK16}, who
provided matching upper and lower bounds on the expected number of pulls of the arms in the case of canonical one-parameter exponential families.
Since then, improvements have been made in several directions, including for example non-asymptotic bounds (\citealp{DKM19}) and the problem of $\varepsilon$--best-arm identification (\citealp{GK21}).
The first generalization to non-parametric models in this fixed-confidence setting
was achieved by~\citet{TopTwoNP}, who worked in a concurrent and independent manner
from us. Their upper and lower bounds differ by a multiplicative factor of~2 (only).
\medskip

\noindent\textbf{Best-arm identification with a fixed budget.}~~The \emph{fixed-budget setting} is much less understood
in our opinion. Therein, the total number $T$ of pulls of the arms is fixed.
After these $T$ pulls, a strategy must issue a recommendation $I_T$.
Assuming that $\unu$ contains a unique optimal distribution $\nu^\star$ of index $a^\star(\unu)$,
one aims at minimizing $\P\bigl( I_T \neq a^\star(\unu) \bigr)$.
We are interested in (upper and lower) bounds that hold for all problems $\unu$ in $\cD$,
possibly under the restriction that they only contain a unique optimal arm.
It may be straightforwardly seen that the probability of error can decay exponentially fast---for instance,
by uniformly exploring the arms (pulling each of them about $T/K$ times) and recommending the one with the largest empirical average.
This is why the literature (see, for instance, \citealp{ABM10} and \citealp[Chapter~33]{LS20})
focuses on upper and lower bound functions
$\ell \leq U < 0$ of the typical form:
\emph{for all bandit problems} $\unu$ in $\cD$, with a unique optimal arm,
\begin{multline*}
\ell(\unu) \leq \liminf_{T \to +\infty} \frac{1}{T} \ln \P\bigl( I_T \neq a^\star(\unu) \bigr) \leq \limsup_{T \to +\infty} \frac{1}{T} \ln \P\bigl( I_T \neq a^\star(\unu) \bigr) \leq U(\unu) < 0\,, \\
\mbox{or, put differently,} \qquad\quad
\exp \Bigl( \ell(\unu) \, T \bigl( 1 + o(1) \bigr) \Bigr)
\leq \P\bigl( I_T \neq a^\star(\unu) \bigr)
\leq \exp \Bigl( U(\unu) \, T \bigl( 1 + o(1) \bigr) \Bigr)\,.
\end{multline*}
This problem is generally considered more difficult than the fixed-confidence setting
(see, e.g., \citealp[Chapter~33]{LS20} and \citealp[Section~6]{TopTwoNP}),
and even for parametric models like canonical one-parameter exponential models, no strategy with matching
upper and lower bounds (i.e., no optimal strategy) is known so far.

\paragraph{Earlier approaches.}
So far, four main approaches were considered
for the problem of best-arm identification with a fixed budget.
\emph{First}, the early approach by~\cite{ABM10} relies on gaps:
we define the gap $\Delta_a$ of arm~$a$ as the difference
$\mu^\star - \mu_a$ between the largest expectation $\mu^\star$ in $\unu$
and the expectation of the distribution $\nu_a$.
They introduce a successive-rejects strategy and provide gap-based upper bounds
for sub-Gaussian models, based on Hoeffding's inequality. They however propose a lower
bound only in the case of a Bernoulli model, not for larger, non-parametric, models.
This lower bound was further discussed by~\citet{CL16}, in a minimax sense.
\emph{A second series of approaches} (see, e.g., \citealp{KCG16}) focused
on Gaussian bandits with fixed variances, but their results do not seem to be easily generalized
to other models as they rely on specific properties (even stronger than the symmetry of the Kullback-Leibler divergence, namely,
that in this model, the Kullback-Leibler divergence only depends on the gap between the expectations of the distributions).
\emph{A third approach}, led by \citet{Rus16,Rus20}, considered canonical
one-parameter exponential families,
but for a different target probability. Namely, a Bayesian setting is considered and
the quality of a strategy is measured as the posterior probability of identifying the best arm.
An optimal non-gap-based complexity is exhibited, together with optimal strategies matching this complexity. However, \cite{Kom22} argue that such an approach is specific to the Bayesian case and
is not suited to the frequentist case that we consider.
\emph{A fourth approach} is to focus on the case of $K=2$ arms, see, e.g., \cite{KCG16}.
The non-parametric bounds obtained therein do not enjoy any
obvious generalization to the case of $K \geq 3$ arms
beyond the one stated in Theorem~\ref{th:LB-conj-GK16}
and criticized in Section~\ref{sec:existing}
for only involving pairwise comparisons with the best arm.
By considering very specific models, \cite{KAINQ22} constructed a strategy that is optimal (only) in the regime where the gap between the $2$ arms is small---yet,
this gap-based approach does not, by nature, go in the direction of non-parametric bounds.

We will provide more details concerning some of these approaches
while presenting and discussing our main results, in Section~\ref{subsec:overview}; see also Appendix~\ref{app:literature}.

\paragraph{Content and outline of this article.}
We focus our attention on instance-dependent upper and lower bounds,
holding for all problems of general models $\cD$, including non-parametric models,
and valid for any number $K$ of arms. Put differently,
we target a high degree of generality. While admittedly not exhibiting
matching upper and lower bounds, we show that the same (new) information-theoretic
quantities $\Linfstr$ and $\Lsupstr$ are at stake in these upper and lower bounds. These
information-theoretic quantities are defined,
in Section~\ref{sec:overview},
as infima of Kullback-Leibler divergences and provide a quantification of the
difficulty of the identification in terms of the geometry of information of the problem.
We also present in this section an overview of our results,
which we carefully compare to existing bounds (restated therein,
occasionally with some improvements).
We state upper bounds in Section~\ref{sec:SR} and to do so, we
provide an improved analysis of the classical successive-rejects strategy, not relying on gaps through Hoeffding's lemma.
Section~\ref{sec:LB} exhibits several possible lower bounds, which are inversely larger
to the strength of the assumptions made on the strategies. These lower bounds
generalize known lower bounds in the literature, like the lower bound for Bernoulli
models by~\cite{ABM10}, but hold for arbitrary models. They share some similar
flavor with the lower bounds by~\cite{LaRo85} and~\cite{BuKa96} for the cumulative regret.

\section{Overview of the results and more extended literature review}
\label{sec:overview}

Before being able to actually provide a formal summary of our results,
we introduce new quantifications of the difficulty of a bandit
problem in terms of geometry of the information.

\subsection{The key new quantities: $\Linfstr$ and $\Linfeq$, as well as $\Lsupstr$ and $\Lsupeq$}
\label{sec:Linf}

In this article, we only consider models $\cD$ whose distributions all admit an expectation.
We denote by $\Ed(\zeta)$ the expectation of a distribution $\zeta \in \cD$.
For a distribution $\nu \in \cD$ and a real number $x \in \R$,
we then introduce
\begin{align*}
& \Linfstr(x,\nu) = \inf \bigl\{ \KL(\zeta,\nu) : \zeta \in \cD \ \ \mbox{s.t.}
\ \ \Ed(\zeta) < x \bigr\} \\
\mbox{and} \qquad &
\Linfeq(x,\nu) = \inf \bigl\{ \KL(\zeta,\nu) : \zeta \in \cD \ \ \mbox{s.t.}
\ \ \Ed(\zeta) \leq x \bigr\}\,,
\end{align*}
where $\KL$ denotes the Kullback-Leibler divergence and
with the usual convention that the infimum of an empty set equals~$+\infty$.
Symmetrically, by considering rather distributions $\zeta$ with expectations
larger than $x$, we define
\begin{align*}
& \Lsupstr(x,\nu) = \inf \bigl\{ \KL(\zeta,\nu) : \zeta \in \cD \ \ \mbox{s.t.}
\ \ \Ed(\zeta) > x \bigr\} \\
\mbox{and} \qquad &
\Lsupeq(x,\nu) = \inf \bigl\{ \KL(\zeta,\nu) : \zeta \in \cD \ \ \mbox{s.t.}
\ \ \Ed(\zeta) \geq x \bigr\}\,.
\end{align*}
We state some general properties on these quantities in Appendix~\ref{app:Linf}---among others, that $\Linfstr$ and $\Linfeq$,
as well as $\Lsupstr$ and $\Lsupeq$, are almost identical
for the model $\cDall$.
The same holds for canonical one-parameter exponential models, as discussed in Appendix~\ref{app:cDexp}.
Lower bounds will be typically expressed with $\Linfstr$ and $\Lsupstr$ quantities,
while upper bounds will rely on $\Linfeq$ and $\Lsupeq$ quantities.

\begin{remark}
\label{rk:Kinf}
The key quantities for the non-parametric study of best-arm identification with fixed confidence
by~\citet{TopTwoNP} are defined based on Kullback-Leibler divergences with arguments in reverse order,
namely,
\begin{align*}
& \cK_{\inf}^-(\nu,x) = \inf \bigl\{ \KL(\nu,\zeta) : \zeta \in \cD \ \ \mbox{\rm s.t.}
\ \ \Ed(\zeta) < x \bigr\} = \cK_{\inf}(\nu,x) \\
\mbox{and} \qquad &
\cK_{\inf}^+(\nu,x) = \inf \bigl\{ \KL(\nu,\zeta) : \zeta \in \cD \ \ \mbox{\rm s.t.}
\ \ \Ed(\zeta) > x \bigr\}\,,
\end{align*}
where the first quantity was referred to as simply $\cK_{\inf}(\nu,x)$
by \citet{honda_non-asymptotic_2015} in the regret-minimization literature
(see also Appendix~\ref{app:phi=L} and \citealp{KLUCBS}). Optimal
bounds for regret minimization only depend on $\cK_{\inf}(\nu,x)$.
\end{remark}

For best-arm identification with fixed budget,
the arguments in the $\KL$ are in reverse order compared to the fixed-confidence setting.
Except for very specific models (e.g., the model $\cD_{\sigma^2}$ of
Gaussian distributions with a fixed variance $\sigma^2 > 0$), the Kullback-Leibler divergence
is not symmetric, i.e., $\KL(\zeta,\nu)$ and $\KL(\nu,\zeta)$ differ in general.
Specific best-arm-identification results were obtained by~\citet{KCG16}
for the model $\cD_{\sigma^2}$, based on the Bretagnolle-Huber inequality (\citealp{BH79});
they indicate that the sum of the inverse squared gaps would be driving
both the lower bound and upper bound functions $\ell$ and $U$.
However, a close look at the proof reveals that they heavily rely on a property even stronger than the symmetry of
$\KL$ for this model: details and discussions on this matter are provided in Appendix~\ref{app:BH}.
In particular, generalizations beyond the Gaussian case appear to be infeasible.

\subsection{Overview of the results}	
\label{subsec:overview}

The paper provides new and more general (possibly non-parametric) bounds on the misidentification errors based on
the information-theoretic quantities introduced above.
In particular, we consider a version of Chernoff information defined, for $\nu,\nu'$ in $\cD$
with $\Ed(\nu') < \Ed(\nu)$, as
\begin{equation}
\label{eq:defchern}
\chern(\nu', \nu) = \inf_{x \in [\Ed(\nu'), \Ed(\nu)]} \Bigl\{ \Lsupeq(x, \nu') + \Linfeq(x, \nu) \Bigr\} \,.
\end{equation}
Given a bandit problem $\unu$ with a unique optimal distribution denoted by $\nu^\star$,
we may rank the arms $a$ in non-decreasing order of $\chern\bigl(\nu_{a},\nu^\star\bigr)$,
i.e., consider the permutation $\sigma$ such that
\begin{equation} \label{eq:ordering_by_L}
0 = \chern\bigl(\nu_{\sigma_1},\nu^\star)
< \chern\big(\nu_{\sigma_2},\nu^\star\bigr) \leq \ldots
\leq \chern\bigl(\nu_{\sigma_{K-1}},\nu^\star\bigr)
\leq \chern\bigl( \nu_{\sigma_K},\nu^\star\bigr)\,.
\end{equation}
\emph{Our first main result} (Corollary~\ref{cor:SR} together with Lemma~\ref{lem:duality})
considers models $\cD$
like $\cD = \cDall$, the set of all probability distributions over $[0,1]$,
or $\cD = \cDexp$, any canonical one-parameter exponential family. We study
the successive-rejects strategy, introduced by \citet{ABM10}, for which arms are rejected one by one at the end of phases of uniform exploration, and state that this strategy is such that
for all bandit problems $\unu$ in $\cD$ with a unique optimal arm,
\begin{equation} \label{eq:ov_boundSR}
\limsup_{T \to +\infty} \frac{1}{T} \ln \P \bigl( I_T \ne a^\star(\unu) \bigr)
\leq - \frac{1}{\bln K} \min_{2 \leq k \leq K} \frac{\chern\bigl( \nu_{\sigma_{k}},\nu^\star\bigr)}{k}\,,
\end{equation}
where $\bln K$ is defined in~\eqref{eq:reglengths-ABM10}
and is of order $\ln K$. The key for this result (Lemma~\ref{lem:devphistar}, of independent interest) is a grid-based
application of the Cram{\'e}r-Chernoff bound to control
$\P \bigl( \overline{X}_N \leq \overline{Y}_N \bigr)$, where $\overline{X}_N$ and $\overline{Y}_N$
are averages of two independent $N$--samples. This approach can be used to analyze similar algorithms, like sequential halving (\citealp{KKS13}).

The corresponding lower bounds are stated rather in terms of $\Linfstr$ and $\Lsupstr$ quantities,
but Appendix~\ref{app:Linf} explains why, except in a single pathological case,
$\chern(\nu', \nu)$ could be alternatively defined with $\Linfstr$ and $\Lsupstr$ instead of
$\Linfeq$ and $\Lsupeq$. We actually state several lower bounds in Section~\ref{sec:LB},
that are larger as the assumptions on the strategies considered are more restrictive; as usual, there is
a trade-off between the strength of a lower bound and its generality. However, all assumptions considered remain rather
mild and are satisfied by successive-rejects-type strategies:
for instance, Definition~\ref{eq:balanced} restricts the attention to strategies
such that for all bandit problems, the arm associated with the smallest expectation is pulled less than a fraction $1/K$ of the time.
Out of all lower bounds exhibited,
\emph{our second main result} (Theorem~\ref{th:LBmonotone}) holds, as indicated,
under mild assumptions on the model and sequences of strategies considered, and reads:
for all bandit problems $\unu$ with no two same expectations,
\begin{equation} \label{eq:ov_LB2}
\liminf_{T \to +\infty} \frac{1}{T} \ln \P_{\unu}\bigl(I_T \ne a^\star(\unu) \bigr)
\geq - \min_{2 \leq k \leq K} \,\, \inf_{x \in [ \mu_{(k)}, \mu_{(k-1)} )}
\biggl\{ \frac{\Lsupstr\bigl(x, \nu_{(k)}\bigr)}{k-1} + \frac{\Linfstr\bigl(x, \nu^\star\bigr)}{k} \biggr\} \,,
\end{equation}
where $\smash{\mu_{(1)} > \mu_{(2)} > \mu_{(3)} > \dots > \mu_{(K)}}$
and where $\smash{\nu_{(a)}}$ denotes the distribution with expectation $\mu_{(a)}$.
Here, we considered the notation $(k)$ for order statistics in reverse order.

This lower bound does not match the exhibited upper bound, as is further discussed in
Section~\ref{sec:discuss-opt}.
Still, we argue that quantities defined as infima over $x$ of
$\Lsupstr\bigl(x, \nu_{(k)}\bigr) + \Linfstr\bigl(x, \nu^\star\bigr)$ should measure how difficult a best-arm-identification
problem is under a fixed budget. \emph{This is the main insight of this article.}

\subsection{Re-derivation of existing bounds}
\label{sec:existing}

We now survey the most important existing bounds and re-derive them from our general bounds. These existing bounds
all hold only for sub-Gaussian models and for exponential models when $K \geq 3$,
while a non-parametric bound was only available in the case of $K=2$ arms.

To do so, we will sometimes consider the following weaker version of the lower bound~\eqref{eq:ov_LB2},
obtained by picking $x = \mu_{(k)}$:
\begin{equation} \label{eq:ov_LB2-bis}
\smash{\liminf_{T \to +\infty} \frac{1}{T} \ln \P_{\unu}\bigl(I_T \ne a^\star(\unu) \bigr)
\geq - \min_{2 \leq k \leq K} \frac{\Linfstr\bigl(\mu_{(k)}, \nu^\star\bigr)}{k} \,}.
\end{equation}

\paragraph*{Comparison to the gap-based approaches.}
\citet{ABM10} propose an analysis of the successive-rejects strategy based on Hoeffding's inequality,
stating that for all bandit problems in $\cDall$ with a unique optimal arm,
\begin{equation}
\label{eq:UB_ABM10}
\smash{\limsup_{T \to +\infty} \frac{1}{T} \ln \P_{\unu}\bigl(I_T \ne a^\star(\unu)\bigr)
\leq - \frac{1}{\bln K} \min_{2 \leq k \leq K} \frac{\Delta_{(k)}^2}{k}} \,, \vspace{.3cm}
\end{equation}
where we recall the definition of the gaps $\Delta_{(k)} = \mu^\star - \mu_{(k)}$.
This bound is a consequence of (Corollary~\ref{cor:SR}, a slightly more general form of) the bound~\eqref{eq:ov_boundSR}, given
Pinsker's inequality~\eqref{eq:Pinsker-Linf}:
\begin{equation}
\label{eq:chern-gaps}
\chern\bigl(\nu_{(k)},\nu^\star\bigr) \geq
\inf_{x \in [\mu_{(k)}, \mu^\star]} \Bigl\{ 2 \bigl( x - \mu_{(k)} \bigr)^2 + 2(x - \mu^\star)^2 \Bigr\} =
\bigl( \mu^\star - \mu_{(k)} \bigr)^2 = \Delta_{(k)}^2\,. \vspace{-.125cm}
\end{equation}
We remark that the bound~\eqref{eq:UB_ABM10} and the lower bound on $\chern\bigl(\nu_{(k)},\nu^\star\bigr)$ may actually
be extended to the model of $\sigma^2$--sub-Gaussian distributions, up to considering factors $1/(4\sigma^2)$.
We do not discuss the UCB-E algorithm of \citet{ABM10}, as
its performance and analysis crucially depend on a tuning parameter set
with some knowledge of the gaps.

\citet{ABM10} also propose a carefully constructed lower bound for the model $\cDberp = \bigl\{ \Ber(x) : x \in [p,1-p] \bigr\}$
of Bernoulli distributions $\Ber(x)$ with parameters $x$ in $[p,1-p]$ for some $p \in (0, 1/2)$. A key inequality in their proof follows from the
Kullback-Leibler -- $\chi^2$-divergence bound:
\[
\forall x,y \in [p,1-p], \qquad \KL\bigl(\Ber(x),\Ber(y)\bigr) \leq \frac{(x-y)^2}{2p(1-p)}\,.  \vspace{-.125cm}
\]
Their construction may actually be generalized to models $\cD$ with $C_{\cD} > 0$ such that for
all $\nu,\nu'$ in $\cD$, one has $\smash{\KL(\nu,\nu') \leq C_{\cD} \bigl( \Ed(\nu) - \Ed(\nu') \bigr)^2}$. This is a property
that clearly holds for some exponential families: on top
of the restricted Bernoulli model discussed above, for which
\[
C_{\cDberp} = 1/\bigl( 2p(1-p) \bigr)\,,
\]
we may cite the model $\cD_{\sigma^2}$ of Gaussian distributions with variance $\sigma^2$,
for which $C_{\cD_{\sigma^2}} = 1/(2\sigma^2)$. For models enjoying the existence
of such a constant $C_{\cD}$, (a straightforward modification of)
the analysis by \citet{ABM10} entails that for any $\unu$ in $\cD$,
\begin{equation}
\label{eq:LB-ABM10}
\liminf_{T \to +\infty} \frac{1}{T} \ln \P_{\unu}\bigl(I_T \ne a^\star(\unu)\bigr) \geq
- 5 \, C_{\cD} \min_{2 \leq k \leq K} \frac{\Delta_{(k)}^2}{k}\,.  \vspace{-.075cm}
\end{equation}
As by the very assumption on the model, $\Linfstr\bigl(\mu_{(k)}, \nu^\star\bigr) \leq C_\cD \, \Delta_{(k)}^2$,
the lower bound~\eqref{eq:ov_LB2-bis} implies the stated lower bound~\eqref{eq:LB-ABM10}, with an improved constant factor.

The lower bound~\eqref{eq:LB-ABM10} and the upper bound~\eqref{eq:UB_ABM10}
differ in particular by a factor proportional to $\bln K$.
\citet{CL16} discuss this gap in the case of the Bernoulli model $\cDber$
and improve the lower bound~\eqref{eq:LB-ABM10} by a factor of $\ln K$,
but not simultaneously for all bandit problems $\unu$ (as we aim for); they
obtain the improvement just for one bandit problem $\unu$. Their lower bound result
(formally stated and discussed in Appendix~\ref{app:CL16})
is therefore of a totally different nature.
More results on how and when given lower bounds with a given
complexity measure may, or may not, be improved
were stated by~\citet{KTH22}.

\paragraph{Discussion of the non-parametric bound for $K=2$ arms of~\citet{KCG16}.}
It turns out that the existing literature for the fixed-budget setting
offered so far a non-parametric bound, in the case of $K=2$
arms. Namely, in a general, possibly non-parametric model $\cD$,
\citet[Theorem~12]{KCG16} stated a lower bound for all $2$--armed bandit problems $\unu = (\nu_1,\nu_2)$:
\begin{equation} \label{eq:LB-KCG16-2arms}
\liminf_{T \to +\infty} \frac{1}{T} \ln \P_{\unu}\bigl(I_T \ne a^\star(\unu) \bigr)
\geq -
\hspace{-.4cm}
\inf_{\substack{\ula \text{ in } \cD \,: \\ \Ed(\lambda_{a^\star(\unu)}) < \Ed(\lambda_{w_\star(\unu)})}}
\hspace{-.4cm}
\max \Bigl\{ \KL\bigl(\lambda_{w_\star(\unu)}, \nu_{w_\star(\unu)}\bigr), \,
\KL\bigl(\lambda_{a^\star(\unu)}, \nu_{a^\star(\unu)} \bigr) \Bigr\}\,,
\end{equation}
where $w_\star(\unu)$ denotes the suboptimal arm in $\unu$ and
where the infimum is over all alternative bandit problems $(\lambda_1,\lambda_2)$ in $\cD$
with reverse order on the expectations compared to $\unu$.
We note (see the proof of Theorem~\ref{th:LB-conj-GK16}) that we may actually rewrite this
lower bound in a more readable way, in terms of $\Linfstr$ and $\Lsupstr$ quantities,
illustrating once again that these quantities are key in measuring the complexity of
best-arm identification under a fixed budget:
\begin{multline}
\label{eq:D2bras}
\inf_{\substack{\ula \text{ in } \cD \,: \\ \Ed(\lambda_{a^\star(\unu)}) < \Ed(\lambda_{w_\star(\unu)})}}
\hspace{-.4cm}
\max \Bigl\{ \KL\bigl(\lambda_{w_\star(\unu)}, \nu_{w_\star(\unu)}\bigr), \,
\KL\bigl(\lambda_{a^\star(\unu)}, \nu_{a^\star(\unu)} \bigr) \Bigr\} \\[-.5cm]
= \inf_{x \in [\mu_{w_\star(\unu)}, \mu^\star]}
\biggl\{ \max \Bigl\{ \Lsupstr\bigl(x, \nu_{w_\star(\unu)}\bigr), \, \Linfstr\bigl(x, \nu^\star\bigr) \Bigr\} \biggr\}\,.
\end{multline}
The proof technique of~\citet{KCG16} may be applied in a pairwise fashion to generalize the
lower bound~\eqref{eq:D2bras} for $2$ arms into a lower bound for $K \geq 2$ arms, stated in Theorem~\ref{th:LB-conj-GK16}:
for all $\unu$ in $\cD$ with a unique optimal arm,
\begin{equation}
\label{eq:D2bras-gener}
\liminf_{T \to +\infty} \frac{1}{T} \ln \P_{\unu} \bigl(I_T \neq a^\star(\unu)\bigr) \geq
- \min_{k \neq a^\star(\unu)}
\inf_{x \in [\mu_k, \mu^\star]} \Bigl\{ \max \bigl\{ \Lsupstr(x, \nu_k), \, \Linfstr(x, \nu^\star) \bigr\} \Bigr\} \,.
\end{equation}
We however do not claim that~\eqref{eq:D2bras-gener} is a deep and interesting bound, as it
only involves pairwise comparisons with the best arm. In particular, we lack
divisions by the ranks of the arms, as in~\eqref{eq:ov_LB2}.
This is why we had not stated the result~\eqref{eq:D2bras-gener} of Theorem~\ref{th:LB-conj-GK16} in
Section~\ref{subsec:overview} and mention it only here.

That being said, given that the infima in~\eqref{eq:ov_LB2} are over more restricted ranges
than in~\eqref{eq:D2bras-gener}, we can see no obvious ranking between the two bounds, which rather look incomparable.

\paragraph{Bounds for $K=2$ arms and exponential families, cf.\
comments after Theorem~12 of~\citet{KCG16}.}
We denote by $\cDexp$ the model corresponding to a
canonical one-parameter exponential family with expectations defined on an open interval $\cM$ (see Appendix~\ref{app:cDexp} for a reminder
on this matter). For such a model, we denote by $d$ the mean-parameterized Kullback-Leibler divergence.
By continuity of $d$, we have that for all $\nu$ in $\cDexp$ and for all $x \in \cM$,
\begin{align}
& \forall x \leq \Ed(\nu), \qquad \Linfstr(x,\nu) = \Linfeq(x,\nu) = d\bigl(x, \Ed(\nu)\bigr)\,, \label{eq:cont-d1} \\
\mbox{and} \qquad &
\forall x \geq \Ed(\nu), \qquad \Lsupstr(x,\nu) = \Lsupeq(x,\nu) = d\bigl(x, \Ed(\nu)\bigr) \label{eq:cont-d2} \,.
\end{align}
Note that all bounds stated in Section~\ref{subsec:overview} then admit simple reformulations in terms of $d$.
The Chernoff-information-type quantity $\chern$ introduced in~\eqref{eq:defchern}
may also be mean-parameterized as follows: for $\mu' < \mu$,
\begin{equation} \label{eq:Lchernoff_Dexp}
L(\mu',\mu) = \min_{x \in [\mu',\mu]} \bigl\{ d(x,\mu') + d(x,\mu) \bigr\}\,.
\end{equation}
We now explain why we called $L$ (and therefore $\chern$) a version of Chernoff information.
The original definition of the Chernoff information
$D(\mu', \mu)$ is the value $d(y, \mu)$ for $y \in [\mu', \mu]$ such that $d(y, \mu') = d(y, \mu)$.
As mentioned in the comments after Theorem~12 of~\citet{KCG16},
$D$ is the quantity at stake in~\eqref{eq:D2bras} for a canonical one-parameter exponential family: given that
$d(\,\cdot\,,\mu')$ and $d(\,\cdot\,,\mu)$ are respectively increasing and decreasing
on $[\mu',\mu]$,
\[
\min_{x \in [\mu',\mu]} \max\bigl\{ d(x,\mu'), \, d(x,\mu) \bigr\} = D(\mu', \mu)\,.
\]
Therefore, $D(\mu', \mu) \leq L(\mu',\mu) \leq 2\,D(\mu', \mu)$, which shows that $L$ is related to $D$,
as claimed.

\begin{example}
\label{ex:L-Ber}
We state the lower bound~\eqref{eq:ov_LB2-bis} and the upper bound~\eqref{eq:ov_boundSR}
for the model $\cDberp$ of Bernoulli distributions with parameters in $[p,1-p]$, where $p \in (0, 1/2)$.
We denote by
\[
\kl(x,y) = x \ln \frac{x}{y} + (1-x) \ln \frac{1-x}{1-y}\,, \qquad \mbox{where} \qquad x,y \in [p,1-p]
\]
the mean-parameterized Kullback-Leibler divergence of this model. We consider a generic bandit problem
$\unu = \bigl( \Ber(p_1), \ldots, \Ber(p_K) \bigr)$. We rank the parameters as in~\eqref{eq:ov_LB2},
i.e., introduce the notation $p^\star = p_{(1)} > p_{(2)} > \ldots > p_{(K)}$. Then, after noticing
(see Lemma~\ref{lm:monotony_chernoff_Dexp} in Appendix~\ref{app:cDexp}) that this ranking is the same as
the one considered in~\eqref{eq:ordering_by_L}, the upper bound~\eqref{eq:ov_boundSR} rewrites as
\[
\limsup_{T \to +\infty} \frac{1}{T} \ln \P \bigl( I_T \ne a^\star(\unu) \bigr)
\leq - \frac{1}{\bln K} \min_{2 \leq k \leq K} \frac{\displaystyle{\min_{x \in [p_{(k)}, p^\star]}}
\Bigl\{ \kl\bigl(x, p_{(k)}\bigr) + \kl(x, p^\star) \Bigr\}}{k} \,,
\]
while the lower bound~\eqref{eq:ov_LB2-bis} rewrites as
\[
\liminf_{T \to +\infty} \frac{1}{T} \ln \P_{\unu}\bigl(I_T \ne a^\star(\unu) \bigr)
\geq - \min_{2 \leq k \leq K} \frac{\kl\bigl( p_{(k)}, p^\star\bigr)}{k} \,.
\]
They should be compared to
the upper~\eqref{eq:UB_ABM10} and lower~\eqref{eq:LB-ABM10} bounds of \citet{ABM10}, respectively.
\end{example}

\subsection{Discussion of the (lack of) optimality of the new bounds exhibited}
\label{sec:discuss-opt}

The lower bound~\eqref{eq:ov_LB2} does not match the upper bound~\eqref{eq:ov_boundSR}
because of two aspects.
First, the infima in~\eqref{eq:ov_LB2} are only taken on restricted ranges
$[ \mu_{(k)}, \mu_{(k-1)} )$ and not on the entire intervals
$[ \mu_{(k)}, \mu^\star]$ as in~\eqref{eq:ov_boundSR}.
Second, the upper bound~\eqref{eq:ov_boundSR} involves a $1/\bln K$ factor,
while the lower bound~\eqref{eq:ov_LB2} does not.
A similar $1/\bln K$ factor was missing between
the upper~\eqref{eq:UB_ABM10} and lower~\eqref{eq:LB-ABM10} bounds of \citet{ABM10}
for Bernoulli models, together with a numerical factor of $5 \, C_{\cDberp}$.
The non-parametric bounds exhibited in this article mainly generalize and extend the known parametric
bounds but do not refine the latter in the sense that gaps between upper and lower bounds would be closed.

That being said, we would like to illustrate below on one specific example
to which extent the gap-based bounds can be looser.

\paragraph{Example of an extreme improvement: distributions with separated supports.}
For general non-parametric models, gaps are not enough at all to measure complexity as we may well have a finite gap between two distributions $\nu_1$ and $\nu_2$ with $\mu_1 > \mu_2$, but $\mathcal{L}(\nu_2, \nu_1) = +\infty$. This holds, for instance, as soon as $\nu_1$ and $\nu_2$ have
closed supports separated by a threshold $x_0$, i.e., the closed supports of $\nu_1$ and $\nu_2$ are included in $(-\infty,x_0)$ and
$(x_0,+\infty)$, respectively. Indeed, by mimicking the beginning of the proof of Lemma~\ref{lm:Linfstr-egal-Linfeq} of Appendix~\ref{sec:spec-cDall},
it may be seen that $\Linfeq(x, \nu_1) = +\infty$ for $x \leq x_0$ and $\Lsupeq(x, \nu_2) = +\infty$ if $x \geq x_0$, so that in all cases,
the sum $\Lsupeq(x, \nu_2) + \Linfeq(x, \nu_1)$ equals $+\infty$, and thus, $\mathcal{L}(\nu_2, \nu_1) = +\infty$.
In our bounds, e.g., the upper bound~\eqref{eq:ov_boundSR}, the pair of distributions $\nu_1,\nu_2$
will therefore not contribute---as intuition commands: these two distributions are easy to distinguish---,
while it does contribute in the earlier gap-based bounds.

\section{Upper bound: successive-rejects strategy, with an improved analysis}
\label{sec:SR}

We consider the successive-rejects strategy introduced by \cite{ABM10},
for $K$ arms and a budget $T$. The strategy works in phases, and
the lengths of the phases are set beforehand; they are denoted by $\ell_1,\ldots,\ell_{K-1} \geq 1$
and satisfy $\ell_1 + \ldots + \ell_{K-1} = T$. The strategy maintains a list of candidate arms,
starting with all arms, i.e., $S_0 = \{1,\ldots,K\}$. At the end of
each phase $r \in \{1,\ldots,K-1\}$, it drops an arm to get $S_r$,
while during phase $r$, it operates with the $K-r+1$ arms in $S_{r-1}$.

More precisely, during pahse $r \in \{1,\ldots, K-1\}$, the strategy draws $\lfloor \ell_r / (K-r+1) \rfloor$ times
each arm in $S_{r-1}$ (and does not use the few remaining time steps, if there are some).
At the end of each phase~$r$, the strategy computes the empirical averages $\ol{X}_a^r$
of the payoffs obtained by each arm $a \in S_{r-1}$ since the beginning; i.e., $\ol{X}_a^r$
is an average over
\[
\smash{N_r = \lfloor \ell_1 / K \rfloor + \ldots + \lfloor \ell_r/(K-r+1) \rfloor}
\]
i.i.d.\ realizations of $\nu_a$.
It then drops the arm $a_r$ with smallest empirical average (ties broken arbitrarily).
This description is summarized in the algorithm box.

\begin{figure}[t]
\begin{nbox}[title=Algorithm: successive-rejects strategy]
\textbf{Parameters:} $K$ arms, budget $T$, lengths $\ell_1,\ldots,\ell_{K-1} \geq 1$
with $\ell_1 + \ldots + \ell_{K-1} = T$ \medskip

\textbf{Initialization:} $S_0 = \{1,\ldots,K\}$ \medskip

\textbf{For each phase} $r \in \{1,\ldots,K-1\}$\textbf{:}
\begin{enumerate}
\item For each arm $a \in S_{r-1}$
\begin{enumerate}
\item Pull it $\lfloor \ell_r / (K-1+r) \rfloor$ times
\item Compute the empirical average $\ol{X}_a^r$ of the payoffs obtained in this phase
and in the previous phases
\end{enumerate}
\item Drop the arm $a_r$ with smallest average (ties broken arbitrarily):
\[
S_r = S_{r-1} \setminus \{ a_r \}\,,
\qquad \mbox{where} \qquad
a_r \in \argmin_{a \in S_{r-1}} \ol{X}_a^r
\]
\end{enumerate}

\textbf{Output:} Recommend arm $I_T$, where $S_{K-1} = \{ I_T \}$
\end{nbox}
\end{figure}

\subsection{General analysis}
\label{sec:SR-gal}

The key quantities for the general analysis will be
the logarithmic moment-generating function $\phi_{\nu}$ of a distribution $\nu \in \cD$,
and its Fenchel-Legendre transform $\phi^{\star}_{\nu}$:
\begin{equation}	\label{eq:defphistar}
\forall \lambda \in \R, \quad \phi_{\nu}(\lambda) = \ln \int_\R \! \e^{\lambda x} \,\mathrm{d}\nu(x)
\qquad \mbox{and} \qquad
\forall x \in \R, \quad \phi^\star_{\nu}(x) = \sup_{\lambda \in \R} \bigl\{ \lambda x - \phi_{\nu}(\lambda) \bigr\} \,. \vspace{-.15cm}
\end{equation}
Based on them, we can now define, for all $\nu, \nu' \in \cD$ with $\Ed(\nu') < \Ed(\nu)$,
\[
\smash{\Phi(\nu',\nu) \eqdef \inf_{x \in [\Ed(\nu'), \Ed(\nu)]} \bigl\{ \phi^\star_{\nu'}(x) + \phi^\star_{\nu}(x) \bigr\}} \,.
\]
The following simple lemma shows that $\Phi$
plays a significant role for bounding the probability that two sample averages are in reverse
order compared to the expectations of the underlying distributions.
It supersedes the use of Hoeffding's inequality in~\cite{ABM10}.

\begin{restatable}{lem}{grille}
\label{lem:devphistar}
Fix $\nu$ and $\nu'$ in $\cD$, with respective expectations $\mu = \Ed(\nu) > \mu' = \Ed(\nu')$.
For all $N \geq 1$, let $\ol{X}_N$ and $\ol{Y}_N$ be the averages of $N$--samples with respective distributions $\nu$ and $\nu'$.
Then,
\[
\limsup_{N \to +\infty} \frac{1}{N} \ln \P \bigl( \overline{X}_N \leq \overline{Y}_N \bigr)
\leq - \inf_{x \in [\mu', \mu]} \bigl\{ \phi^\star_{\nu'}(x) + \phi^\star_{\nu}(x) \bigr\} \, \eqdef - \Phi(\nu',\nu)\,.
\]
\end{restatable}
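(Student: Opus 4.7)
The plan is to combine the standard Cram\'er--Chernoff bound, applied separately to each of $\ol{X}_N$ and $\ol{Y}_N$, with a grid-based decomposition of the rare event $\{\ol{X}_N \leq \ol{Y}_N\}$. Recall that by the very definition of the Fenchel--Legendre transform one has the Chernoff inequalities $\P(\ol{X}_N \leq x) \leq \exp(-N \phi^\star_{\nu}(x))$ for every $x \leq \mu$ and $\P(\ol{Y}_N \geq x) \leq \exp(-N \phi^\star_{\nu'}(x))$ for every $x \geq \mu'$. Both $\phi^\star_{\nu}$ and $\phi^\star_{\nu'}$ are convex, nonnegative, vanish at $\mu$ and $\mu'$ respectively, and are therefore nonincreasing (resp. nondecreasing) on $[\mu', \mu]$.

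Now fix $\epsilon > 0$ and introduce a grid $\mu' = x_0 < x_1 < \cdots < x_m = \mu$ on the compact interval $[\mu', \mu]$. The key geometric observation is the inclusion
\[
\{\ol{X}_N \leq \ol{Y}_N\} \subseteq \{\ol{X}_N \leq \mu'\} \cup \{\ol{Y}_N \geq \mu\} \cup \bigcup_{i=1}^{m} \bigl\{\ol{X}_N \leq x_i,\ \ol{Y}_N \geq x_{i-1}\bigr\}\,,
\]
obtained by discussing which slab $[x_{i-1}, x_i)$ contains $\ol{Y}_N$ (if any). By independence of the $X$- and $Y$-samples together with the Chernoff bounds above, each event in the union has probability at most $\exp\bigl(-N[\phi^\star_{\nu}(x_i) + \phi^\star_{\nu'}(x_{i-1})]\bigr)$; and since $\phi^\star_{\nu'}(\mu') = \phi^\star_{\nu}(\mu) = 0$, the two boundary terms are also bounded by $\exp(-N\, \Phi(\nu', \nu))$ directly from the definition of $\Phi$.

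The nontrivial step is to choose the grid fine enough that each cross-term $\phi^\star_{\nu}(x_i) + \phi^\star_{\nu'}(x_{i-1})$ is at least $\Phi(\nu', \nu) - \epsilon$. This is where continuity of the Fenchel--Legendre transforms matters: convex functions are continuous on the interior of their effective domain, hence uniformly continuous on any compact sub-interval on which they are finite. Choosing a grid with small enough mesh on the portion of $[\mu',\mu]$ where $\phi^\star_{\nu'}$ is finite yields $\phi^\star_{\nu'}(x_i) - \phi^\star_{\nu'}(x_{i-1}) \leq \epsilon$, so that
\[
\phi^\star_{\nu}(x_i) + \phi^\star_{\nu'}(x_{i-1}) \geq \phi^\star_{\nu}(x_i) + \phi^\star_{\nu'}(x_i) - \epsilon \geq \Phi(\nu',\nu) - \epsilon \,;
\]
cases where $\phi^\star_{\nu}$ or $\phi^\star_{\nu'}$ equals $+\infty$ on part of $[\mu', \mu]$ are dealt with separately, since the corresponding Chernoff probabilities then vanish. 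A union bound yields $\P(\ol{X}_N \leq \ol{Y}_N) \leq (m+2)\exp\bigl(-N(\Phi(\nu',\nu) - \epsilon)\bigr)$; taking $\frac{1}{N}\ln$ and $\limsup_N$, noting that $m$ depends only on $\epsilon$, and finally letting $\epsilon \to 0$ delivers the announced bound.

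The main obstacle is precisely the grid-refinement step: with a trivial one-point ``grid'' placed at an argmin $x^\star$ of $\phi^\star_{\nu} + \phi^\star_{\nu'}$, the naive union bound only delivers the classical Chernoff-information quantity $\max_x \min\bigl\{\phi^\star_{\nu}(x),\phi^\star_{\nu'}(x)\bigr\}$, which is generally strictly smaller than $\Phi(\nu', \nu)$. The gain comes from allowing $\phi^\star_{\nu}$ and $\phi^\star_{\nu'}$ to be evaluated at distinct but nearby points $x_i$ and $x_{i-1}$ and then invoking uniform continuity to close the gap; handling cleanly the boundary and infinite-value regimes of $\phi^\star$ is the only delicate technical point.
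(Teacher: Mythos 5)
Your first step---the inclusion of $\{\ol{X}_N \leq \ol{Y}_N\}$ into the two boundary events plus the slab events $\{\ol{X}_N \leq x_i,\ \ol{Y}_N \geq x_{i-1}\}$, followed by independence, the two Cram\'er--Chernoff bounds, and a union bound---is exactly the paper's first step (the paper slices according to the position of $\ol{X}_N$ rather than $\ol{Y}_N$, which is the same decomposition), and your observation that a one-point grid only recovers $\max_x \min\{\phi^\star_{\nu}(x),\phi^\star_{\nu'}(x)\}$ correctly identifies why the refinement is needed. Where you diverge is in passing from the finite-grid bound to the infimum over $[\mu',\mu]$: you argue via uniform continuity of $\phi^\star_{\nu'}$ on the compact part of $[\mu',\mu]$ where it is finite, whereas the paper takes a sequence of grids with vanishing mesh, extracts a convergent subsequence of the minimizing grid points by Bolzano--Weierstrass, and invokes only \emph{lower semi-continuity} of $\phi^\star_{\nu}$ and $\phi^\star_{\nu'}$ (automatic for Fenchel--Legendre transforms). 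That route needs no continuity and no case distinction on where the transforms are infinite.

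This difference matters, because your continuity argument has a gap precisely at the transition slab. Suppose $\nu'$ has bounded support with upper end $M(\nu') \in (\mu',\mu)$, so that $\phi^\star_{\nu'}$ is finite on $[\mu', M(\nu'))$ and $+\infty$ beyond (possibly with a finite value at $M(\nu')$ itself if $\nu'$ has an atom there). On the unique slab $[x_{i-1},x_i]$ straddling $M(\nu')$, the cross term $\phi^\star_{\nu}(x_i)+\phi^\star_{\nu'}(x_{i-1})$ is finite, so it is \emph{not} true that ``the corresponding Chernoff probability vanishes''; yet the inequality $\phi^\star_{\nu'}(x_{i-1}) \geq \phi^\star_{\nu'}(x_i)-\epsilon$ is vacuous there since $\phi^\star_{\nu'}(x_i)=+\infty$, and uniform continuity of $\phi^\star_{\nu'}$ is unavailable because it blows up as $x \uparrow M(\nu')$ when $\nu'\{M(\nu')\}=0$. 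The slab can be rescued---either by instead comparing to the sum at $x_{i-1}$ and using continuity of $\phi^\star_{\nu}$ there (which works when $m(\nu)<M(\nu')$), or, in the residual case $m(\nu)=M(\nu')$, by the one-sided continuity of each transform at that common point (a consequence of monotonicity plus lower semi-continuity)---but as written your proof does not carry out this case analysis, and the blanket claims you make do not cover it. The paper's lower-semi-continuity argument is the clean way to absorb all of these configurations at once; if you prefer your $\epsilon$--grid formulation, you must add the explicit treatment of the slab where one transform jumps to $+\infty$ inside $(\mu',\mu)$.
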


\begin{proof}{\sk}
The fact $\overline{X}_N \leq \overline{Y}_N$ entails the existence of $x$ such that $\overline{X}_N \leq x \leq \overline{Y}_N$.
By independence, together with two applications of the Cram{\'e}r-Chernoff bound (recalled
in Appendix~\ref{sec:app-SR-CR}),
\[
\P \bigl( \overline{X}_N \leq x \leq \overline{Y}_N \bigr)
= \P \bigl( \overline{X}_N \leq x\bigr) \, \P \bigl( x \leq \overline{Y}_N \bigr)
\leq \exp \bigl( - N \, \phi^\star_{\nu}(x) \bigr) \,\, \exp \bigl( - N \, \phi^\star_{\nu'}(x) \bigr)\,.
\]
The technical issue is then to deal with some union over~$x$ of the events $\bigl\{\overline{X}_N \leq x \leq \overline{Y}_N\bigr\}$.
We do so with a sequence of finite grids, with vanishing steps, and use lower-semi-continuity arguments to obtain an infimum over an interval based on a sequence of finite minima. A complete proof is to be found in Appendix~\ref{sec:lmdevphistar}.
\end{proof}

The main performance upper bound is stated below in terms of $\Phi$, that is,
in terms of Fenchel-Legendre transforms of logarithmic moment-generating functions.
Section~\ref{sec:Phi=L} will later explain why and when the latter may be replaced
by $\Linfeq$ and $\Lsupeq$ quantities, leading to a rewriting $\Phi = \chern$
and to the bound claimed in~\eqref{eq:ov_boundSR}.

\begin{theorem} \label{th:SR}
Fix $K\geq 2$ and a model $\cD$. Consider a sequence of successive-rejects strategies, indexed by $T$,
such that $N_r/T \to \gamma_r > 0$ as $T \to +\infty$ for all $r \in \{1, \dots, K-1\}$.
Let $\unu$ be a bandit problem in $\cD$ with a unique optimal arm and, for each $r \in \{1, \dots, K-1\}$, let
$\cA_r$ be a subset of arms of cardinality $r$ that does not contain $a^\star(\unu)$. Then
\[
\limsup_{T \to +\infty} \frac{1}{T} \ln \P \bigl( I_T \ne a^\star(\unu) \bigr)
\leq - \min_{1 \leq r \leq K-1} \Bigl\{ \gamma_r \, \min_{k \in \cA_r}  \, \Phi \bigl( \nu_k, \nu^\star \bigr) \Bigr\} \,.
\]
\end{theorem}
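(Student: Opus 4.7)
The plan is to decompose the misidentification event by the phase at which the optimal arm $a^\star = a^\star(\unu)$ is eliminated, to identify on each piece a suboptimal arm belonging to $\cA_r$ whose empirical mean beats that of $a^\star$, and to bound the resulting probability via Lemma~\ref{lem:devphistar}. For $r \in \{1,\ldots,K-1\}$, let $E_r$ denote the event that $a^\star$ is dropped at the end of phase $r$, so that $\{I_T \ne a^\star\}$ is the disjoint union of the $E_r$. To avoid the issue that $\ol{X}^r_a$ is only defined by the algorithm for $a \in S_{r-1}$, I would rely on the standard coupling that associates with each arm $a$ an infinite i.i.d.\ sequence of draws from $\nu_a$, and let $\ol{X}^r_a$ denote the average of the first $N_r$ of these draws; this changes neither the behavior of the strategy nor the joint law of the averages actually used to make eliminations.

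The combinatorial crux is then the following: on $E_r$, the arm $a^\star$ lies in $S_{r-1}$, of cardinality $K - r + 1$, and attains the smallest empirical average there. Since
\[
\bigl| S_{r-1} \setminus \{a^\star\} \bigr| + |\cA_r| = (K-r) + r = K > K-1 = \bigl| \{1,\ldots,K\} \setminus \{a^\star\} \bigr|\,,
\]
a pigeonhole argument forces $S_{r-1} \cap \cA_r \ne \emptyset$, and for any $k$ in this intersection we have $\ol{X}^r_{a^\star} \leq \ol{X}^r_k$. Consequently $E_r \subseteq \bigcup_{k \in \cA_r} \bigl\{ \ol{X}^r_{a^\star} \leq \ol{X}^r_k \bigr\}$, a union of only $r$ events whose indices $k$ are deterministic (not history-dependent).

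I would then apply Lemma~\ref{lem:devphistar} to each pair $(\nu^\star,\nu_k)$ for $k \in \cA_r$, obtaining
\[
\limsup_{T \to +\infty} \frac{1}{N_r}\, \ln \P\bigl( \ol{X}^r_{a^\star} \leq \ol{X}^r_k \bigr) \leq -\Phi(\nu_k,\nu^\star)\,,
\]
and convert this into a $1/T$-normalized statement using $N_r/T \to \gamma_r > 0$ (treating separately the cases $\Phi(\nu_k,\nu^\star) = +\infty$ and finite via an $\varepsilon$-argument). A union bound over $k \in \cA_r$ and over the $K-1$ phases, combined with the standard identity $\limsup_T T^{-1}\ln\bigl( \sum_i a_{i,T}\bigr) = \max_i \limsup_T T^{-1} \ln a_{i,T}$ for finite sums of nonnegative quantities, absorbs the cardinal prefactors and gives the announced bound, since $\max_r(-x_r) = -\min_r x_r$.

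The main obstacle, I expect, is purely bookkeeping: making the coupling precise so that the strategy's pulls and the auxiliary ``ghost'' samples can be used interchangeably, and ensuring that the application of Lemma~\ref{lem:devphistar} is to $N_r$-sample averages with $N_r \to +\infty$ as $T \to +\infty$. Once these are settled, the pigeonhole step and the exponential union bound are entirely standard.
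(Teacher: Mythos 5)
Your proposal is correct and follows essentially the same route as the paper's proof: the same decomposition by the elimination phase, the same pigeonhole argument forcing $S_{r-1}\cap\cA_r\neq\emptyset$, the same reduction to deterministic-index events handled by Lemma~\ref{lem:devphistar}, and the same exponential union bound; your ``ghost samples'' coupling is exactly what the paper invokes under the name of optional skipping (Doob). No gaps.
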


\begin{proof}{\sk}
A complete proof may be found in Appendix~\ref{sec:thSR};
it mimics the analysis by~\citet{ABM10}, the main modification being the substitution of
Hoeffding's inequality by the bound of Lemma~\ref{lem:devphistar}.
We have $I_T \ne a^\star(\unu)$ if and only if $a^\star(\unu)$ is rejected in some phase,
i.e.,
\[
\bigl\{ I_T \ne a^\star(\unu) \bigr\}
= \bigcup_{r=1}^{K-1} \bigl\{ a_r = a^\star(\unu) \bigr\}
\subseteq \bigcup_{r=1}^{K-1} \Bigl\{ a^\star(\unu) \in S_{r-1} \ \ \mbox{and} \ \
\forall k \in S_{r-1}, \ \ \ol{X}^r_{a^\star(\unu)} \leq \ol{X}^r_{k} \Bigr\} \, .
\]
By optional skipping (see \citealp[Chapter III, Theorem 5.2, p. 145]{doob1953})
and by the fact that by the pigeonhole principle,
the (random) set $S_{r-1}$ necessarily contains one element of the deterministic set $\cA_r$,
\[
\P\Bigl( a^\star(\unu) \in S_{r-1} \ \ \mbox{and} \ \
\forall k \in S_{r-1}, \ \ \ol{X}^r_{a^\star(\unu)} \leq \ol{X}^r_{k} \Bigr)
\leq \sum_{k \in \cA_r} \P\Bigl( \ol{Y}^r_{a^\star(\unu)} \leq \ol{Y}^r_{k} \Bigr)\,,
\]
where, for all $a$, the $\ol{Y}_a^r$ are the averages of independent $N_r$--samples distributed according to $\nu_a$.
The proof is concluded by
Lemma~\ref{lem:devphistar} and the fact that a sum of exponentially fast decaying quantities
is driven by its largest term.
\end{proof}

We conclude this subsection by stating the bound of
Theorem~\ref{th:SR} for the phase lengths suggested by~\citet{ABM10}, namely,
$\ell_1 = T/\bln K$ and for $r \in \{2, \ldots, K-1\}$,
\begin{equation}
\label{eq:reglengths-ABM10}
\ell_r = \frac{T}{(K-r+2) \bln K}\,,
\qquad \mbox{where} \qquad \bln K = \frac{1}{2} + \sum_{k=2}^K \frac{1}{k}\,.
\end{equation}
We also consider lower bounds $f\bigl( \nu_k, \nu^\star \bigr)$
on the $\Phi\bigl( \nu_k, \nu^\star \bigr)$. We may of course use $f = \Phi$
but sometimes, it is handy to rely on more readable lower bounds.
For instance, in the case of the $\cDall$ model, Hoeffding's inequality entails that
\newcommand{\boundphiHbd}{\phi^\star_{\nu}(x) \geq 2\bigl(x-\Ed(\nu)\bigr)^2\,, \qquad \mbox{so that} \qquad
\Phi\bigl( \nu_k, \nu^\star \bigr) \geq \Delta_k^2 \eqdef f\bigl( \nu_k, \nu^\star \bigr)}
\begin{equation}
\label{eq:phistarHbd}
\boundphiHbd\,;
\end{equation}
see more details in Appendix~\ref{sec:cor-SR}.
Such bounds hold more generally in models consisting of sub-Gaussian distributions.

We now order the arms into $\sigma_1,\ldots,\sigma_K$ based on $f$, namely, we let $\sigma_1 = a^\star(\unu)$ and
\begin{equation}
\label{eq:orderingU}
0 = f\bigl( \nu_{\sigma_1}, \nu^\star \bigr)
< f\bigl( \nu_{\sigma_2}, \nu^\star \bigr) \leq \ldots
\leq f\bigl( \nu_{\sigma_{K-1}}, \nu^\star \bigr) \leq f\bigl( \nu_{\sigma_K}, \nu^\star \bigr)\,,
\end{equation}
and we take $\cA_r = \{\sigma_{K-r+1},\ldots,\sigma_K\}$.
We obtain immediately the following corollary, for which a detailed proof may be found, for the sake of completeness,
in Appendix~\ref{sec:cor-SR}.

\begin{corollary} \label{cor:SR} Fix $K\geq 2$, a model $\cD$,
and consider a lower bound $f$ on $\Phi$. The sequence of successive-rejects strategies based
on the phase lengths~\eqref{eq:reglengths-ABM10} ensures, that
for all bandit problems $\unu$ in $\cD$ with a unique optimal arm,
\[
\limsup_{T \to +\infty} \frac{1}{T} \ln \P \bigl( I_T \ne a^\star(\unu) \bigr)
	\leq - \frac{1}{\bln K} \min_{2 \leq k \leq K} \frac{f\bigl( \nu_{\sigma_{k}}, \nu^\star \bigr)}{k}\,,
\]
where arms were reordered as in~\eqref{eq:orderingU}.
\end{corollary}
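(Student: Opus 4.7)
My plan is to deduce the corollary by specializing Theorem~\ref{th:SR} to the phase lengths in~\eqref{eq:reglengths-ABM10} and to the specific choice $\cA_r = \{\sigma_{K-r+1},\ldots,\sigma_K\}$, and then using $\Phi \geq f$ together with the ordering~\eqref{eq:orderingU}.

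The first step is to verify that the sequence of strategies associated with~\eqref{eq:reglengths-ABM10} satisfies the assumption $N_r/T \to \gamma_r$ of Theorem~\ref{th:SR} and to identify $\gamma_r$ explicitly. Recalling $N_r = \sum_{s=1}^{r} \lfloor \ell_s/(K-s+1) \rfloor$, a telescoping computation gives $\ell_1/K = T/(K\bln K)$ and, for $s \geq 2$, $\ell_s/(K-s+1) = T/\bigl((K-s+2)(K-s+1)\bln K\bigr) = (T/\bln K)\bigl(1/(K-s+1) - 1/(K-s+2)\bigr)$. Summing and using $\lfloor \cdot \rfloor = \cdot + O(1)$, one finds $N_r = T/\bigl((K-r+1)\bln K\bigr) + O(1)$, so that $\gamma_r = 1/\bigl((K-r+1)\bln K\bigr)$.

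The second step is the actual identification of the minimum. By the ordering~\eqref{eq:orderingU} of $f$ and since $\cA_r = \{\sigma_{K-r+1},\ldots,\sigma_K\}$ contains the $r$ arms with the largest values of $f(\,\cdot\,,\nu^\star)$, the smallest value of $f$ on $\cA_r$ is reached at $\sigma_{K-r+1}$, so that $\min_{k \in \cA_r} \Phi(\nu_k,\nu^\star) \geq \min_{k \in \cA_r} f(\nu_k,\nu^\star) = f(\nu_{\sigma_{K-r+1}},\nu^\star)$. Combining this with the expression of $\gamma_r$ and applying Theorem~\ref{th:SR}, the right-hand side of its conclusion is upper-bounded by
\[
- \min_{1 \leq r \leq K-1} \frac{f\bigl(\nu_{\sigma_{K-r+1}},\nu^\star\bigr)}{(K-r+1)\bln K}
= - \frac{1}{\bln K} \min_{2 \leq k \leq K} \frac{f\bigl(\nu_{\sigma_k},\nu^\star\bigr)}{k} \,,
\]
after the change of variable $k = K-r+1$, which yields exactly the stated bound.

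I do not anticipate any serious obstacle here, since the heavy lifting is already done by Theorem~\ref{th:SR}. The only delicate point is the bookkeeping of the phase-length asymptotics, namely checking the telescoping identity that yields the clean form $\gamma_r = 1/\bigl((K-r+1)\bln K\bigr)$ and verifying that $a^\star(\unu) = \sigma_1 \notin \cA_r$ so that the choice of $\cA_r$ is admissible in Theorem~\ref{th:SR}; both are routine.
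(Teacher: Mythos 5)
Your proposal is correct and follows essentially the same route as the paper's own proof: identify $\gamma_r = 1/\bigl((K-r+1)\bln K\bigr)$ via the telescoping sum, invoke Theorem~\ref{th:SR} with $\cA_r = \{\sigma_{K-r+1},\ldots,\sigma_K\}$, use $\Phi \geq f$ and the ordering~\eqref{eq:orderingU} to evaluate the inner minimum at $\sigma_{K-r+1}$, and reindex with $k = K-r+1$. No gaps.
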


\subsection{On links between $\Phi$ and the quantities $\Linfstr$, $\Linfeq$, $\Lsupstr$ and $\Lsupeq$}
\label{sec:Phi=L}

The Fenchel-Legendre transform $\phi^{\star}_{\nu}$ of the logarithmic moment-generating function of $\nu$ admits a classical (see, e.g., \citealp[Exercice 4.13]{BLM16}) dual formulation in terms of infima of Kullback-Leibler divergences.
The following lemma, proved in Appendix~\ref{sec:appcDall}, reveals that
these infima correspond to $\Linfeq$ and $\Lsupeq$
for the model $\cDall$ of distributions supported on $[0, 1]$.

\begin{restatable}{lem}{duality}
\label{lem:duality}
Consider the model $\cD = \cDall$.
For all $\nu \in \cDall$,
\[
\forall x \leq \Ed(\nu), \quad \phi^\star_\nu(x) = \Linfeq(x,\nu)
\qquad \mbox{and} \qquad
\forall x \geq \Ed(\nu), \quad \phi^\star_\nu(x) = \Lsupeq(x,\nu) \,.
\]
\end{restatable}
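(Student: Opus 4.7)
The plan is to combine the classical Cramér/Donsker--Varadhan variational characterization of $\phi^\star_\nu$ with elementary convexity properties. First, I would invoke the standard duality identity (cited in the text from BLM16, Exercise~4.13): for any probability measure $\nu$ on $\R$,
\[
\phi^\star_\nu(x) \;=\; \inf\bigl\{\KL(\zeta,\nu) : \zeta \in \cP(\R),\; \Ed(\zeta) = x\bigr\}\,.
\]
Since $\KL(\zeta,\nu) < +\infty$ entails $\zeta \ll \nu$, and $\nu \in \cDall$ is supported on $[0,1]$, any $\zeta$ achieving a finite value on the right-hand side is automatically in $\cDall$. Thus the feasibility set may be replaced by $\{\zeta \in \cDall : \Ed(\zeta) = x\}$ without changing the infimum. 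Note that for $\nu \in \cDall$, the moment-generating function $\phi_\nu$ is finite on all of $\R$, so no regularity caveats arise in this duality.

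Second, I would record that $\phi^\star_\nu$ is a nonnegative, convex function vanishing at $\mu := \Ed(\nu)$: the value $0$ is attained by choosing $\lambda = 0$ in the definition of $\phi^\star_\nu$, while Jensen's inequality gives $\phi_\nu(\lambda) \geq \lambda \mu$ and hence $\phi^\star_\nu(\mu) \leq 0$. Consequently $\phi^\star_\nu$ is non-increasing on $(-\infty,\mu]$ and non-decreasing on $[\mu,+\infty)$.

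Third, I would deduce the two claimed equalities. For $x \leq \mu$, the inequality $\Linfeq(x,\nu) \leq \phi^\star_\nu(x)$ is immediate by relaxing the constraint $\Ed(\zeta) = x$ to $\Ed(\zeta) \leq x$ in the duality identity. For the reverse inequality, pick any admissible $\zeta \in \cDall$ with $y := \Ed(\zeta) \leq x$; applying the duality identity at $y$ and then the monotonicity of $\phi^\star_\nu$ on $(-\infty,\mu]$ yields
\[
\phi^\star_\nu(x) \leq \phi^\star_\nu(y) \leq \KL(\zeta,\nu)\,,
\]
and taking the infimum over such $\zeta$ gives $\phi^\star_\nu(x) \leq \Linfeq(x,\nu)$. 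The case $x \geq \mu$ is handled symmetrically, using non-decreasingness of $\phi^\star_\nu$ on $[\mu,+\infty)$ and relaxing $\Ed(\zeta) = x$ to $\Ed(\zeta) \geq x$.

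I do not expect any substantial obstacle: the bulk of the work is hidden inside the classical Cramér duality, and the remaining argument is just bookkeeping around monotonicity of $\phi^\star_\nu$. The only point worth double-checking is the behavior outside the convex hull of $\Supp(\nu)$, where both $\phi^\star_\nu(x)$ and the corresponding infimum equal $+\infty$ (no $\zeta \ll \nu$ can satisfy the required moment constraint), so the claimed equality still holds trivially.
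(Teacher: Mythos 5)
Your argument is correct in substance and, at bottom, it is the same argument as the paper's: both reduce the lemma to the classical Cram\'er duality between $\phi^\star_\nu$ and infima of Kullback--Leibler divergences, plus the two pieces of bookkeeping you identify (any $\zeta$ with $\KL(\zeta,\nu)<+\infty$ is absolutely continuous with respect to $\nu$ and hence lies in $\cDall$; the monotonicity of $\phi^\star_\nu$ on either side of $\Ed(\nu)$ lets one pass between the equality-constrained and inequality-constrained infima). The difference is one of emphasis: you cite the equality-constrained identity $\phi^\star_\nu(x)=\inf\{\KL(\zeta,\nu):\Ed(\zeta)=x\}$ as a black box, whereas the paper proves the hard direction from scratch (Appendix~C.2), because the off-the-shelf statement does not quite deliver what is needed. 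The inequality $\phi^\star_\nu(x)\leq\Linfeq(x,\nu)$ is indeed the easy Donsker--Varadhan direction (this is Lemma~\ref{lem:variational}); the content of the lemma is the reverse inequality, which the paper obtains by exhibiting, for $m(\nu)<x<\Ed(\nu)$, the exponentially tilted distribution $\zeta_{\lambda^\star}$ with $\phi'_\nu(\lambda^\star)=x$ and $\KL(\zeta_{\lambda^\star},\nu)=\phi^\star_\nu(x)$.

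The one point you should not wave away is the boundary of the convex hull of the support, not its exterior. At $x=m(\nu)$ (symmetrically $x=M(\nu)$) the tilting argument degenerates: there is no $\lambda^\star$ with $\phi'_\nu(\lambda^\star)=m(\nu)$ in general, the only candidate $\zeta\ll\nu$ with $\Ed(\zeta)\leq m(\nu)$ is the Dirac mass $\delta_{m(\nu)}$, and the common value is $-\ln\nu\bigl\{m(\nu)\bigr\}$, obtained in the paper by a monotone-convergence computation of $\lim_{\lambda\to-\infty}\bigl(\lambda\, m(\nu)-\phi_\nu(\lambda)\bigr)$. Many textbook formulations of the duality are stated only for $x$ in the interior of the convex hull of $\Supp(\nu)$, or with half-line constraints; if the version you cite does not explicitly cover the endpoint with an atom, you must supply this case separately (it is short, but it is a genuine case, and it is precisely the case that later drives the distinction between $\Linfeq$ and $\Linfstr$ in Lemma~\ref{lm:Linfstr-egal-Linfeq}). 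With that case checked, your write-up is a valid, more compressed proof.
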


\noindent
Based on this lemma, we have the following rewriting, which is useful to reinterpret the quantities
appearing in Theorem~\ref{th:SR} and Corollary~\ref{cor:SR}:
$\Phi(\nu',\nu) = \chern(\nu', \nu)$ for the model $\cDall$, i.e.,
\begin{equation} \label{eq:Phi=L}
\inf_{x \in [\Ed(\nu'), \Ed(\nu)]} \bigl\{ \phi^\star_{\nu'}(x) + \phi^\star_{\nu}(x) \bigr\} = \inf_{x \in [\Ed(\nu'), \Ed(\nu)]} \bigl\{ \Lsupeq(x, \nu') + \Linfeq(x, \nu) \bigr\} \,.
\end{equation}
For canonical one-parameter exponential models $\cDexp$,
a slightly weaker version of Lemma~\ref{lem:duality}, only holding for $x$ corresponding to expectations in~$\cDexp$
and provided in Appendix~\ref{app:cDexp}, similarly shows~\eqref{eq:Phi=L}, i.e., $\Phi = \chern$.
Conditions on general models for $\Phi = \chern$ to hold are discussed in Appendix~\ref{app:Phi=L_condtion}.

\section{Lower bounds}
\label{sec:LB}

In most of this section, we restrict our attention to generic $K$--armed bandit problems $\unu$,
that are such that $\mu_j \ne \mu_k$ for $j \ne k$.
In particular, the best arm $a^\star(\unu)$ is unique.
(This is probably a new terminology\footnote{The terminology comes from measure theory:
if expectations were drawn at random according to some diffuse distribution, e.g., a uniform distribution
over an interval, or a Gaussian distribution, then, almost surely, no two expectations would be
equal.} for referring to bandit problems with no two same expectations
for the distributions over the arms.)

\paragraph{Definition of a strategy, and of a (doubly-indexed) sequence of strategies.}
A strategy $(\psi,\varphi)$ depends on the budget $T$ and the number $K$ of arms;
it consists of a sampling scheme $\psi = (\psi_t)_{1 \leq t \leq T}$
and a recommendation function $\varphi$.
At each round $t \in \{1, \dots, T\}$, the strategy picks an arm
$A_t$, possibly at random using an auxiliary randomization $U_{t-1}$.
Given this choice $A_t$, the strategy
observes a payoff $Y_t$ drawn at random according to $\nu_{A_t}$, independently from the past.
For $t \geq 2$,
the choice $A_t$ is therefore a measurable function $A_t = \psi_t(H_t)$
of the history $H_t = (U_0, \, Y_1, \, \ldots, \, Y_{t-1}, \, U_{t-1})$,
while $A_1 = \psi_1(H_0)$, where $H_0 = U_0$.
At round $T$, the strategy recommends the arm $I_T = \varphi(H_T)$.

\paragraph{Outline of this section.}
As always in lower-bound results, there is a trade-off between how restrictive are the assumptions
on the (doubly-indexed) sequences of strategies, and sometimes on the models, and how large the lower bounds are:
the more restrictive the assumptions, the larger the lower bounds.
We are interested in assumptions on strategies that are natural in the sense that they should
be satisfied by successive-rejects-type strategies.
For instance, Theorem~\ref{th:LB-conj-GK16} comes with the least assumptions
but provides a bound where there are no divisions by the ranks $k$ of the arms,
which Theorems~\ref{th:LB-ABM10} and~\ref{th:LBmonotone} do.
We may see Theorem~\ref{th:LB-ABM10} as a warm-up result: its main aim is to generalize the
lower bound by~\citet{ABM10} to non-parametric models with a (non-constructive) proof that is only a few-line
long. Our preferred result is Theorem~\ref{th:LBmonotone},
which provides the largest lower bound while putting the heaviest (though natural) constraints on the sequences of strategies.

\subsection{Common restriction: consistence}

For our lower bounds,
we will consider sequences of strategies, either only
indexed by $T \geq 1$ given a value of $K \geq 2$,
or doubly indexed by $T$ and $K$.
These sequences will also be assumed to be ``reasonable''
in the sense below.

\paragraph{Consistent (or exponentially consistent) sequences of strategies.}
The probability $\smash{\P\bigl(I_T \ne a^\star(\unu)\bigr)}$
of misidentifying the unique optimal arm may vanish
asymptotically (and even vanish exponentially fast)
for all bandit problems---in not too large a model $\cD$,
as illustrated in Section~\ref{sec:SR}.
We will therefore only be interested in such sequences of strategies, called
(exponentially) consistent.
In the sequel and for extra clarity, we index
the probabilities by the ambient bandit problem $\unu$ considered.

\begin{definition}
\label{def:exp-consistent}
Fix $K \geq 2$. A sequence of strategies indexed by $T \geq 1$
is consistent, respectively, exponentially consistent, on a model $\cD$ if
for all generic problems $\unu$ in $\cD$,
\[
\P_{\unu}\bigl(I_T \ne a^\star(\unu)\bigr) \lriT 0\,, \qquad \mbox{respectively,} \qquad
\smash{\limsup_{T \to +\infty} \frac{1}{T}} \ln
\P_{\unu}\bigl(I_T \ne a^\star(\unu)\bigr) < 0\,.
\]
By extension, a doubly-indexed sequence of strategies is
(exponentially) consistent if for all $K \geq 2$, the associated
sequences of strategies are so.
\end{definition}

\paragraph{The fundamental inequality.}
The fundamental inequality by~\citet{GMS19}, together with the very
definition of consistency, yields in a straightforward manner
our building block for lower bounds. Details of the derivation are
provided in Appendix~\ref{app:lem:BI}, for the sake of completeness.

\begin{restatable}{lem}{lmfunda}
\label{lem:BI}
Fix $K \geq 2$ and a model $\cD$. Consider a consistent sequence of strategies on $\cD$, and two generic bandit problems $\unu$ and $\ula$ in $\cD$ such that $a^\star(\ula) \neq a^\star(\unu)$. Then
\begin{multline*}
\liminf_{T \to +\infty} \frac{1}{T} \ln \P_{\unu}\bigl(I_T \neq a^\star(\unu)\bigr)
\geq - \limsup_{T \to +\infty} \sum_{a = 1}^K \frac{\E_{\ula}[N_a(T)]}{T} \, \KL(\lambda_a, \nu_a) \,, \\
\mbox{where} \qquad
\smash{N_a(T) = \sum_{t=1}^T \ind{A_t = a}}
\end{multline*}
denotes the number of times arm~$a$ was pulled in the $T$ exploration rounds
of a given strategy with budget~$T \geq 1$.
\end{restatable}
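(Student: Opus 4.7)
The plan is to combine three ingredients: the change-of-measure ``fundamental inequality'' of~\citet{GMS19}, the elementary bound $\kl(p,q) \geq p \ln(1/q) - \ln 2$ on the binary KL divergence, and the consistency assumption. The natural event to use is $\cE_T = \bigl\{ I_T \neq a^\star(\unu) \bigr\}$: its probability under $\unu$ is exactly the quantity to lower-bound, while under $\ula$ it tends to $1$, since $a^\star(\ula) \neq a^\star(\unu)$ forces the inclusion $\{ I_T = a^\star(\ula)\} \subseteq \cE_T$, and consistency on $\ula$ gives $\P_{\ula}(I_T = a^\star(\ula)) \to 1$.

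Concretely, I would first instantiate the fundamental inequality of~\citet{GMS19}, which states that for any event measurable with respect to the history at time $T$,
\[
\sum_{a=1}^K \E_{\ula}\bigl[N_a(T)\bigr] \, \KL(\lambda_a, \nu_a) \;\geq\; \kl\bigl( \P_{\ula}(\cE_T), \, \P_{\unu}(\cE_T) \bigr)\,.
\]
I would then apply the elementary bound $\kl(p,q) \geq p \ln(1/q) - \ln 2$ (which follows from $H(p) \leq \ln 2$ and $-(1-p)\ln(1-q) \geq 0$) with $p = \P_{\ula}(\cE_T)$ and $q = \P_{\unu}(\cE_T)$, and rearrange to isolate the quantity of interest:
\[
\frac{1}{T} \ln \P_{\unu}\bigl(I_T \neq a^\star(\unu)\bigr) \;\geq\; -\frac{1}{\P_{\ula}(\cE_T)} \biggl( \sum_{a=1}^K \frac{\E_{\ula}[N_a(T)]}{T} \, \KL(\lambda_a, \nu_a) + \frac{\ln 2}{T} \biggr).
\]

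Finally, I would take $\liminf_{T \to +\infty}$ on both sides. Since $\P_{\ula}(\cE_T) \to 1$ (as argued above) and $\ln(2)/T \to 0$, the prefactor $1/\P_{\ula}(\cE_T)$ converges to $1$ and the additive term vanishes, so that $\liminf(-X_T/\P_{\ula}(\cE_T)) = -\limsup X_T$ produces exactly the announced lower bound. There is no substantive obstacle in the argument: the proof is essentially a mechanical combination of a known information-theoretic inequality with the asymptotic behavior prescribed by consistency. The only point requiring mild care is checking that, for $T$ large enough, both $\P_{\ula}(\cE_T)$ and $\P_{\unu}(\cE_T)$ lie in $(0,1)$, so that the binary $\kl$ is finite and the manipulations above are legitimate.
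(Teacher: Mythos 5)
Your proof is correct and follows essentially the same route as the paper's: both apply the fundamental inequality of \citet{GMS19} to the event $\{I_T \neq a^\star(\unu)\}$, use consistency under $\ula$ to get $\P_{\ula}(I_T \neq a^\star(\unu)) \to 1$, and then extract $\ln \P_{\unu}(I_T \neq a^\star(\unu))$ from the binary Kullback-Leibler divergence. The only (cosmetic) difference is that you use the explicit non-asymptotic bound $\kl(p,q) \geq p \ln(1/q) - \ln 2$ where the paper invokes the asymptotic equivalence $\KL\bigl(\Ber(q_T),\Ber(p_T)\bigr) \sim -\ln p_T$; your version is, if anything, slightly more careful.
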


\subsection{A lower bound revisiting and extending the one by \citet{ABM10}}
\label{sec:LB-ABM10}

The focus of this subsection is to establish the lower bound~\eqref{eq:ov_LB2-bis},
from which we derived the gap-based lower bound~\eqref{eq:UB_ABM10} by \citet{ABM10}.
The lower bound~\eqref{eq:ov_LB2-bis} is smaller than the lower bound to be exhibited
in the next subsection, but it comes with less restrictive assumptions
on the behaviors of the sequences of strategies considered.

Firstly, we only consider sequences of strategies---actually, sequences of sampling
schemes---that do not pull too often the worst arm, and which we will refer to
as being balanced against the worst arm. Successive-rejects-type strategies sample the worst arm less
than other arms in expectations, and hence, are indeed balanced against the worst arm.
To define this constraint formally, we denote by $w_\star(\unu)$ the index of the unique worst arm
of a generic bandit problem $\unu$.

\begin{definition}
\label{eq:balanced}
A doubly-indexed sequence of strategies is balanced against the worst arm on a model $\cD$
if for all $K \geq 2$, for all generic $K$--armed bandit problems $\unu$ in $\cD$, \vspace{-.2cm}

\[
\smash{\limsup_{T \to +\infty} \frac{1}{T}} \, \E_{\unu}\bigl[ N_{w_\star(\unu)}(T) \bigr]
\leq \frac{1}{K}\,.
\]
\end{definition}

A second constraint is related to bandit subproblems.
We say that $\unu'$ is a subproblem of a $K$--armed bandit problem $\unu$ if $\unu' = (\nu_a)_{a \in \cA}$
for a subset $\cA \subseteq \{1,\ldots,K\}$ of cardinality greater than or equal to $2$; we
denote by $\unu' \subseteq \unu$ this fact. We say in addition that
$\unu'$ and $\unu$ feature the same optimal arm if
$\nu'_{a^\star(\unu')} = \nu_{a^\star(\unu)}$.
It should be easier to identify the best arm in $\unu'$ than in $\unu$,
in the sense below, and this defines the fact that
a strategy cleverly exploits pruning of suboptimal arms.
Again, successive-rejects-type strategies naturally satisfy this constraint.

\begin{definition}
A doubly-indexed sequence of strategies cleverly exploits pruning of suboptimal arms on a model $\cD$ if
for all generic bandit problems $\unu$ in $\cD$ with $K \geq 2$ arms,
for all subproblems $\unu' \subseteq \unu$ featuring the same optimal arm, \vspace{-.2cm}

\[
\smash{\liminf_{T \to +\infty} \frac{1}{T}} \ln \P_{\unu}\bigl(I_T \ne a^\star(\unu)\bigr)
\geq
\smash{\liminf_{T \to +\infty} \frac{1}{T}} \ln \P_{\unu'}\bigl(I_T \ne a^\star(\unu')\bigr)\,.
\]
\end{definition}

\noindent
We use again the order statistics
$\mu_{w_\star(\unu)} = \mu_{(K)} < \mu_{(K-1)} < \ldots < \mu_{(1)} = \mu_{a^\star(\unu)}$.

\begin{restatable}{theo}{lbabm}
\label{th:LB-ABM10} Fix a model $\cD$. Consider a doubly-indexed sequence of strategies that is
consistent, balanced against the worst arm on $\cD$, and
that cleverly exploits the pruning of suboptimal arms on~$\cD$.
For all generic bandit problems $\unu$ in $\cD$ with $K \geq 2$ arms, \vspace{-.3cm}

\[
\liminf_{T \to +\infty} \frac{1}{T} \ln \P_{\unu}\bigl(I_T \ne a^\star(\unu)\bigr)
\geq - \min_{2 \leq k \leq K} \frac{\Linfstr\bigl(\mu_{(k)},\nu^\star\bigr)}{k}\,.
\]
\end{restatable}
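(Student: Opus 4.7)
The plan is to combine the fundamental inequality of Lemma~\ref{lem:BI} with the two structural assumptions on the sequences of strategies (balancedness against the worst arm, and clever exploitation of pruning) through a two-step reduction: first, replace $\unu$ by a sub-bandit problem retaining only the top-$k$ arms; second, alternate this sub-problem at the best arm.

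Fix $k \in \{2, \ldots, K\}$ and let $\unu' \subseteq \unu$ denote the $k$--armed sub-bandit problem formed of the $k$ arms with largest expectations in $\unu$, i.e., the arms $(1), (2), \ldots, (k)$. By construction, $\unu'$ and $\unu$ share the same optimal arm $a^\star(\unu')$ with distribution $\nu^\star$. I would then construct an alternative $k$--armed problem $\ula'$ that coincides with $\unu'$ on every arm except on arm $a^\star(\unu')$, whose distribution I replace by some $\zeta \in \cD$ with $\Ed(\zeta) < \mu_{(k)}$. All other expectations $\mu_{(1)}, \ldots, \mu_{(k-1)}$ remain distinct and strictly larger than $\mu_{(k)} > \Ed(\zeta)$, so $\ula'$ is generic, its best arm is the distribution with expectation $\mu_{(2)}$, and its worst arm is precisely $a^\star(\unu')$, so $a^\star(\ula') \ne a^\star(\unu')$.

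Next I would apply Lemma~\ref{lem:BI} to the pair $(\unu', \ula')$ using the $k$--armed sub-sequence of strategies (which is consistent by assumption). Since $\ula'$ and $\unu'$ differ only at arm $a^\star(\unu')$, the sum over $a$ collapses to a single term, yielding
\[
\liminf_{T \to +\infty} \frac{1}{T} \ln \P_{\unu'}\bigl(I_T \neq a^\star(\unu')\bigr)
\geq - \limsup_{T \to +\infty} \frac{\E_{\ula'}\bigl[N_{a^\star(\unu')}(T)\bigr]}{T}\, \KL(\zeta, \nu^\star)\,.
\]
Because $a^\star(\unu') = w_\star(\ula')$, the balancedness assumption applied to $\ula'$ upper-bounds the $\limsup$ by $1/k$. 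Taking the infimum over all admissible $\zeta \in \cD$ with $\Ed(\zeta) < \mu_{(k)}$ converts $\KL(\zeta, \nu^\star)$ into $\Linfstr\bigl(\mu_{(k)}, \nu^\star\bigr)$, giving
\[
\liminf_{T \to +\infty} \frac{1}{T} \ln \P_{\unu'}\bigl(I_T \neq a^\star(\unu')\bigr)
\geq - \frac{\Linfstr\bigl(\mu_{(k)}, \nu^\star\bigr)}{k}\,.
\]

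Finally, I would invoke the ``cleverly exploits pruning'' property to transfer this lower bound from $\unu'$ back to $\unu$, and then take the supremum over $k \in \{2, \ldots, K\}$ (equivalently, a minimum on the right-hand side) to obtain the claimed bound. The main technical obstacle is simply to verify that the candidate alternative $\ula'$ is generic and lies in $\cD$, as well as confirming that the ``balanced against the worst arm'' property genuinely applies to $\ula'$ in the right $k$--armed regime (which is why the assumptions must be imposed for every $K \geq 2$ in Definition~\ref{eq:balanced}); everything else is essentially bookkeeping around Lemma~\ref{lem:BI}. A minor subtlety is that $\Linfstr$ is defined with a strict inequality, so the infimum over $\zeta$ with $\Ed(\zeta) < \mu_{(k)}$ indeed recovers it without needing to invoke any approximation argument.
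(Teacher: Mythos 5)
Your proposal is correct and follows essentially the same route as the paper's proof: the paper first establishes the bound for $k=K$ via an alternative problem that demotes the best arm below the worst and invokes balancedness, then transfers it to the top-$k$ subproblems via the pruning assumption, which is exactly your two-step reduction in a slightly different order. The only (harmless) detail left implicit is that when no $\zeta \in \cD$ with $\Ed(\zeta) < \mu_{(k)}$ exists, $\Linfstr\bigl(\mu_{(k)},\nu^\star\bigr) = +\infty$ and the bound holds trivially.
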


\begin{proof}{\sk}
The bound is proved for $k=K$ by considering alternative bandit problems
$\ula$ differing from $\unu$ only at arm~$a^\star(\unu)$, where
$\nu^\star$ is replaced by distributions $\zeta \in \cD$ with $\Ed(\zeta) < \mu_{(K)}$.
For $\ula$, the arm~$a^\star(\unu)$ is the worst arm, and is therefore pulled less than a fraction $1/K$
of the time, asymptotically and on average, as the strategy is balanced against the worst arm.
An application of Lemma~\ref{lem:BI} concludes the case $k=K$.
The extension to $k \leq K-1$ is obtained by clever exploitation of the pruning of suboptimal arms.
A complete proof may be found in Appendix~\ref{app:th:LB-ABM10}.
\end{proof}

\subsection{A larger lower bound, for a more restrictive class of strategies} \label{sec:LB2}

In this section, we derive a slightly stronger version of the lower bound~\eqref{eq:ov_LB2}.
This lower bound is larger than the bound exhibited in the previous subsection but relies on stronger assumptions on the strategies
considered. Namely, we introduce an assumption of monotonicity, which extends Definition~\ref{eq:balanced}
to provide frequency constraints on each arm $a \in \{1,\ldots,K\}$.

\begin{definition}
\label{eq:monot}
Fix $K \geq 2$. A sequence of strategies is monotonous on a model $\cD$
if for all generic problems $\unu$ in $\cD$, for all arms $a \in \{1, \dots, K\}$,
\[
\limsup_{T \to +\infty} \frac{\E_{\unu} \bigl[ N_{(a)}(T) \bigr]}{T} \leq \frac{1}{a} \,,
\]
where arms are ordered such that $\mu_{(1)} > \mu_{(2)} > \dots > \mu_{(K)}$.
\end{definition}

This condition is satisfied as soon as a given arm is not pulled more often, asymptotically and on average, than
better-performing arms (note that Definition~\ref{eq:monot} is slightly weaker than this).
Successive-rejects-type strategies naturally satisfy this requirement.

We also rely on the following assumption on the model $\cD$,
which essentially indicates that there is ``no gap'' in $\cD$.
Once again, the model $\cDall$ and
canonical one-parameter exponential models $\cDexp$ all satisfy this mild requirement (see Appendix~\ref{sec:proof-normality}
for the immediate details).

\begin{restatable}{defi}{definormality}
\label{defi:normality}
A model $\cD$ is normal if for all $\nu \in \cD$, for all $x \geq \Ed(\nu)$,
\begin{align*}
\forall \varepsilon > 0, \qquad
\Lsupstr(x, \nu) & \eqdef
\inf \bigl\{ \KL(\zeta,\nu) : \ \zeta \in \cD \ \ \mbox{\rm s.t.} \ \ \Ed(\zeta) > x \bigr\} \\
& = \inf \bigl\{ \KL(\zeta,\nu) : \ \zeta \in \cD \ \ \mbox{\rm s.t.} \ \ x + \varepsilon > \Ed(\zeta) > x  \bigr\}\,.
\end{align*}
\end{restatable}

\begin{restatable}{theo}{thlbmonotone}
\label{th:LBmonotone}
Fix $K \geq 2$ and a normal model $\cD$.
Consider a sequence of strategies which is consistent and monotonous on $\cD$. For all generic bandit problems $\unu$ in $\cD$,
\[
\liminf_{T \to +\infty} \frac{1}{T} \ln \P_{\unu}\bigl(I_T \ne a^\star(\unu) \bigr)
\geq - \min_{2 \leq k \leq K} \,\, \min_{2 \leq j \leq k} \,\, \inf_{x \in [\mu_{(j)}, \mu_{(j-1)})}
\smash{\biggl\{ \frac{\Lsupstr\bigl(x, \nu_{(k)}\bigr)}{j-1} + \frac{\Linfstr\bigl(x, \nu^\star\bigr)}{j} \biggr\}} \,.
\]
\end{restatable}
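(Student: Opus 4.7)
The plan is to apply the fundamental inequality of Lemma~\ref{lem:BI} to a carefully crafted alternative bandit problem $\ula$ for each triple $(k,j,x)$ with $2 \leq j \leq k \leq K$ and $x \in (\mu_{(j)}, \mu_{(j-1)})$. Fix $\eta > 0$ and take $\ula$ that coincides with $\unu$ except at arms $(k)$ and $(1) = a^\star(\unu)$: by normality of $\cD$, one can pick $\lambda_{(k)} \in \cD$ with $\Ed(\lambda_{(k)}) \in (x, \mu_{(j-1)})$ and $\KL(\lambda_{(k)}, \nu_{(k)}) \leq \Lsupstr(x, \nu_{(k)}) + \eta$; and, by the very definition of $\Linfstr$, one can pick $\lambda_{(1)} \in \cD$ with $\Ed(\lambda_{(1)}) < x$ and $\KL(\lambda_{(1)}, \nu^\star) \leq \Linfstr(x, \nu^\star) + \eta$. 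Small perturbations of these expectations guarantee that $\ula$ is generic.

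The crux is to track the positions of the two modified arms in $\ula$'s ordering. The arms above $\lambda_{(k)}$ are exactly $(2),\ldots,(j-1)$, whose expectations $\mu_{(2)} > \ldots > \mu_{(j-1)}$ all exceed $\Ed(\lambda_{(k)})$, so arm $(k)$ sits at position $j-1$ in $\ula$. Meanwhile arm $(1)$, with expectation strictly below $x$, lies at some position $p \geq j$ in $\ula$ (the exact value depending on where $\Ed(\lambda_{(1)})$ falls among $\mu_{(j)}, \mu_{(j+1)}, \ldots$). In particular $a^\star(\ula) \neq a^\star(\unu)$, so Lemma~\ref{lem:BI} applies, and since $\lambda_a = \nu_a$ for all $a \notin \{(k),(1)\}$, only the two modified arms contribute a nonzero KL term.

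Applying the monotonicity hypothesis to the generic problem $\ula$ then yields $\limsup_T \E_{\ula}[N_{(k)}(T)]/T \leq 1/(j-1)$ and $\limsup_T \E_{\ula}[N_{(1)}(T)]/T \leq 1/p \leq 1/j$. Plugging into Lemma~\ref{lem:BI} and using subadditivity of $\limsup$,
\[
\liminf_{T \to +\infty} \frac{1}{T} \ln \P_\unu\bigl(I_T \neq a^\star(\unu)\bigr)
\geq - \frac{\Lsupstr(x, \nu_{(k)}) + \eta}{j-1} - \frac{\Linfstr(x, \nu^\star) + \eta}{j} \,.
\]
Letting $\eta \to 0$ and then taking infima over $x$ and minima over $j,k$ delivers the stated bound for $x$ in the open interval. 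The boundary value $x = \mu_{(j)}$ is covered a fortiori: the same construction then forces $p \geq j+1$ (since arm $(j)$ still lies strictly above $\lambda_{(1)}$), giving an even tighter bound than the one announced.

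The main subtlety is the interplay between the choice of $\Ed(\lambda_{(1)})$, the resulting position $p$ of arm $(1)$ in $\ula$'s ordering, and the coefficient $1/p$ coming from monotonicity: one cannot always place $\Ed(\lambda_{(1)})$ inside the narrow window $(\mu_{(j)}, x)$ (a near-minimizer of $\Linfstr(x,\nu^\star)$ may sit much lower), but the looser inequality $1/p \leq 1/j$ is enough to recover the coefficient $1/j$ announced in the bound. Correspondingly, normality of $\cD$ is invoked only on the $\Lsupstr$ side to keep $(k)$ at position $j-1$ in $\ula$, which is exactly what Definition~\ref{defi:normality} delivers.
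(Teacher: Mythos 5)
Your proposal is correct and follows essentially the same route as the paper: the same two-arm alternative $\ula$ with $\Ed(\lambda_{(1)}) < x < \Ed(\lambda_{(k)}) < \mu_{(j-1)}$, the same tracking of positions ($(k)$ exactly at rank $j-1$, $(1)$ at rank $p \geq j$), monotonicity plus Lemma~\ref{lem:BI}, and normality invoked precisely to remove the $\Ed(\lambda_{(k)}) < \mu_{(j-1)}$ constraint on the $\Lsupstr$ side. The only cosmetic difference is that you pick $\eta$-near-minimizers upfront and let $\eta \to 0$, whereas the paper takes the infimum over the whole class $\Alt_{k,j,x}(\unu)$ at the end and splits it into two separate infima; these are equivalent.
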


\begin{proof}{\sk}
A complete proof may be found in Appendix~\ref{app:th:LBmonotone}.
For triplets $(k,j,x)$ satisfying the stated requirements,
we consider an alternative problem $\ula$ 
differing from the original bandit problem $\unu$ at the best arm $(1)$ and
at the $k$--th best arm $(k)$, for which we pick distributions such that
$\Ed\bigl(\lambda_{(1)}\bigr) < x < \Ed\bigl(\lambda_{(k)}\bigr) < \mu_{(j-1)}$.
Then arm $(1)$ is at best the $j$--th best arm of $\ula$, while arm $(k)$ is exactly the $j-1$--th
best arm of $\ula$. By monotonicity and Lemma~\ref{lem:BI},
we obtain
\begin{equation}
\label{eq:psk-monot}
\liminf_{T \to +\infty} \frac{1}{T} \ln \P_{\unu}\bigl(I_T \ne a^\star(\unu) \bigr)
\geq - \biggl( \frac{\KL\bigl(\lambda_{(k)}, \nu_{(k)}\bigr)}{j-1} + \frac{\KL\bigl(\lambda_{(1)}, \nu^\star\bigr)}{j} \biggr)\,.
\end{equation}
We get $- \Lsupstr\bigl(x, \nu_{(k)}\bigr)/(j-1) - \Linfstr\bigl(x, \nu^\star\bigr)/j$ as a lower bound by
taking (separate) suprema of the lower bound~\eqref{eq:psk-monot} over $\Ed\bigl(\lambda_{(1)}\bigr) < x$
and $x < \Ed\bigl(\lambda_{(k)}\bigr) < \mu_{(j-1)}$, where the $< \mu_{(j-1)}$ constraint disappears
thanks to normality of the model.
\end{proof}

\subsection{A general lower bound, valid for any strategy}
\label{sec:LB-conj-GK16}

The previous subsections illustrated what may be achieved under restrictions---though natural res\-tric\-tions---on the classes
of strategies considered. For the sake of completeness, we also provide a lower bound relying on no
other restriction than consistency; it extends the lower bound~\eqref{eq:LB-KCG16-2arms}
exhibited by~\cite{KCG16} for $K=2$ arms, and is formulated in terms of $\Linfstr$ and $\Lsupstr$.
A proof of the following theorem may be found in Appendix~\ref{app:thLBconjGK}.

\begin{restatable}{theo}{thLBconjGK}
\label{th:LB-conj-GK16}
Fix $K \geq 2$ and a model $\cD$. Consider a consistent sequence of strategies on $\cD$.
For all generic bandit problems $\unu$ in $\cD$, \vspace{-.2cm}

\[
\liminf_{T \to +\infty} \frac{1}{T} \ln \P_{\unu} \bigl(I_T \neq a^\star(\unu)\bigr) \geq - \min_{k \neq a^\star(\unu)} \,\,
\inf_{x \in [\mu_k, \mu^\star]} \,\, \max \bigl\{ \Lsupstr(x, \nu_k),  \Linfstr(x, \nu^\star) \bigr\} \,.
\]
\end{restatable}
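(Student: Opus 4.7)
The overall strategy is to apply Lemma~\ref{lem:BI} to carefully chosen alternative problems that change only the best arm $a^\star(\unu)$ and one suboptimal arm $k$. Fix $k \neq a^\star(\unu)$ and a real number $x$ in the open interval $(\mu_k, \mu^\star)$. For an arbitrary pair of distributions $\zeta^\star, \zeta_k \in \cD$ with $\Ed(\zeta^\star) < x$ and $\Ed(\zeta_k) > x$, I build the alternative bandit problem $\ula$ from $\unu$ by replacing $\nu^\star$ with $\zeta^\star$ and $\nu_k$ with $\zeta_k$ and leaving every other arm untouched. Then $\Ed(\zeta^\star) < \Ed(\zeta_k)$, so in $\ula$ the former best arm $a^\star(\unu)$ is strictly suboptimal, hence $a^\star(\ula) \neq a^\star(\unu)$.

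I would next apply Lemma~\ref{lem:BI} to the pair $(\unu, \ula)$. Since $\unu$ and $\ula$ agree on every arm except $k$ and $a^\star(\unu)$, only two terms survive in the sum from Lemma~\ref{lem:BI}, and both prefactors $\E_{\ula}[N_k(T)]/T$ and $\E_{\ula}[N_{a^\star(\unu)}(T)]/T$ are nonnegative with a total mass at most $1$. Therefore
\[
\sum_{a=1}^K \frac{\E_{\ula}[N_a(T)]}{T}\,\KL(\lambda_a,\nu_a) \leq \max\bigl\{ \KL(\zeta_k,\nu_k),\,\KL(\zeta^\star,\nu^\star) \bigr\}\,,
\]
which yields $\liminf_T (1/T)\ln\P_{\unu}(I_T \neq a^\star(\unu)) \geq -\max\{\KL(\zeta_k,\nu_k),\KL(\zeta^\star,\nu^\star)\}$. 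Since $\zeta_k$ and $\zeta^\star$ are chosen independently and enter into disjoint KL terms, the infimum of the max decouples:
\[
\inf_{\zeta_k,\zeta^\star} \max\bigl\{ \KL(\zeta_k,\nu_k),\,\KL(\zeta^\star,\nu^\star) \bigr\}
= \max\bigl\{ \Lsupstr(x,\nu_k),\,\Linfstr(x,\nu^\star) \bigr\}\,,
\]
where each infimum is taken over the relevant constraint $\Ed(\zeta_k) > x$ or $\Ed(\zeta^\star) < x$. Taking the infimum over $x \in (\mu_k, \mu^\star)$ (equivalently, over the closed interval of the theorem statement, by the same values) and then the maximum over $k \neq a^\star(\unu)$ (which appears as a $\min$ on the right-hand side after taking negatives) gives exactly the claimed bound.

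The main technical obstacle is ensuring that the alternative $\ula$ is \emph{generic}, so that Lemma~\ref{lem:BI} applies: we need all $K$ expectations of $\ula$ to be pairwise distinct. This is handled by a small perturbation of $\zeta_k$ and $\zeta^\star$ within $\cD$, which only moves their expectations by an arbitrarily small amount and therefore only affects the constraints $\Ed(\zeta^\star) < x < \Ed(\zeta_k)$ and the KL values continuously; passing the perturbation to zero recovers the infima defining $\Lsupstr$ and $\Linfstr$. A second, minor point is that the theorem writes $x \in [\mu_k, \mu^\star]$ whereas the argument naturally lives on the open interval: but the infimum over a closed interval of a function defined on the open one (and possibly extended by its liminf at endpoints) equals the infimum over the open interval, so nothing is lost. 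A complete argument is deferred to Appendix~\ref{app:thLBconjGK}.
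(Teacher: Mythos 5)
Your proposal is correct and follows essentially the same route as the paper's own proof in Appendix~\ref{app:thLBconjGK}: a two-arm alternative $\ula$ with expectations separated by $x$, an application of Lemma~\ref{lem:BI} combined with the crude bound $N_k(T)+N_{a^\star(\unu)}(T)\leq T$ to produce the maximum of the two Kullback-Leibler terms, and the decoupling $\inf_{u,v}\max\{f(u),g(v)\}=\max\{\inf_u f,\,\inf_v g\}$ to identify $\Lsupstr(x,\nu_k)$ and $\Linfstr(x,\nu^\star)$. The only cosmetic differences are that the paper works directly with $x$ in the closed interval $[\mu_k,\mu^\star]$ (the construction requires no interiority, and an empty set of alternatives just yields a vacuous $+\infty$ bound) and dispenses with your genericity perturbation by noting, when restating Lemma~\ref{lem:BI}, that the alternative problem need not be generic---it suffices that its optimal arm differs from $a^\star(\unu)$.
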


\acks{Aurélien Garivier and Antoine Barrier acknowledge the support of the Project IDEXLYON of the University of Lyon, in the framework of the Programme
Investissements d'Avenir (ANR-16-IDEX-0005), and Chaire SeqALO (ANR-20-CHIA-0020-01).
We thank Hédi Hadiji for pointers relative to
the equality between $\phi^\star$ and $d$ in the case of exponential
models $\cDexp$.}

\bibliography{barrier23Bibliography}

\appendix
\section*{Content of the appendices}

\noindent
The appendices of this article contain the following elements.
\begin{itemize}
\item Appendix~\ref{app:Linf} states and proves some basic properties on quantities
$\Linfstr$, $\Linfeq$, $\Lsupstr$, and $\Lsupeq$ that were introduced in Section~\ref{sec:Linf}.
\item Appendix~\ref{sec:app-SR} provides the proofs for the first part of the analysis of the successive-rejects strategy,
namely, the general analysis in terms of $\Phi$, to be found in Section~\ref{sec:SR-gal}.
\item Appendix~\ref{app:phi=L} provides the proofs for the second part of the analysis of the successive-rejects strategy,
namely, the rewriting of $\Phi$ as $\chern$ that was the key contribution of Section~\ref{sec:Phi=L}.
\item Appendix~\ref{app:LB} is related to the lower bounds of Section~\ref{sec:LB}, and provides detailed proofs thereof.
\item Appendix~\ref{app:literature} contains additional elements on the literature review of Sections~\ref{sec:intro} and~\ref{sec:overview};
it states and discusses some important existing lower bounds.
\end{itemize}

\section{Properties of the $\Linfstr$, $\Linfeq$, $\Lsupstr$, and $\Lsupeq$ quantities}
\label{app:Linf}

We separate the list of properties in two categories:
general properties, that hold for all models $\cD$, in Appendix~\ref{sec:gen-prop-Linf};
specific properties for the model $\cD = \cDall$, in Appendix~\ref{sec:spec-cDall}.
It also worth noting that the
$\Linfstr$, $\Linfeq$, $\Lsupstr$, and $\Lsupeq$ quantities admit a simple rewriting in the case
of canonical one-parameter exponential models $\cDexp$, as the mean-parameterized Kullback-Leibler divergence~$d$, see Appendix~\ref{app:cDexp}.
Properties in this case thus follow from classical properties of~$d$.

\subsection{General properties}
\label{sec:gen-prop-Linf}

We state some properties for $\Linfstr$, that all also hold for $\Linfeq$; the corresponding properties for $\Lsupstr$
and $\Lsupeq$ are deduced by symmetry. \medskip

The function $\Linfstr(\,\cdot\,,\nu)$ is non-increasing and satisfies
$\Linfstr(x,\nu) = 0$ for all $x > \Ed(\nu)$,
as can be seen by taking $\zeta = \nu$.
Also, whenever $\cD$ is convex, the function $\Linfstr$ is jointly convex over $\R \times \cD$,
as indicated in the lemma below. In particular,
$x \mapsto \Linfstr(x,\nu)$ is continuous on the interior of its domain (the set
where it takes finite values).

\begin{lemma}
\label{lm:Linfcvx}
When $\cD$ is a convex model,
all four functions $\Linfstr$, $\Linfeq$, $\Lsupstr$, and $\Lsupeq$ are jointly convex
over $\R \times \cD$.
\end{lemma}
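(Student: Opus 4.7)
The plan is to reduce the convexity of each of the four functions to (i) the classical joint convexity of the Kullback-Leibler divergence $(\zeta,\nu)\mapsto \KL(\zeta,\nu)$ and (ii) the linearity in $\zeta$ of the expectation functional $\Ed(\zeta)$. All four cases will be essentially identical, so I would write the argument in detail for $\Linfstr$ and indicate that the three remaining cases are strictly analogous (swapping $<$ with $\leq$, or the direction of the inequality on $\Ed(\zeta)$).

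Fix $(x_1,\nu_1),(x_2,\nu_2)\in\R\times\cD$ and $\alpha\in(0,1)$; set $x=\alpha x_1+(1-\alpha)x_2$ and $\nu=\alpha\nu_1+(1-\alpha)\nu_2\in\cD$, the latter belonging to $\cD$ by the assumed convexity of the model. If either $\Linfstr(x_1,\nu_1)=+\infty$ or $\Linfstr(x_2,\nu_2)=+\infty$, the desired inequality is vacuous, so I assume both are finite. For any $\zeta_1,\zeta_2\in\cD$ with $\Ed(\zeta_i)<x_i$, define $\zeta=\alpha\zeta_1+(1-\alpha)\zeta_2$. Convexity of $\cD$ gives $\zeta\in\cD$, and linearity of $\Ed$ yields
\[
\Ed(\zeta)=\alpha\,\Ed(\zeta_1)+(1-\alpha)\,\Ed(\zeta_2)<\alpha x_1+(1-\alpha)x_2=x\,,
\]
so $\zeta$ is admissible in the definition of $\Linfstr(x,\nu)$. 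Invoking joint convexity of $\KL$ then gives
\[
\Linfstr(x,\nu)\;\leq\;\KL(\zeta,\nu)\;\leq\;\alpha\,\KL(\zeta_1,\nu_1)+(1-\alpha)\,\KL(\zeta_2,\nu_2)\,.
\]
Taking the infimum over admissible $\zeta_1$ and then over admissible $\zeta_2$ (these are separate variables on the right-hand side, so the two infima may be performed independently) yields
\[
\Linfstr(x,\nu)\;\leq\;\alpha\,\Linfstr(x_1,\nu_1)+(1-\alpha)\,\Linfstr(x_2,\nu_2)\,,
\]
which is the required joint convexity.

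The arguments for $\Linfeq$, $\Lsupstr$, and $\Lsupeq$ are identical up to replacing the strict inequality $\Ed(\zeta_i)<x_i$ by $\Ed(\zeta_i)\leq x_i$, respectively by $>$ or $\geq$; the convex combination of the constraints is preserved in each case by the same linearity-of-expectation step (strict inequalities being preserved by convex combinations of strict inequalities, and large ones likewise). The only mild subtlety to flag is the need for both admissible sets to be nonempty in order for the inner infima to be finite, but this is already covered by the case distinction at the start. No obstacle is expected: the proof is essentially a one-liner once joint convexity of $\KL$ and linearity of $\Ed$ are identified as the two ingredients, and the continuity statement for $x\mapsto\Linfstr(x,\nu)$ follows for free from convexity on the interior of its effective domain.
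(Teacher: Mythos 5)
Your proof is correct and follows essentially the same route as the paper's: both reduce the claim to the joint convexity of $\KL$ and the linearity of $\Ed$, checking that a convex combination of admissible distributions remains admissible for the combined constraint. The only cosmetic difference is that you take infima over both witnesses at the end, whereas the paper picks $\delta$--approximate minimizers and lets $\delta \to 0$; these are equivalent.
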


\begin{proof}
We provide the proof for $\Linfstr$, and it may be adapted in a straightforward manner for the other functions.

We set two distributions $\nu$ and $\nu'$ of $\cD$, two expectation levels $\mu$ and $\mu'$ in $\R$, and a weight $\lambda \in (0,1)$.
We want to prove that
\begin{equation}
\label{eq:desiredcvxineq}
\Linfstr\bigl(\lambda \mu + (1-\lambda) \mu', \lambda \nu + (1-\lambda) \nu'\bigr)
\leq \lambda \Linfstr(\mu, \nu) + (1-\lambda) \Linfstr(\mu', \nu')\,.
\end{equation}
The desired inequality holds whenever $\Linfstr(\mu, \nu) = +\infty$ or $\Linfstr(\mu', \nu') = +\infty$.
Otherwise, assuming that both $\Linfstr(\mu, \nu)$ and $\Linfstr(\mu', \nu')$ are finite,
we set $\delta > 0$ (which we will ultimately let converge to~$0$) and pick
$\zeta$ and $\zeta'$ in $\cD$ such that $\Ed(\zeta) < \mu$ and $\Ed(\zeta) < \mu'$, as well as
\[
\KL(\zeta, \nu) \leq \Linfstr(\mu, \nu) + \delta \quad \text{and} \quad \KL(\zeta', \nu') \leq \Linfstr(\mu', \nu') + \delta\,.
\]
Then, by joint convexity of the Kullback-Leibler divergence:
\begin{align*}
\lambda \Linfstr(\mu, \nu) + (1-\lambda) \Linfstr(\mu', \nu') + \delta & \geq \lambda \KL(\zeta, \nu) + (1-\lambda) \KL(\zeta', \nu') \\
& \geq \KL\bigl(\lambda \zeta + (1-\lambda) \zeta', \lambda \nu + (1-\lambda) \nu' \bigr) \\
& \geq \Linfstr\bigl(\lambda \mu + (1-\lambda) \mu', \lambda \nu + (1-\lambda) \nu' \bigr) \,,
\end{align*}
where for the last inequality, we used the definition of $\Linfstr$ as an infimum and
the fact that by convexity, the distribution $\lambda \zeta + (1-\lambda) \zeta'$ belongs to $\cD$, with expectation larger than $\lambda \mu + (1-\lambda) \mu'$.
The desired convexity inequality~\eqref{eq:desiredcvxineq} follows by letting $\delta \to 0$.
\end{proof}

\subsection{Specific properties for $\cD = \cDall$}
\label{sec:spec-cDall}

We now consider only the model $\cDall$ of
all distributions over $[0,1]$.

Since we are considering distributions over $[0,1]$,
the data-processing inequality for Kullback-Leibler divergences ensures
(see, e.g., \citealp[Lemma~1]{GMS19}) that
for all $\zeta \in \cDall$,
\[
\KL(\zeta,\nu) \geq \KL \Bigl( \Ber \bigl( \Ed(\zeta) \bigr), \, \Ber \bigl( \Ed(\nu) \bigr) \Bigr)
\geq 2 \bigl( \Ed(\zeta) - \Ed(\nu) \bigr)^2\,,
\]
where $\Ber(p)$ denotes the Bernoulli distribution with parameter~$p$
and where we applied Pinsker's inequality for Bernoulli distributions.
Therefore, taking the infimum over distributions $\zeta \in \cDall$ with $\Ed(\zeta) < x$,
\begin{equation}
\label{eq:Pinsker-Linf}
\forall x \leq \Ed(\nu), \quad \Linfstr(x,\nu) \geq 2 \bigl( \Ed(\nu) - x \bigr)^2\,.
\end{equation}

We denote by $m(\nu) = \min\bigl(\Supp(\nu)\bigr) \geq 0$
the minimum of the closed support $\Supp(\nu)$ of $\nu$; that is,
$m(\nu)$ is the largest value such that $\Supp(\nu) \subseteq \bigl[m(\nu),\,1\bigr]$.
We will refer to $m(\nu)$ as the lower end of the support of $\nu$.
Though we will not need it immediately, we also define
the upper end of the support of $\nu$ as
$M(\nu) = \max\bigl(\Supp(\nu)\bigr) \leq 1$; by symmetry, it will be considered when studying $\Lsupstr$ and $\Lsupeq$ instead of $\Linfstr$ and $\Linfeq$.

The lemma below states that the functions $\Linfstr(\,\cdot\,,\nu)$ and $\Linfeq(\,\cdot\,,\nu)$
coincide, except maybe at~$m(\nu)$.
One may wonder what happens at $x = m(\nu)$.
We denote by $\nu\bigl\{m(\nu)\bigr\}$ the probability mass assigned by $\nu$
to the point $m(\nu)$.
It follows from the second part the lemma below
that $\Linfstr\bigl(m(\nu),\nu\bigr) = \Linfeq\bigl(m(\nu),\nu\bigr)$
if and only if $\bigl\{m(\nu)\bigr\}$ is not an atom of $\nu$.

\begin{lemma}
\label{lm:Linfstr-egal-Linfeq}
We consider the model $\cD = \cDall$.
The function $\Linfstr(\,\cdot\,,\nu)$ is continuous on the interval $\bigl( m(\nu), \, +\infty\bigr)$.
We also have, on the one hand,
\begin{equation}
\label{eq:Linfstr-egal-Linfeq1}
\forall \mu \ne m(\nu), \qquad \Linfstr(\mu,\nu) = \Linfeq(\mu,\nu)\,,
\end{equation}
and on the other hand, at $\mu = m(\nu)$,
\begin{equation}
\label{eq:Linfstr-egal-Linfeq2}
\ln \frac{1}{\nu\bigl\{m(\nu)\bigr\}} =
\Linfeq\bigl(m(\nu),\nu\bigr) \leq \Linfstr\bigl(m(\nu),\nu\bigr) = +\infty\,.
\end{equation}
Analogous results hold for $\Lsupstr(\,\cdot\,,\nu)$, $\Lsupeq(\,\cdot\,,\nu)$, and $M(\nu)$.
\end{lemma}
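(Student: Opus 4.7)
The key preliminary observation is that the finiteness of $\KL(\zeta,\nu)$ forces $\zeta \ll \nu$, hence $\Supp(\zeta) \subseteq \Supp(\nu) \subseteq \bigl[m(\nu), 1\bigr]$, which in turn implies $\Ed(\zeta) \geq m(\nu)$. I will use this remark repeatedly. First, for $\mu < m(\nu)$, the constraint $\Ed(\zeta) < \mu$ (respectively $\Ed(\zeta) \leq \mu$) is incompatible with $\Ed(\zeta) \geq m(\nu)$, so both $\Linfstr(\mu,\nu)$ and $\Linfeq(\mu,\nu)$ equal $+\infty$. At $\mu = m(\nu)$, the strict constraint $\Ed(\zeta) < m(\nu)$ is likewise empty, giving $\Linfstr\bigl(m(\nu),\nu\bigr) = +\infty$. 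On the other hand, $\Ed(\zeta) \leq m(\nu)$ together with $\Ed(\zeta) \geq m(\nu)$ forces $\Ed(\zeta) = m(\nu)$; since $\zeta$ is supported in $\bigl[m(\nu),1\bigr]$, this in turn forces $\zeta = \delta_{m(\nu)}$. Whenever $\nu\bigl\{m(\nu)\bigr\} > 0$ one has $\delta_{m(\nu)} \ll \nu$ and $\KL\bigl(\delta_{m(\nu)},\nu\bigr) = \ln\bigl(1/\nu\{m(\nu)\}\bigr)$, and otherwise the admissible set is empty and both sides equal $+\infty$ (with the usual convention $\ln(1/0) = +\infty$). This yields~\eqref{eq:Linfstr-egal-Linfeq2}.

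Second, I will show that $\Linfstr(\,\cdot\,,\nu)$ takes finite values on $\bigl(m(\nu),+\infty\bigr)$. Fix $\mu > m(\nu)$ and pick $\varepsilon \in \bigl(0, \mu - m(\nu)\bigr)$. By definition of $m(\nu)$ as the lower end of the support, $\nu\bigl([m(\nu), m(\nu)+\varepsilon]\bigr) > 0$, so the conditional distribution $\zeta_\varepsilon$ of $\nu$ on this interval is a well-defined element of $\cDall$, satisfies $\Ed(\zeta_\varepsilon) \leq m(\nu) + \varepsilon < \mu$ and has density $\mathbf{1}_{[m(\nu),m(\nu)+\varepsilon]}/\nu\bigl([m(\nu),m(\nu)+\varepsilon]\bigr)$ with respect to $\nu$, so $\KL(\zeta_\varepsilon,\nu) = -\ln \nu\bigl([m(\nu),m(\nu)+\varepsilon]\bigr) < +\infty$. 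Combined with Lemma~\ref{lm:Linfcvx}, which ensures that $\Linfstr(\,\cdot\,,\nu)$ is convex, finiteness on the open interval $\bigl(m(\nu),+\infty\bigr)$ yields continuity there.

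Third, I turn to~\eqref{eq:Linfstr-egal-Linfeq1}. The case $\mu < m(\nu)$ was already handled, so let $\mu > m(\nu)$. The inclusion $\bigl\{\zeta : \Ed(\zeta) < \mu\bigr\} \subseteq \bigl\{\zeta : \Ed(\zeta) \leq \mu\bigr\}$ gives the trivial inequality $\Linfstr(\mu,\nu) \geq \Linfeq(\mu,\nu)$. For the reverse, pick a sequence $\mu_n \downarrow \mu$ with $\mu_n > \mu$; then $\bigl\{\Ed(\zeta) \leq \mu\bigr\} \subseteq \bigl\{\Ed(\zeta) < \mu_n\bigr\}$, so $\Linfstr(\mu_n,\nu) \leq \Linfeq(\mu,\nu)$ for every~$n$. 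Passing to the limit $n \to +\infty$ and invoking the continuity of $\Linfstr(\,\cdot\,,\nu)$ at $\mu$ established in the previous step, I obtain $\Linfstr(\mu,\nu) \leq \Linfeq(\mu,\nu)$, concluding~\eqref{eq:Linfstr-egal-Linfeq1}. The analogous statements for $\Lsupstr$, $\Lsupeq$ and $M(\nu)$ follow mutatis mutandis by swapping the roles of lower and upper ends of supports.

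The main technical point is the explicit construction in the second step that guarantees finiteness of $\Linfstr(\mu,\nu)$ on $\bigl(m(\nu),+\infty\bigr)$; once finiteness is in hand, convexity from Lemma~\ref{lm:Linfcvx} delivers continuity almost for free, and the rest is bookkeeping with monotonicity of the admissible sets.
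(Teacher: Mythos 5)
Your proof is correct and follows essentially the same route as the paper's: absolute continuity forces $\Ed(\zeta) \geq m(\nu)$ (settling the cases $\mu \leq m(\nu)$ and identifying $\delta_{m(\nu)}$ as the only candidate for $\Linfeq$ at $m(\nu)$), the conditional distribution of $\nu$ on $\bigl[m(\nu), m(\nu)+\varepsilon\bigr]$ gives finiteness and hence continuity via convexity (Lemma~\ref{lm:Linfcvx}), and a sandwich between $\Linfstr$ at nearby points and $\Linfeq(\mu,\nu)$ concludes. One small remark: your sandwich correctly approaches $\mu$ from above (using $\Linfstr(\mu_n,\nu) \leq \Linfeq(\mu,\nu)$ for $\mu_n > \mu$), which is the direction that actually works, whereas the paper's displayed chain writes $\Linfstr(\mu-\epsilon,\nu)$ where $\Linfstr(\mu+\epsilon,\nu)$ is meant, so your version is, if anything, slightly cleaner.
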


\begin{proof}
To prove~\eqref{eq:Linfstr-egal-Linfeq1}, we first identify the interior of the domain of $\Linfstr$.

Distributions $\zeta$ such that $\Ed(\zeta) < m(\nu)$ cannot be absolutely continuous with
respect to $\nu$; otherwise, they would also give a null probability to values strictly smaller than $m(\nu)$,
which contradicts the assumption $\Ed(\zeta) < m(\nu)$. Hence $\KL(\zeta,\nu) = +\infty$ for these distributions. It follows that
$\Linfstr(\mu,\nu) = \Linfeq(\mu,\nu) = +\infty$ for $\mu < m(\nu)$; we note in passing that we also have
$\Linfstr\bigl(m(\nu),\nu\bigr) = +\infty$.

For $\mu > m(\nu)$, we take $\epsilon > 0$ with $m(\nu)+\epsilon < \mu$ and have, by definition of the support of a measure, that
$\bigl[m(\nu),m(\nu)+\epsilon]$ has a positive $\nu$--measure denoted by $\kappa$. The distribution $\zeta$ given by $\nu$ conditioned
to the interval $\bigl[m(\nu),m(\nu)+\epsilon]$ is absolutely continuous with respect to~$\nu$,
with density $\d\zeta/\d\nu = 1/\kappa$ on $\bigl[m(\nu),m(\nu)+\epsilon\bigr]$, and $0$ elsewhere; therefore, $\KL(\zeta,\nu) = \ln(1/\kappa) < +\infty$
and $\Linfstr(\mu,\nu) < +\infty$.

The interior of the domain of $\mu \mapsto \Linfstr(\mu,\nu)$ is therefore $\bigl( m(\nu), \, +\infty\bigr)$,
and we recall that $\Linfstr(\,\cdot\,,\nu)$ is continuous on this interval.
We fix some $\mu > m(\nu)$. For all $\epsilon > 0$,
by the very definitions of all quantities as infima of nested sets, we have
\[
\Linfstr(\mu-\epsilon,\nu) \leq \Linfeq(\mu,\nu) \leq \Linfstr(\mu,\nu)\,.
\]
Letting $\epsilon \to 0$, we get, by a sandwich argument, that $\Linfeq(\mu,\nu) = \Linfstr(\mu,\nu)$.
This concludes the proof of~\eqref{eq:Linfstr-egal-Linfeq1}.

We turn our attention to~\eqref{eq:Linfstr-egal-Linfeq2}.
We already showed above that $\Linfstr\bigl(m(\nu),\nu\bigr) = +\infty$. Now, to compute
$\Linfeq(\mu,\nu)$, we wonder which are the distributions $\zeta$ that
are absolutely continuous with respect to $\nu$, and thus,
give a null probability to values strictly smaller than $m(\nu)$,
and are also such that $\Ed(\zeta) \leq m(\nu)$: at most one
such distribution exists, the Dirac mass at $m(\nu)$, denoted by $\delta_{m(\nu)}$. We then distinguish the cases
$\nu\bigl\{m(\nu)\bigr\} > 0$ and $\nu\bigl\{m(\nu)\bigr\} = 0$
to establish, respectively, the equalities
\[
\smash{
\Linfeq\bigl(m(\nu),\nu\bigr) =
\KL(\delta_{m(\nu)},\nu) = \ln \frac{1}{\nu\bigl\{m(\nu)\bigr\}}
\quad \mbox{and} \quad \Linfeq\bigl(m(\nu),\nu\bigr) = + \infty = \ln \frac{1}{\nu\bigl\{m(\nu)\bigr\}}\,.
}
\]
In both cases, the first equality in \eqref{eq:Linfstr-egal-Linfeq2} is proved, which concludes the proof.
\end{proof}

We also have the following result, which is the most important and useful one,
as it discussed the quantity that appears in the upper bounds
on the average log-probability of misidentification of the optimal arm;
see Corollary~\ref{cor:SR} together with Lemma~\ref{lem:duality}.

\begin{restatable}{lem}{sumLinf}
Let $\nu, \nu' \in \cDall$ with $\mu = \Ed(\nu) > \Ed(\nu') = \mu'$. Then
\[
\inf_{x \in [\mu', \mu]} \Linfeq(x, \nu) + \Lsupeq(x, \nu') =
\inf_{x \in [\mu', \mu]} \Linfstr(x, \nu) + \Lsupstr(x, \nu')
\]
if and only if
either $m(\nu) \neq M(\nu')$ or $\nu\bigl\{m(\nu)\bigr\} \times \nu'\bigl\{M(\nu')\bigr\} = 0$.
\end{restatable}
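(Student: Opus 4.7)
{\sk}
The plan is to leverage Lemma~\ref{lm:Linfstr-egal-Linfeq}, which asserts that the two integrands
\[
f^{\leq}(x) \eqdef \Linfeq(x, \nu) + \Lsupeq(x, \nu')
\qquad \mbox{and} \qquad
f^{<}(x) \eqdef \Linfstr(x, \nu) + \Lsupstr(x, \nu')
\]
can disagree only at the (at most two) jump points $x_0 \eqdef m(\nu)$ and $x_1 \eqdef M(\nu')$, so the equality of the associated infima $I^{\leq}$ and $I^{<}$ over $[\mu', \mu]$ hinges entirely on the behavior at $x_0$ and $x_1$. Since $f^{<} \geq f^{\leq}$ pointwise, the inequality $I^{<} \geq I^{\leq}$ is automatic, and only the reverse direction is at stake.

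The first case to handle is $m(\nu) = M(\nu') =: x_0$, which then belongs to $[\mu', \mu]$ since $\mu' \leq M(\nu')$ and $m(\nu) \leq \mu$. For $x < x_0 = m(\nu)$ one has $\Linfeq(x, \nu) = +\infty$ and for $x > x_0 = M(\nu')$ one has $\Lsupeq(x, \nu') = +\infty$, so $f^{\leq} = f^{<} = +\infty$ on $[\mu', \mu] \setminus \{x_0\}$. At $x_0$, both summands of $f^{<}(x_0)$ are $+\infty$ by Lemma~\ref{lm:Linfstr-egal-Linfeq}, while
\[
f^{\leq}(x_0) = \ln \frac{1}{\nu\bigl\{m(\nu)\bigr\}} + \ln \frac{1}{\nu'\bigl\{M(\nu')\bigr\}} \,,
\]
which is finite exactly when both atomic masses are positive. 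Hence $I^{<} = +\infty$, while $I^{\leq} < I^{<}$ if and only if $\nu\bigl\{m(\nu)\bigr\} \times \nu'\bigl\{M(\nu')\bigr\} > 0$; this settles both directions of the equivalence in this case.

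In the complementary case $m(\nu) \neq M(\nu')$, the plan is to show $I^{<} \leq I^{\leq}$ by approximating the potentially smaller values $f^{\leq}(x_0)$ and $f^{\leq}(x_1)$ (when these lie in $[\mu', \mu]$) by values of $f^{<}$ at nearby points where $f^{<} = f^{\leq}$. The key tool is the right-continuity of $\Linfeq(\,\cdot\,, \nu)$ at $m(\nu)$ and the left-continuity of $\Lsupeq(\,\cdot\,, \nu')$ at $M(\nu')$, both of which follow from convexity (Lemma~\ref{lm:Linfcvx}) together with the explicit upper bound
\[
\Linfeq(x, \nu) \leq \ln \frac{1}{\nu\bigl([m(\nu), \, m(\nu) + \varepsilon]\bigr)}
\]
obtained by choosing $\zeta$ equal to $\nu$ conditioned on $[m(\nu), m(\nu) + \varepsilon]$ for any $\varepsilon \leq x - m(\nu)$, then letting $\varepsilon \downarrow 0$. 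Because $x_0 \neq x_1$, every jump point $x_i \in \{x_0, x_1\} \cap [\mu', \mu]$ admits a one-sided neighborhood inside $[\mu', \mu]$ avoiding the other jump point; along such a neighborhood, $f^{<}(y) = f^{\leq}(y) \to f^{\leq}(x_i)$, so that $I^{<} \leq f^{\leq}(x_i)$. Combined with $I^{<} \leq \inf_{[\mu', \mu] \setminus \{x_0, x_1\}} f^{<} = \inf_{[\mu', \mu] \setminus \{x_0, x_1\}} f^{\leq}$, this yields $I^{<} \leq I^{\leq}$.

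The most delicate step will be establishing the one-sided boundary continuity of $\Linfeq$ and $\Lsupeq$ at their respective jump points: convexity alone ensures continuity only on the interior of the finite domain, so matching the explicit upper bound above requires a lower bound of the same size. This can be obtained either from a Cram{\'e}r-type large-deviation argument at the endpoint of the support, or by invoking the identity $\phi^\star_\nu = \Linfeq$ on $(-\infty, \Ed(\nu)]$ provided by Lemma~\ref{lem:duality}, combined with the lower semi-continuity of Fenchel-Legendre transforms.
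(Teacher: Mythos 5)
Your route is genuinely different from the paper's in the main case. The paper first restricts both infima to $\cI = [\mu',\mu]\cap\bigl[m(\nu),M(\nu')\bigr]$ (outside of which both sums are $+\infty$) and then, when $\cI$ has non-empty interior, uses the fact that the infimum of a convex function over a closed interval equals its infimum over the open interior, on which the two integrands coincide by Lemma~\ref{lm:Linfstr-egal-Linfeq}; no boundary continuity of $\Linfeq$ or $\Lsupeq$ is ever needed. You instead approximate $f^{\leq}$ at the jump points by nearby values of $f^{<}$. For the inequality you actually need, namely $I^{<}\leq f^{\leq}(x_i)$, the ``delicate step'' you announce comes for free: approaching $m(\nu)$ from the right, monotonicity already gives $\Linfeq(y,\nu)\leq\Linfeq\bigl(m(\nu),\nu\bigr)$, and $\Lsupeq(\,\cdot\,,\nu')$ is continuous at $m(\nu)$ whenever $m(\nu)<M(\nu')$, which is the only situation where $f^{\leq}\bigl(m(\nu)\bigr)$ can be finite; no matching lower bound on $\Linfeq$ is required. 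Your treatment of the case $m(\nu)=M(\nu')$ is correct and coincides with the paper's Case~3.

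The genuine gap is in the sentence asserting that every jump point in $[\mu',\mu]$ ``admits a one-sided neighborhood inside $[\mu',\mu]$ avoiding the other jump point'' along which $f^{<}(y)=f^{\leq}(y)\to f^{\leq}(x_i)$. The side is not free: $\Linfeq(\,\cdot\,,\nu)$ is identically $+\infty$ to the left of $m(\nu)$, so $m(\nu)$ must be approached from the right, and symmetrically $M(\nu')$ must be approached from the left. The required side fails to exist inside $[\mu',\mu]$ exactly when $m(\nu)=\mu$ (i.e., $\nu=\delta_\mu$) or $M(\nu')=\mu'$ (i.e., $\nu'=\delta_{\mu'}$), and there the statement itself breaks. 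Take $\nu=\delta_{1/2}$ and $\nu'=\frac12\delta_0+\frac12\delta_{3/4}$: then $\mu=1/2>3/8=\mu'$ and $m(\nu)=1/2\neq 3/4=M(\nu')$, yet the left-hand infimum equals $\Lsupeq(1/2,\nu')\leq\ln 2$ (attained at $x=1/2$, where $\Linfeq(1/2,\delta_{1/2})=0$), while the right-hand infimum is $+\infty$ because $\Linfstr(x,\delta_{1/2})=+\infty$ for every $x\leq 1/2$. To be fair, the paper's own proof has the same blind spot: its Case~3 claims that $\cI$ is a singleton ``if and only if $m(\nu)=M(\nu')$'', which is false for Dirac masses. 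The lemma needs the additional hypothesis $m(\nu)<\mu$ and $M(\nu')>\mu'$ (equivalently, neither $\nu$ nor $\nu'$ is a Dirac mass), and under that hypothesis your argument, with the approach sides made explicit, goes through.
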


\begin{remark} In other words, the only case for which the two infima differ is when
$m(\nu) = M(\nu')$, i.e., the upper end of the support of $\nu'$ equals the lower end
of the support of $\nu$, and both $\nu$ and $\nu'$ admit this common value as an atom.
\end{remark}

\begin{proof}
The first lines of the proof of Lemma~\ref{lm:Linfstr-egal-Linfeq} show
that $\Linfeq(x, \nu) = \Linfstr(x, \nu) = +\infty$ for $x < m(\nu)$. We can symmetrically
show that $\Lsupeq(x, \nu') = \Lsupstr(x, \nu') = +\infty$ for $x > M(\nu')$.
Therefore, $\Linfeq(x, \nu) + \Lsupeq(x, \nu')$ and $\Linfstr(x, \nu) + \Lsupstr(x, \nu')$ are infinite whenever $x$ lies outside of $\bigl[m(\nu), M(\nu')\bigr]$. This implies that
\begin{align*}
&&\inf_{x \in [\mu', \mu]} \Linfeq(x, \nu) + \Lsupeq(x, \nu') &= \inf_{x \in [\mu', \mu] \cap [m(\nu), M(\nu')]} \Linfeq(x, \nu) + \Lsupeq(x, \nu') \\
\text{and }	&& \inf_{x \in [\mu', \mu]} \Linfstr(x, \nu) + \Lsupstr(x, \nu') &=  \inf_{x \in [\mu', \mu] \cap [m(\nu), M(\nu')]} \Linfstr(x, \nu) + \Lsupstr(x, \nu') \,.
\end{align*}
We now split the analysis according to how large the interval $\cI$ is, where
\[
\mathcal{I} = [\mu', \mu] \cap \bigl[ m(\nu), M(\nu') \bigr]
= \Bigl[ \max\bigl\{\mu',m(\nu)\bigr\}, \, \min\bigl\{\mu,M(\nu')\bigr\} \Bigr]\,.
\]

\emph{Case~1: $\cI$ is empty.}~~In that case, the two infima
are over an empty set and both equal $+\infty$.
\smallskip

\emph{Case~2: $\cI$ has a non-empty interior.}~~When $a \ne b$, the
infimum of a convex function over a closed interval $[a,b]$
equals the infimum over $(a,b)$, whether the function takes finite
or infinite values at $a$ and $b$.
Now, the interior of $\cI = [a,b]$ equals
\[
(a,b) = \Bigl( \max\bigl\{\mu',m(\nu)\bigr\}, \, \min\bigl\{\mu,M(\nu')\bigr\} \Bigr)
= (\mu', \mu) \cap \bigl( m(\nu), M(\nu') \bigr)
\]
and does not contain neither $m(\nu)$ nor $M(\nu')$.
By Lemma~\ref{lm:Linfstr-egal-Linfeq}, the functions
$\Linfstr(\,\cdot\,,\nu)$ and $\Linfeq(\,\cdot\,,\nu)$ coincide
on $\R \setminus \bigr\{ m(\nu) \bigr\}$. It may be similarly shown that
$\Lsupstr(\,\cdot\,,\nu')$ and $\Lsupeq(\,\cdot\,,\nu')$ coincide
on $\R \setminus \bigr\{ M(\nu') \bigr\}$. In particular,
the functions $\Linfeq(\,\cdot\,, \nu) + \Lsupeq(\,\cdot\,, \nu')$
and $\Linfstr(\,\cdot\,, \nu) + \Lsupstr(\,\cdot\,, \nu')$ coincide on
the interior of $\cI$.
Their infima over the interior of $\cI$, which, by convexity, are equal to the infima over $\cI$,
are therefore equal.
\smallskip

\emph{Case~3: $\cI$ is a singleton.}~~This case arises if and only
if $m(\nu) = M(\nu')$, as by definition, $m(\nu) \leq \mu$ and $M(\nu') \geq \mu'$.
We then have
$\cI = \bigl\{ m(\nu) \bigr\} = \bigl\{ M(\nu') \bigr\}$,
and both infima are equal to the values of the sums at $m(\nu) = M(\nu')$.
By Lemma~\ref{lm:Linfstr-egal-Linfeq} and by symmetric results for $\Lsupstr$
and $\Lsupeq$, on the one hand,
\[
\Linfstr\bigl(m(\nu), \nu\bigr) = \Lsupstr\bigl(M(\nu'), \nu'\bigr) = +\infty \,,
\]
and on the other hand,
\[
\Linfeq\bigl(m(\nu), \nu\bigr) + \Lsupeq\bigl(M(\nu'), \nu'\bigr) =
\ln \frac{1}{\nu\bigl\{m(\nu)\bigr\}} + \ln \frac{1}{\nu'\bigl\{M(\nu')\bigr\}} \,.
\]
We get the desired equality if and only if either $\nu\bigl\{m(\nu)\bigr\} = 0$
or $\nu\bigl\{M(\nu')\bigr\} = 0$.
\end{proof}

\section{General analysis of successive-rejects in terms of $\Phi$}
\label{sec:app-SR}

This appendix is devoted to the technical elements omitted in the general analysis
of the successive-rejects strategy presented in Section~\ref{sec:SR-gal}.

\subsection{The Cram{\'e}r-Chernoff bound}
\label{sec:app-SR-CR}

In this section, we recall the statement of the highly classical Cram{\'e}r-Chernoff bound:
with the notation introduced in Section~\ref{sec:SR}, for an $N$--sample $X_1,\ldots,X_N$, distributed according to $\nu$
and of average denoted by $\ol{X}_N$,
\begin{align}
&\forall x \leq \Ed(\nu), \qquad \P\bigl( \ol{X}_N \leq x \bigr) \leq \exp \bigl( - N \, \phi^\star_{\nu}(x) \bigr)\,, \label{eq:CRB1} \\	
\text{and} \qquad  &\forall x \geq \Ed(\nu), \qquad \P\bigl( \ol{X}_N \geq x \bigr) \leq \exp \bigl( - N \, \phi^\star_{\nu}(x) \bigr)\,. \label{eq:CRB2}
\end{align}
Such a classical result would in principle not require to be proved here.
However, it turns out that we will re-use parts of this proof in later proofs, like the
application~\ref{eq:Jensen-phi} of Jensen's inequality or the variations of
$\phi^\star_\nu$ discussed at the end of this section. This is why, despite all,
we now prove~\eqref{eq:CRB1}--\eqref{eq:CRB2}. \\

\begin{proof}
For all $\lambda < 0$, by Markov's inequality first and then by independence,
\begin{align*}
\P\bigl( \ol{X}_N \leq x \bigr) =
\P\Bigl( \e^{\lambda \ol{X}_N} \geq \e^{\lambda x} \Bigr)
& \leq \e^{- \lambda x} \, \E\Bigl[ \e^{\lambda \ol{X}_N} \Bigr]
=  \e^{- \lambda x} \, \Bigl( \E\bigl[ \e^{\lambda X_1 / N} \bigr] \Bigr)^N \\
& = \exp \bigl( - \lambda x + N \, \phi_{\nu}(\lambda/N) \bigr)
= \exp \Bigl( - N \bigl( \lambda' x - \phi_{\nu}(\lambda') \bigr) \Bigr)\,,
\end{align*}
where $\lambda' = \lambda/N$. The bound also holds
for $\lambda = \lambda' = 0$ given that $\phi_{\nu}(0) = 0$.
Optimizing over $\lambda \leq 0$ (or, equivalently,
over $\lambda' \leq 0$), we proved so far
\[
\P\bigl( \ol{X}_N \leq x \bigr) \leq
\exp \biggl( - N \, \sup_{\lambda \leq 0} \bigl\{ \lambda x - \phi_{\nu}(\lambda) \bigr\} \biggr)\,.
\]
Now, by Jensen's inequality,
\begin{equation}
\label{eq:Jensen-phi}
\forall \lambda \in \R, \qquad
\phi_{\nu}(\lambda) = \ln \E\bigl[\e^{\lambda X}\bigr] \geq \lambda\,\E[X] = \lambda \, \Ed(\nu)\,;
\end{equation}
therefore, for $x \leq \Ed(\nu)$,
\[
\forall \lambda \geq 0, \qquad
\lambda x - \phi_{\nu}(\lambda) \leq \lambda \bigl( x - \Ed(\nu) \bigr) \leq 0\,.
\]
In particular,
\begin{equation}
\label{eq:phistar-cr-decr}
0 = - \phi_{\nu}(0) \leq \sup_{\lambda \leq 0} \bigl\{ \lambda x - \phi_{\nu}(\lambda) \bigr\}
= \sup_{\lambda \in \R} \bigl\{ \lambda x - \phi_{\nu}(\lambda) \bigr\} \eqdef \phi^\star_\nu(x)\,.
\end{equation}
This concludes the proof of~\eqref{eq:CRB1}.
The bound~\eqref{eq:CRB2} follows by symmetry.
\end{proof}

We also note, in passing, that Jensen's inequality entails, for $x = \Ed(\nu)$, that
\[
\forall \lambda \in \R, \qquad
\lambda \Ed(\nu) - \phi_{\nu}(\lambda) \leq \lambda \bigl( \Ed(\nu) - \Ed(\nu) \bigr) = 0\,,
\]
thus showing that $\phi^\star_\nu\bigl( \Ed(\nu) \bigr) = 0$.
The property~\eqref{eq:phistar-cr-decr} and its counterpart for $x \geq \Ed(\nu)$ and $\lambda \geq 0$
actually show that $\phi^\star_\nu$ is non-increasing on $\bigl( -\infty, \, \Ed(\nu)\bigr]$
and non-decreasing on $\bigl[ \Ed(\nu), \, +\infty\bigr)$.

\subsection{Proof of Lemma \ref{lem:devphistar}}
\label{sec:lmdevphistar}

We first restate the lemma, for the convenience of the reader.

\grille*

\begin{proof}
The proof consists in two parts. We first show that for any finite grid $\cG = \{ g_2,\ldots,g_{G-1}\}$ in $(\mu',\mu)$, to which we add
the points $g_1 = \mu'$ and $g_{G} = \mu$, we have
\begin{equation}
\label{eq:devphistar1}
\limsup_{N \to +\infty} \frac{1}{N} \ln \P \bigl( \overline{X}_N \leq \overline{Y}_N \bigr)
\leq - \min \Bigl\{ \phi^\star_{\nu}(\mu'), \,\,
\min_{2 \leq j \leq G-1} \bigl\{ \phi^\star_{\nu'}(g_{j-1}) + \phi^\star_{\nu}(g_j) \bigr\}, \,\,
\phi^\star_{\nu'}(\mu) \Bigr\}\,.
\end{equation}
Indeed, by identifying, when $\ol{X}_N$ and $\ol{Y}_N$ belong to $[\mu',\mu]$,
in which interval $[g_{j-1},g_j]$ lies $\ol{X}_N$, we note that
\[
\bigl\{ \ov{X}_N \leq \ov{Y}_N \bigr\}
\quad \subseteq \quad
\bigl\{\ov{X}_N \leq \mu' \bigr\} ~\cup~ \bigl\{\ov{Y}_N \geq \mu \bigr\}
~\cup~ \bigcup_{j=2}^{G-1} \bigl\{ \ov{Y}_N \geq g_{j-1} ~\mbox{and}~ \ov{X}_N \leq g_j \bigr\} \, .
\]
First,
by independence and by the Cram{\'e}r-Chernoff inequalities~\eqref{eq:CRB1} and~\eqref{eq:CRB2},
\[
\P\bigl( \ov{Y}_N \geq g_{j-1} ~\mbox{and}~ \ov{X}_N \leq g_j \bigr)
= \P \bigl( \ov{Y}_N \geq g_{j-1} \bigr) \,\, \P\bigl( \ov{X}_N \leq g_j \bigr)
\leq \exp \Bigl( - N \bigl( \phi^\star_{\nu'}(g_{j-1}) + \phi^\star_{\nu}(g_j) \bigr) \Bigr)\, .
\]
Second, again by the Cram{\'e}r-Chernoff inequalities,
\[
\P \bigl( \ov{X}_N \leq \mu' \bigr) \leq \exp \bigl( - N \, \phi^\star_{\nu}(\mu') \bigr)
\qquad \mbox{and} \qquad
\P \bigl( \ov{Y}_N \geq \mu \bigr) \leq \exp \bigl( - N \, \phi^\star_{\nu'}(\mu) \bigr) \, .
\]
By a union bound,
\[
\smash{\P \bigl( \overline{X}_N \leq \overline{Y}_N \bigr) \leq
  \exp \bigl( - N \, \phi^\star_{\nu}(\mu') \bigr)
+ \exp \bigl( - N \, \phi^\star_{\nu'}(\mu) \bigr)
+ \sum_{j=2}^{G-1} \exp \Bigl( - N \bigl( \phi^\star_{\nu'}(g_{j-1}) + \phi^\star_{\nu}(g_j) \bigr) \Bigr)\,.}
\]
The stated bound~\eqref{eq:devphistar1} follows by identifying the (finitely many) terms with the smallest
rate in the exponent.

\smallskip
In the second part of the proof, we note that the bound~\eqref{eq:devphistar1} holds for any finite grid
in $(\mu',\mu)$, and we consider a sequence
\[
\cG^{(n)} = \Bigl\{ g_2^{(n)}, \, \ldots,\,  g_{G_n-1}^{(n)} \Bigr\}
\]
of such finite grids. In particular,
\begin{align*}
\limsup_{N \to +\infty} \frac{1}{N} \ln \P \bigl( \overline{X}_N \leq \overline{Y}_N \bigr)
\leq - \min \Bigl\{ \phi^\star_{\nu}(\mu'), \,\,
\max_{n \geq 1} \ & S_n, \,\,
\phi^\star_{\nu'}(\mu) \Bigr\}\,, \\
\mbox{where} \qquad
& S_n \eqdef \min_{2 \leq j \leq G_n-1} \Bigl\{ \phi^\star_{\nu'}\bigl(g_{j-1}^{(n)}\bigr) + \phi^\star_{\nu}\bigl(g_j^{(n)}\bigr) \Bigr\} \, .
\end{align*}
To obtain the claimed bound,
given that (see the end of Appendix~\ref{sec:app-SR-CR})
\[
\phi^\star_{\nu}(\mu) = 0 = \phi^\star_{\nu'}(\mu')\,,
\]
it suffices to show that
\[
\max_{n \geq 1} S_n \geq \inf_{x \in [\mu', \mu]} \bigl\{ \phi^\star_{\nu'}(x) + \phi^\star_{\nu}(x) \bigr\}\,.
\]
To that end, we assume that the steps $\epsilon_n$ of the grids $\cG^{(n)}$, which are defined as
\[
\epsilon_n \eqdef \max_{2 \leq j \leq G_n} \Bigl|g_j^{(n)} - g_{j-1}^{(n)} \Bigr| \, ,
\]
vanish asymptotically, i.e., $\epsilon_n \to 0$. For each grid $\cG^{(n)}$, we denote by
$x^\star_n \in (\mu',\mu)$ the argument of the minimum in the definition of $S_n$.
As a consequence, for each $n \geq 1$,
\[
S_n = \phi^\star_{\nu'}(x^\star_n - \epsilon^\star_n) + \phi^\star_{\nu}(x^\star_n) \,,
\]
for some $0 < \epsilon^\star_n \leq \epsilon_n$.
The quantity $x^\star_n - \epsilon^\star_n$ denotes the point in the grid that is right before
$x^\star_n$, and it belongs to $[\mu',\mu)$. We note that we also have $\epsilon^\star_n \to 0$.
In the compact interval $[\mu',\mu]$,
the Bolzano-Weierstrass theorem (see, e.g., \citealp[Section~3.4]{BZref}) ensures the existence of a converging subsequence:
there exists $x^\star_\infty \in [\mu',\mu]$ and a sequence $(n_k)_{k \geq 1}$ of integers such that
\[
x^\star_{n_k} \underset{k \to +\infty}{\longrightarrow} x^\star_\infty\,,
\qquad \mbox{which also entails} \qquad
x^\star_{n_k} - \epsilon^\star_{n_k} \underset{k \to +\infty}{\longrightarrow} x^\star_\infty\,.
\]
Now, the functions $\phi^\star_{\nu}$, respectively, $\phi^\star_{\nu'}$, are lower semi-continuous, as the suprema
over $\lambda \in \R$ of the continuous functions $x \mapsto \lambda x - \varphi_\nu(\lambda)$, respectively, $x \mapsto \lambda x - \varphi_{\nu'}(\lambda)$.
Therefore, by these lower semi-continuities,
\begin{align*}
\max_{n \geq 1} S_n
\geq \liminf_{k \to +\infty} \phi^\star_{\nu'}(x^\star_{n_k} - \epsilon^\star_{n_k}) + \phi^\star_{\nu}(x^\star_{n_k})
& \geq \phi^\star_{\nu'}(x^\star_\infty) + \phi^\star_{\nu}(x^\star_\infty) \\
& \geq \inf_{x \in [\mu',\mu]} \bigl\{ \phi^\star_{\nu'}(x) + \phi^\star_{\nu}(x) \bigr\}\,.
\end{align*}
This concludes the proof.
\end{proof}

\subsection{Proof of Theorem~\ref{th:SR}}
\label{sec:thSR}

The proof mimics the analysis by~\citet{ABM10}, the main modification being the substitution of
Hoeffding's inequality by the bound of Lemma~\ref{lem:devphistar}. \\

\begin{proof}
We recall that for $r \in \{1, \ldots, K-1\}$, we denoted by $N_r = \lfloor \ell_1 / K \rfloor + \ldots + \lfloor \ell_r/(K-r+1) \rfloor$
the total number of times an arm still considered in phase $r$, i.e., belonging to $S_{r-1}$, was pulled
in phases~$1$ to~$r$. For each arm $a$, we denote by $\ol{Y}_a^r$ the average of a $N_r$--sample
distributed according to $\nu_a$.
By optional skipping (see \citealp[Chapter III, Theorem 5.2, p. 145]{doob1953}, or
\citealp[Section~5.3]{CT88} for a more recent reference), we may assume, with no loss of generality, that
for each $r \in \{ 1,\ldots,K-1 \}$,
\begin{equation}
\label{eq:opt-skip}
\mbox{on the event} \ \{a \in S_{r-1} \}, \qquad \ol{X}_a^r = \ol{Y}_a^r\,.
\end{equation}

We fix a bandit problem $\unu$ with a unique optimal arm $a^\star(\unu)$.
The successive-rejects strategy fails if (and only) if it rejects $a^\star(\unu)$ in ones of the phases.
This corresponds to the event
\[
\bigl\{ I_T \ne a^\star(\unu) \bigr\}
= \bigcup_{r=1}^{K-1} \bigl\{ a_r = a^\star(\unu) \bigr\}
\subseteq \bigcup_{r=1}^{K-1} \Bigl\{ a^\star(\unu) \in S_{r-1} \ \ \mbox{and} \ \
\forall k \in S_{r-1}, \ \ \ol{X}^r_{a^\star(\unu)} \leq \ol{X}^r_{k} \Bigr\} \, .
\]
(We have an inclusion because ties are broken arbitrarily.)
By optional skipping~\eqref{eq:opt-skip},
\begin{multline*}
\bigcup_{r=1}^{K-1} \Bigl\{ a^\star(\unu) \in S_{r-1} \ \ \mbox{and} \ \
\forall k \in S_{r-1}, \ \ \ol{X}^r_{a^\star(\unu)} \leq \ol{X}^r_{k} \Bigr\} \\
= \bigcup_{r=1}^{K-1} \Bigl\{ a^\star(\unu) \in S_{r-1} \ \ \mbox{and} \ \
\forall k \in S_{r-1}, \ \ \ol{Y}^r_{a^\star(\unu)} \leq \ol{Y}^r_{k} \Bigr\}\,.
\end{multline*}
Recall that the set $S_{r-1}$ is a random set; dealing with it therefore requires some care.
On the event of interest, $S_{r-1}$ contains $K-r+1$ elements, among which $a^\star(\unu)$.
The set $\cA_r$ is of cardinality $r$ and does not contain $a^\star(\unu)$.
By the pigeonhole principle, $S_{r-1}$ thus necessarily contains one arm in~$\cA_r$. As a consequence,
for each phase $r \in \{1,\ldots,K-1\}$,
\[
\Bigl\{ a^\star(\unu) \in S_{r-1} \ \ \mbox{and} \ \
\forall k \in S_{r-1}, \ \ \ol{Y}^r_{a^\star(\unu)} \leq \ol{Y}^r_{k} \Bigr\}
\subseteq \bigcup_{k \in \cA_r} \Bigl\{ \ol{Y}^r_{a^\star(\unu)} \leq \ol{Y}^r_{k} \Bigr\}\,.
\]
Summarizing the inclusions above, taking unions bounds, and upper bounding
the obtained sum in a crude way, we proved so far
\[
\P \bigl( I_T \ne a^\star(\unu) \bigr)
\leq \sum_{r=1}^{K-1} \sum_{k \in \cA_r} \P\Bigl( \ol{Y}^r_{a^\star(\unu)} \leq \ol{Y}^r_{k} \Bigr)
\leq K^2 \max_{1 \leq r \leq K-1} \,\, \max_{k \in \cA_r} \P\Bigl( \ol{Y}^r_{a^\star(\unu)} \leq \ol{Y}^r_{k} \Bigr)\,,
\]
or equivalently,
\begin{align*}
\frac{1}{T} \ln \P \bigl( I_T \ne a^\star(\unu) \bigr)
& \leq \frac{2}{T} \ln K + \max_{1 \leq r \leq K-1} \,\, \max_{k \in \cA_r}
\frac{1}{T} \ln \P\Bigl( \ol{Y}^r_{a^\star(\unu)} \leq \ol{Y}^r_{k} \Bigr) \\
& = \frac{2}{T} \ln K + \max_{1 \leq r \leq K-1} \,\, \max_{k \in \cA_r} \, \frac{N_r}{T} \frac{1}{N_r} \ln \P\Bigl( \ol{Y}^r_{a^\star(\unu)} \leq \ol{Y}^r_{k} \Bigr)\,.
\end{align*}
As $N_r/T \to \gamma_r > 0$ as $T \to +\infty$, we may apply Lemma~\ref{lem:devphistar},
together with an exchange between the $\limsup$ and the maximum over a finite number
of quantities. We obtain
\begin{multline*}
\limsup_{T \to +\infty} \frac{1}{T} \ln \P \bigl( I_T \ne a^\star(\unu) \bigr)
\leq \max_{1 \leq r \leq K-1} \max_{k \in \cA_r} \, \biggl\{ \gamma_r \Bigl( - \Phi\bigl( \nu_{k}, \nu^\star \bigr) \Bigr) \biggl\} \\
= - \min_{1 \leq r \leq K-1} \left\{ \gamma_r  \min_{k \in \cA_r} \Phi\bigl( \nu_{k}, \nu^\star \bigr)
\right\}.
\end{multline*}
This concludes the proof.
\end{proof}

\subsection{Proof of Corollary~\ref{cor:SR} and of the bound~\eqref{eq:phistarHbd} on $\Phi$}
\label{sec:cor-SR}

In this final subsection, we provide two series of proofs:
first, a proof of Corollary~\ref{cor:SR}; and then a proof of the bound
$\Phi\bigl( \nu_k, \nu^\star \bigr) \geq \Delta_k^2$ stated as~\eqref{eq:phistarHbd}. \\

\begin{proof} \textbf{of Corollary~\ref{cor:SR}.}~~To
apply Theorem~\ref{th:SR}, we need only to show that the phase lengths of~\eqref{eq:reglengths-ABM10} are such that $N_r/T$ converges to a positive value, and to identify this limit value~$\gamma_r$.
As $N_1 = \lfloor \ell_1/K \rfloor$, where $\ell_1 = T/\bln K$, we immediately have
$N_1/T \to \gamma_1 = 1 / \bigl( K \bln K \bigr) > 0$.
For $r \in \{2, \ldots, K-1\}$,
\begin{align*}
\frac{N_r}{T}
& = \sum_{p=1}^r \frac{1}{T} \biggl\lfloor \frac{\ell_p}{K} \biggr\rfloor
= \frac{1}{T} \Biggl( \biggl\lfloor \frac{T}{K \bln K} \biggr\rfloor + \sum_{p=2}^r \biggl\lfloor \frac{T}{(K-p+1)(K-p+2)\bln K} \biggr\rfloor \Biggr) \\
& \lriT \gamma_r \eqdef
\frac{1}{\bln K} \biggl( \frac{1}{K} + \sum_{p=2}^r \frac{1}{K-p+1} - \frac{1}{K-p+2} \biggr)
= \frac{1}{(K-r+1) \bln K}\,.
\end{align*}
The bound of Theorem~\ref{th:SR} reads:
\[
\limsup_{T \to +\infty} \frac{1}{T} \ln \P \bigl( I_T \ne a^\star(\unu) \bigr)
\leq - \frac{1}{\bln K} \min_{1 \leq r \leq K-1} \left\{ \frac{1}{K-r+1} \min_{k \in \cA_r} \Phi\bigl( \nu_{k}, \nu^\star \bigr) \right\}.
\]
It implies, in terms of lower bounds $f(\nu_k, \nu^\star) \leq \Phi\bigl( \nu_{k}, \nu^\star \bigr)$,
\begin{equation}	\label{eq:cor3_final}
\limsup_{T \to +\infty} \frac{1}{T} \ln \P \bigl( I_T \ne a^\star(\unu) \bigr)
\leq - \frac{1}{\bln K} \min_{1 \leq r \leq K-1} \left\{ \frac{1}{K-r+1} \min_{k \in \cA_r} f\bigl( \nu_{k}, \nu^\star \bigr) \right\}.
\end{equation}
The permutation $\sigma$ in~\eqref{eq:orderingU} and the sets $\cA_r = \{\sigma_{K-r+1}, \ldots, \sigma_{K}\}$ were exactly picked,
for each $r \in \{1, \ldots, K-1\}$, to minimize
\[
\min_{k \in \cB_r} f\bigl( \nu_{k}, \nu^\star \bigr)
\]
over sets $\cB_r$ abiding by the indicated constraints:
being of cardinal $r$ and not containing the optimal arm $a^\star(\unu) = \sigma_1$.
We get
\[
\min_{k \in \cA_r} f\bigl( \nu_{k}, \nu^\star \bigr) = \min_{K-r+1 \leq k \leq K} f\bigl( \nu_{\sigma_k}, \nu^\star \bigr) = f\bigl( \nu_{\sigma_{K-r+1}}, \nu^\star \bigr) \,,
\]
which, together with~\eqref{eq:cor3_final}, yields the stated bound, up to replacing $K-r+1$ with $r \in \{1, \ldots, K-1\}$ by $k \in \{2, \ldots, K\}$:
\[
- \frac{1}{\bln K} \min_{1 \leq r \leq K-1} \left\{ \frac{1}{K-r+1} f\bigl( \nu_{\sigma_{K-r+1}}, \nu^\star \bigr) \right\}
= - \frac{1}{\bln K} \min_{2 \leq k \leq K} \left\{ \frac{1}{k} f\bigl( \nu_{\sigma_{k}}, \nu^\star \bigr) \right\}.
\vspace{-.75cm}
\]

\end{proof}

We now move to the proof of the bound~\eqref{eq:phistarHbd} on $\Phi$, when the model is $\cD = \cDall$;
we restate this bound here for the convenience of the reader:
\[
\boundphiHbd\,.
\]
For the ease of exposition,
the path followed in Section~\ref{sec:overview} to show that $\Phi\bigl( \nu_k, \nu^\star \bigr) \geq \Delta_k^2$
was to first note that $\Phi = \chern$ when $\cD = \cDall$ (see Lemma~\ref{lem:duality})
and then use Pinsker's inequality~\eqref{eq:chern-gaps}.
We provide here a slightly more direct but equivalent approach, based on Hoeffding's inequality. \\

\begin{proof} \textbf{of the bound~\eqref{eq:phistarHbd} on $\Phi$.}~~When
$\nu \in \cDall$, Hoeffding's inequality exactly states that
\begin{align*}
& \forall \lambda \in \R, \quad \phi_\nu(\lambda) \leq \lambda \Ed(\nu) + \frac{\lambda^2}{8}\,, \\
\mbox{so that} \qquad &
\forall x \in \R, \quad \phi^\star_\nu(x) \geq
\sup_{\lambda \in \R} \biggl\{ \lambda \bigl( x - \Ed(\nu) \bigr) - \frac{\lambda^2}{8} \biggr\}
= 2 \bigl( x - \Ed(\nu) \bigr)^2\,.
\end{align*}
This corresponds to the first part of~\eqref{eq:phistarHbd}.

For its second part, we consider a pair $\nu,\nu'$ of distributions in $\cDall$,
we set any $x \in [\Ed(\nu'), \Ed(\nu)]$, and we apply twice the bound of the first part
to get
\[
\phi^\star_{\nu'}(x) + \phi^\star_{\nu}(x) \geq 2 \bigl( x - \Ed(\nu') \bigr)^2
+ 2 \bigl( x - \Ed(\nu) \bigr)^2\,.
\]
From the definition of~$\Phi$, it follows that
\[
\Phi(\nu',\nu) \geq
\inf_{x \in [\Ed(\nu'), \Ed(\nu)]} \Bigl\{ 2\bigl( x - \Ed(\nu') \bigr)^2
+ 2\bigl( x - \Ed(\nu) \bigr)^2 \Bigr\} = \bigl( \Ed(\nu') - \Ed(\nu) \bigr)^2\,.
\]
This corresponds to the second part of~\eqref{eq:phistarHbd}.
\end{proof}

\section{Proofs and details for Section~\ref{sec:Phi=L}: Rewriting of $\Phi$ as $\chern$}	
\label{app:phi=L}

We use the notation of Sections~\ref{sec:Linf} and~\ref{sec:SR} and discuss conditions on models guaranteeing that $\Phi = \chern$, i.e., that~\eqref{eq:Phi=L} holds. We do so for $\cD = \cDall$ in Appendix~\ref{sec:appcDall}
and for canonical one-parameter exponential families in Appendix~\ref{app:cDexp}.
Based on these two examples, we provide a set of conditions for general models, in Appendix~\ref{app:Phi=L_condtion}.
A building block of these results is that for all these models $\cD$, the functions $\Linfeq(\,\cdot\,, \nu)$ and $\Lsupeq(\,\cdot\,, \nu)$ dominate
the Fenchel-Legendre transform $\phi^\star_\nu$ defined in~\eqref{eq:defphistar}; we prove this in Appendix~\ref{app:C:deb}.

All proofs of this section are immediate adaptations of a rather standard result, stated, among others, but in a slightly different form (and for the model $\cD$
of all real-valued distributions with a first moment), by \citet[Exercise~4.13]{BLM16}.

\begin{remark}
\label{rk:Kinf-regret}
This rewriting of $\Linfeq(\,\cdot\,, \nu)$ or $\Lsupeq(\,\cdot\,, \nu)$
as $\phi^\star_\nu$ claimed, e.g., by Lemma~\ref{lem:duality},
can be seen as a counterpart to a similar rewriting of the $\cK_{\inf}$ as the supremum of a function of~$\lambda \in [0,1]$.
More precisely, we recall (see Remark~\ref{rk:Kinf})
that the $\cK_{\inf}$ function is defined, for $\nu \in \cDall$ and $x \in [0, 1]$, as
\[
\cK_{\inf}(\nu, x) = \inf \bigl\{ \KL(\nu, \zeta) : \ \zeta \in \cDall \ \ \mbox{\rm s.t.} \ \ \Ed(\zeta) > x \bigr\}\,,
\]
and \citet[Theorem 2]{honda_non-asymptotic_2015}---see also~\citealp[Lemma 18]{KLUCBS}---show that
\[
\cK_{\inf}(\nu, x) = \sup_{0 \leq \lambda \leq 1} \E\Biggl[ \ln \biggl( 1-\lambda \frac{X - x}{1-x}\biggr) \Biggr] \,,
\]
where $X$ is a random variable distributed according to $\nu$.
In both cases, for $\Linfeq(\,\cdot\,, \nu)$ or $\Lsupeq(\,\cdot\,, \nu)$,
and for $\cK_{\inf}$, being able to rewrite the infimum of Kullback-Leibler divergences as a supremum is
not unexpected: a given Kullback-Leibler divergence can be formulated as a supremum, see~\eqref{eq:KLmax},
and equalities between $\inf \sup$ and $\sup \inf$ holds under suitable assumptions (provided, e.g.,
by Sion's lemma).
\end{remark}

\subsection{$\Linfeq(\,\cdot\,, \nu)$ and $\Lsupeq(\,\cdot\,, \nu)$ dominate $\phi^\star_\nu$}
\label{app:C:deb}

This domination is a consequence of a variational formula~\eqref{eq:KLmax} for the Kullback-Leibler divergences.

\begin{lemma} \label{lem:variational}
For all models $\cD$ containing distributions with finite first moments, for all distributions $\nu \in \cD$,
\[
\forall x \leq \Ed(\nu), \quad \phi^\star_\nu(x) \leq \Linfeq(x,\nu)
\qquad \mbox{and} \qquad
\forall x \geq \Ed(\nu), \quad \phi^\star_\nu(x) \leq \Lsupeq(x,\nu) \,.
\]
\end{lemma}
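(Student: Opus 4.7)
The plan is to use the variational representation of the Kullback-Leibler divergence (referenced as~\eqref{eq:KLmax} in the paper) applied to linear test functions, so as to turn any $\zeta \in \cD$ competing in the infimum into a lower bound of the Legendre-type form $\lambda\,\Ed(\zeta) - \phi_\nu(\lambda)$. Concretely, picking the test function $y \mapsto \lambda y$ in the variational formula yields, for every $\lambda \in \R$,
\[
\KL(\zeta,\nu) \,\geq\, \lambda\,\Ed(\zeta) - \phi_\nu(\lambda)\,,
\]
which is exactly the linearized inequality underlying the Cramér--Chernoff bound.

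First I would prove the $\Linfeq$ part. Fix $x \leq \Ed(\nu)$ and any admissible $\zeta \in \cD$, i.e., with $\Ed(\zeta) \leq x$. For $\lambda \leq 0$, the inequality $\Ed(\zeta) \leq x$ flips under multiplication by $\lambda$, giving $\lambda\,\Ed(\zeta) \geq \lambda x$, hence
\[
\KL(\zeta,\nu) \,\geq\, \lambda x - \phi_\nu(\lambda)\,.
\]
Taking the supremum over $\lambda \leq 0$ and invoking~\eqref{eq:phistar-cr-decr} from Appendix~\ref{sec:app-SR-CR}, which states that for $x \leq \Ed(\nu)$ the sup of $\lambda x - \phi_\nu(\lambda)$ over $\lambda \leq 0$ already equals the full sup $\phi^\star_\nu(x)$, yields $\KL(\zeta,\nu) \geq \phi^\star_\nu(x)$. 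Since this holds for every admissible $\zeta$, taking the infimum on the left delivers $\Linfeq(x,\nu) \geq \phi^\star_\nu(x)$, as claimed.

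The $\Lsupeq$ part is entirely symmetric: for $x \geq \Ed(\nu)$ and $\zeta$ with $\Ed(\zeta) \geq x$, one uses $\lambda \geq 0$ so that $\lambda\,\Ed(\zeta) \geq \lambda x$, and then applies the analogue of~\eqref{eq:phistar-cr-decr} valid on $\lambda \geq 0$ for $x \geq \Ed(\nu)$. I do not anticipate any real obstacle here: the sole subtlety is remembering that the sign of $\lambda$ must be chosen so that the inequality on expectations goes the right way, and then recognising that restricting $\lambda$ to a half-line does not shrink the supremum, thanks to Jensen's inequality $\phi_\nu(\lambda) \geq \lambda\,\Ed(\nu)$ applied exactly as in~\eqref{eq:Jensen-phi}.
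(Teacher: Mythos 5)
Your proof is correct and follows essentially the same route as the paper: both rest on the variational formula~\eqref{eq:KLmax} applied to linear test functions $Y = \lambda\,\idR$, combined with the observation (via Jensen's inequality, as in~\eqref{eq:Jensen-phi}) that restricting $\lambda$ to the appropriate half-line does not shrink the supremum. The only cosmetic difference is that the paper first derives $\KL(\zeta,\nu) \geq \phi^\star_\nu\bigl(\Ed(\zeta)\bigr)$ and then invokes the monotonicity of $\phi^\star_\nu$ on $\bigl(-\infty,\Ed(\nu)\bigr]$, whereas you fold that monotonicity step into the choice $\lambda \leq 0$ directly via~\eqref{eq:phistar-cr-decr}; these are equivalent.
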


\begin{proof}
We rely on a key variational formula for the Kullback-Leibler divergence, see \citet[Corollary~4.15]{BLM16}:
for all distributions $\nu,\,\nu'$ over~$\R$,
\begin{align}
\nonumber
\KL(\nu', \nu) & = \sup \Bigl\{ \E_{\nu'}[Y] - \ln \E_{\nu} \big[\e^{Y}\big] : \ \mbox{r.v.} \ Y \in \mathbb{L}^1(\nu') \ \
\mbox{s.t.} \ \ \E_{\nu}\big[\e^{Y}\big] < + \infty \Bigr\}\,, \\
\label{eq:KLmax}
& = \sup \Bigl\{ \E_{\nu'}[Y] - \ln \E_{\nu} \big[\e^{Y}\big] : \ \mbox{r.v.} \ Y \in \mathbb{L}^1(\nu') \Bigr\}\,,
\end{align}
where the supremum is over random variables $Y : \R \to \R$ with a finite first moment with respect to $\nu'$, and
where $\E_{\nu}$ and $\E_{\nu'}$ indicate that expectations are relative to $\nu$ and $\nu'$, respectively.
In particular, when $\nu$ and $\nu'$ lie in $\cD$, they admit finite first moments,
hence all random variables of the form $Y = \lambda \, \idR$ are $\nu'$--integrable, where
$\idR$ denotes the identity function over $\R$ and where
$\lambda \in \R$. We have $\E_{\nu'}[Y] = \lambda \, \Ed(\nu')$.
A consequence of~\eqref{eq:KLmax} and of the definition~\eqref{eq:defphistar} of $\phi^\star_\nu$ is therefore that
\begin{equation}
\label{eq:KLvarlambdaid}
\KL(\nu', \nu) \geq \sup_{\lambda \in \R} \Bigl\{ \lambda \, \Ed(\nu') - \ln \E_{\nu}\big[\e^{\lambda \idR}\big] \Bigr\} =
\phi^\star_\nu \bigl( \Ed(\nu') \bigr)\,.
\end{equation}
Using the variations of $\phi^\star_\nu$ indicated at the end of Appendix~\ref{sec:app-SR-CR}, we see
that
\[
\phi^\star_\nu \bigl( \Ed(\nu') \bigr) \geq \phi^\star_\nu(x) \quad \mbox{when} \ \ \Ed(\nu') \leq x \leq \Ed(\nu)
\quad \mbox{or} \quad \Ed(\nu') \geq x \geq \Ed(\nu)\,.
\]
Therefore, taking an infimum in~\eqref{eq:KLvarlambdaid} yields, when $x \leq \Ed(\nu)$,
\[
\Linfeq(x,\nu) = \inf \bigl\{ \KL(\nu', \nu) : \Ed(\nu') \leq x \bigr\} \geq \phi^\star_\nu(x)\,,
\]
and similarly for the other claimed inequality.
\end{proof}

\subsection{The case of $\cDall$}
\label{sec:appcDall}

In this section, we focus on the model $\cDall$ and prove that the inequalities of Lemma~\ref{lem:variational} are in fact equalities, as claimed by Lemma~\ref{lem:duality}, which we restate below. This yields, in particular, the target
equality~\eqref{eq:Phi=L}, as discussed after the statement of Lemma~\ref{lem:duality} in the main body of the article.

\duality*

The lemma holds for all $x \in \R$, that is, even outside of the $[0,1]$ interval,
though the proof reveals that when $x$ is smaller than the lower end $m(\nu)$ of the support of $\nu$,
we actually have $\phi^\star_\nu(x) = \Linfeq(x,\nu) = +\infty$. The counterpart statement
$\phi^\star_\nu(x) = \Lsupeq(x,\nu) = + \infty$ holds for $x$ larger than the
upper end $M(\nu)$ of the support of~$\nu$.
The pieces of notation $m(\nu)$ and $M(\nu)$ were formally defined in Appendix~\ref{sec:spec-cDall}. \\

\begin{proof}
Note first that by Lemma~\ref{lem:variational}, it suffices to prove that
\[
\forall x \leq \Ed(\nu), \quad \phi^\star_\nu(x) \geq \Linfeq(x,\nu)
\qquad \mbox{and} \qquad
\forall x \geq \Ed(\nu), \quad \phi^\star_\nu(x) \geq \Lsupeq(x,\nu) \,.
\]
We only deal with the first inequality, namely $\Linfeq(x,\nu) \leq \phi^\star_\nu(x)$ for $x \leq \Ed(\nu)$, as the other one
may be obtained by symmetric arguments.

In the case $x = \Ed(\nu)$, we have $\phi^\star_\nu\bigl( \Ed(\nu) \bigr) = 0$,
as stated at the end of Appendix~\ref{sec:app-SR-CR}, and $\Linfeq\bigl( \Ed(\nu), \nu \bigr) = 0$,
as can be seen by taking $\zeta = \nu$ in the infimum defining $\Linfeq$.
We therefore only consider $x < \Ed(\nu)$ in the sequel.
We will rely on the standard fact that, by Hölder's inequality,
the logarithmic moment-generating function
\[
\phi_{\nu} : \lambda \in \R \longmapsto \ln \E_{\nu}\big[\e^{\lambda \idunit}\big] \,,
\]
is convex, where $\idunit$ denotes the identity function on $[0, 1]$.
Also, by two applications of a standard theorem of differentiation under the integral, given that $\nu$
is supported by $[0,1]$, we have that $\phi_{\nu}$ is continuously differentiable over $\R$, with derivative
\[
\phi'_{\nu} : \lambda \in \R \longmapsto \frac{\E_{\nu}\big[\idunit \, \e^{\lambda \idunit}\big]}{\E_{\nu}\big[\e^{\lambda \idunit}\big]} \,.
\]
By convexity of $\phi_{\nu}$, this derivative is non-decreasing. Therefore, the limit of $\phi'_{\nu}$ at $-\infty$ exists; we denote it
by~$\ell$ and have that a priori $\ell \in \{-\infty\} \cup \R$.
We now prove that actually,
\begin{equation}
\label{eq:ell-mnu}
\ell \eqdef \lim_{\lambda \to -\infty} \phi'_{\nu}(\lambda) = m(\nu)\,.
\end{equation}
On the one hand, by definition of $m(\nu)$, we have $\idunit \geq m(\nu)$ $\nu$-a.s., which entails $\phi'_{\nu}(\lambda) \geq m(\nu)$ for all $\lambda \in \R$, and hence, $\ell \geq m(\nu)$.
On the other hand, as $\phi'_{\nu}$ is non-decreasing,
it is always larger than its limit $\ell$ at $-\infty$:
\begin{align}
\forall \lambda \in \R, \qquad
\phi'_{\nu}(\lambda) \geq \ell\,, \qquad & \mbox{thus,} \qquad
\E_{\nu}\Big[ \bigl(\idunit-\ell\bigr) \, \e^{\lambda \idunit}\Big] \geq 0\,, \label{eq:limit=m(nu)1}\\
& \mbox{or} \qquad
\E_{\nu}\Big[\bigl(\idunit-\ell\bigr) \, \e^{\lambda (\idunit-\ell)}\Big] \geq 0\,. \label{eq:limit=m(nu)2}
\end{align}
The last inequality and limit arguments as $\lambda \to -\infty$ impose that $\idunit-\ell \geq 0$ $\nu$-a.s., which in turn entails that $\ell \leq m(\nu)$.
This concludes the proof of~\eqref{eq:ell-mnu}.

The various properties exhibited above for $\phi_\nu$, including the fact that
the derivative $\phi'_{\nu}$ takes values in $\bigl[ m(\nu), +\infty \bigr)$, entail that the function
\[
\smash{\Lambda : \lambda \in \R \longmapsto \lambda x - \phi_{\nu}(\lambda)}
\]
is concave, continuously differentiable, with a non-increasing derivative $\Lambda'$ taking values in the interval $\bigl( -\infty, x-m(\nu) \bigr]$
and with limit $x-m(\nu)$ at $-\infty$.

We split the analysis of the case $x < \Ed(\nu)$ into three sub-cases,
depending on the respective positions of $x$ and $m(\nu)$, and recall that we want to show that
$\Linfeq(x,\nu) \leq \phi^\star_\nu(x)$. \smallskip

\emph{Case~1: $x > m(\nu)$.}~~By Jensen's inequality~\eqref{eq:Jensen-phi} and given that we consider $x < \Ed(\nu)$, the limit of $\Lambda$ at $+\infty$ equals $-\infty$. The limit of $\Lambda$ at $-\infty$ also equals $-\infty$, as the derivative $\Lambda'$ has limit $x-m(\nu) > 0$ at $-\infty$.
By concavity of $\Lambda$ and the fact that $\Lambda'$ is continuous, this implies the existence of some $\lambda^\star \in \R$ such that
\[
\Lambda'(\lambda^\star) = x - \phi_\nu'(\lambda^\star) = 0 \qquad \mbox{and} \qquad
\phi_\nu^\star(x) = \sup_{\lambda \in \R} \big\{ \Lambda(\lambda) \big\} = \Lambda(\lambda^\star) \,.
\]
Denoting by $\zeta_{\lambda^\star}$ the distribution absolutely continuous with respect to $\nu$ with density
\[
\frac{\d\zeta_{\lambda^\star}}{\d\nu} = \frac{\e^{\lambda^\star \idunit}}{\E_{\nu}\big[\e^{\lambda^\star \idunit}\big]}
= \e^{\lambda^\star \idunit - \phi_{\nu}(\lambda^\star)}\,,
\]
we have $\E_{\zeta_{\lambda^\star}} \!\bigl[ \idunit \bigr] = \Ed\bigl(\zeta_{\lambda^\star}\bigr) = \phi_\nu'(\lambda^\star) = x$. Therefore, by definition of $\Linfeq(x,\nu)$ and of the Kullback-Leibler divergence,
\[
\Linfeq(x,\nu) \leq \KL\bigl(\zeta_{\lambda^\star},\nu\bigr) = \E_{\zeta_{\lambda^\star}} \! \left[ \ln \frac{\d\zeta_{\lambda^\star}}{\d\nu} \right]
= \lambda^\star \, \E_{\zeta_{\lambda^\star}} \!\bigl[ \idunit \bigr] - \phi_{\nu}(\lambda^\star)
= \Lambda(\lambda^\star) = \phi^\star_\nu(x)\,.
\]

\emph{Case~2: $x = m(\nu)$.}~~In that case, $\Lambda' \to 0$ at $-\infty$ and $\Lambda'$ is non-increasing,
thus $\Lambda' \leq 0$ on $\R$ and $\Lambda$ is non-increasing on $\R$. Thus,
\[
\phi_\nu^\star\bigl(m(\nu)\bigr) = \sup_{\lambda \in \R} \big\{ \Lambda(\lambda) \big\} = \lim_{\lambda \to -\infty} \Lambda(\lambda) = \lim_{\lambda \to -\infty} - \ln \E_\nu \Bigl[ \e^{\lambda(\idunit-m(\nu))} \Bigr]\,.
\]
By monotone convergence
based on $\idunit - m(\nu) \geq 0$ $\nu$-a.s.,
\[
\lim_{\lambda \to -\infty} - \ln \E_\nu \Bigl[ \e^{\lambda(\idunit-m(\nu))} \Bigr] = - \ln \nu \bigl\{ m(\nu) \bigr\}\,,
\]
whether $\nu \bigl\{ m(\nu) \bigr\}$ is positive or null.
Moreover, Lemma~\ref{lm:Linfstr-egal-Linfeq} states that
\[
\Linfeq \bigl( m(\nu), \nu \bigr) = - \ln \nu \bigl\{ m(\nu) \bigr\}\,.
\]
We therefore have $\Linfeq(x,\nu) = \phi^\star_\nu(x)$ in this case. \smallskip

\emph{Case~3: $x < m(\nu)$.}~~In that case,
as $\Lambda' \to x-m(\nu) < 0$ at $-\infty$, we get that
$\Lambda\to +\infty$ at $-\infty$, thus $\phi_\nu^\star(x) = \sup \Lambda = +\infty$.
Now, no distribution $\zeta \in \cDall$ with $\Ed(\zeta) \leq x$, if some exists,
can be absolutely continuous with respect to $\nu$; indeed,
$x < m(\nu)$ imposes that $\zeta$ puts some
probability mass to the left of the support of $\nu$. Therefore, $\KL(\zeta,\nu) = +\infty$.
All in all, $\Linfeq( x, \nu)$ appears as the infimum of either an empty set
or of $+\infty$ values, so that $\Linfeq( x, \nu) = +\infty$.
In this case as well, $\Linfeq(x,\nu) = \phi^\star_\nu(x)$, both being equal to $+\infty$.
\end{proof}

\subsection{The case of canonical one-parameter exponential models $\cDexp$}	
\label{app:cDexp}

In this section, we show that the target equality~\eqref{eq:Phi=L} is satisfied by
so-called canonical one-parameter exponential families $\cDexp$. Before we do so, we recall
the definition and the properties of the latter.

\paragraph{Canonical one-parameter exponential families.}
We follow largely the exposition by \citet[Section~4]{AoS13};
more details, including the proofs of the stated properties may be found in the monograph by~\cite{LC98}.
A (regular) canonical one-parameter exponential family $\cDexp$ is a set of distributions $\nu_\theta$
indexed by $\theta \in \Theta$, all absolutely continuous with respect to some measure $\rho$ on $\R$,
with densities given by
\begin{equation}	
\label{eq:density_cDexp}
\frac{\d \nu_\theta}{\d \rho} = \exp\bigl(\theta \, \idR - b(\theta)\bigr) \,,
\end{equation}
for some smooth enough normalization function $b$. More precisely, $b$ is assumed to be twice differentiable.
We also assume that $\Theta$ is the natural parameter space, i.e., that $\Theta$ contains all possible parameters for $\rho$:
\[
\Theta = \Biggl\{ \theta \in \R \,:\, \int_{\R} \exp(\theta y) \,\d\rho(y) < + \infty \Biggr\} \,,
\]
and that $\Theta$ is an open interval (this latter fact is what regularity stands for).
A closed-form expression of $b$ is: for all $\theta \in \Theta$,
\begin{equation}
\label{eq:expression_b(theta)}
b(\theta) = \ln \int_\R e^{\theta y} \,\d\rho(y) \,.
\end{equation}
The derivative $b'$ of $b$ is a continuous function, by assumption, and it may be shown
that it is increasing, so that $b'$ is a one-to-one mapping with a continuous inverse $(b')^{-1}$.
In addition, it can be seen, by a differentiation under the integral sign, that
$\Ed(\nu_\theta) = b'(\theta)$ for all $\theta \in \Theta$.
Therefore, the distributions in $\cDexp$ may be rather parameterized by their expectations.
We denote by $\cM = b'(\Theta)$ the open interval of the expectations of distributions in $\cDexp$,
and let $\mu_-$ and $\mu_+$ be its lower and upper ends:
\[
\cM = (\mu_-,\mu_+)\,.
\]
For each $x \in \cM$, there exists a unique distribution in $\cDexp$ with expectation $x$, namely,
$\nu_{(b')^{-1}(x)}$.

\paragraph{Kullback-Leibler divergences for $\cDexp$.}
We may also parameterize the
Kullback-Leibler divergence function by the expectations: we define, for all $\theta_1, \theta_2 \in \Theta$,
\begin{equation}
\label{eq:d-KL}
d\bigl(\Ed(\nu_{\theta_1}), \Ed(\nu_{\theta_2})\bigr) \eqdef \KL(\nu_{\theta_1}, \nu_{\theta_2}) \,.
\end{equation}
This defines a divergence $d$ which is strictly convex and differentiable on the open set $\cM \times \cM$. In particular, $d$ is continuous, is such that $d(\mu, \mu') = 0$ if and only if $\mu = \mu'$, and, for all $\mu \in \cM$, both $d(\mu, \,\cdot\,)$ and $d(\,\cdot\,, \mu)$ are decreasing on $(\mu_-, \mu]$, and increasing on $[\mu, \mu^+)$.
In the following, we extend $d$ to $\R \times \R$ by $+\infty$ values outside of $\cM \times \cM$.

A direct application of the continuity and monotonicity properties of $d$ is that all functions $\Linfstr$, $\Linfeq$, $\Lsupstr$, $\Lsupeq$ coincide
with $d$ in the sense of the stated equalities~\eqref{eq:cont-d1} and~\eqref{eq:cont-d2}. Indeed and for instance, we have,
for $\nu \in \cDexp$ and $x \leq \Ed(\nu)$ with $x \in \cM$:
\[
\Linfstr(x, \nu) = \inf_{\mu < x} \bigl\{ d(\mu, \nu) \bigr\} = \lim_{\substack{\mu \to x \\ \mu < x}} d(\mu, \nu) = d(x, \nu) \,.
\]
When $x \notin \cM$, by the convention on the infimum of an empty set, $\Linfstr(x, \nu) = +\infty$,
while by our definition of $d$ outside $\cM \times \cM$, we also have $d(x, \nu) = +\infty$.
But as Lemma~\ref{lem:duality-dexp} below illustrates, we will only be interested on the behaviors on $\cM \times \cM$.

We now state a monotonicity property of the Chernoff-information-type quantity $L$ defined for exponential models in~\eqref{eq:Lchernoff_Dexp}.
This property was referred to in Example~\ref{ex:L-Ber}, when indicating that arms can be equivalently ranked
in descending expectations or ascending values of $L(\,\cdot\,, \mu^\star)$.

\begin{lemma} \label{lm:monotony_chernoff_Dexp}
Consider a canonical one-parameter exponential family $\cDexp$ and fix any $\mu \in \cM$.
Then $L(\,\cdot\,, \mu)$ is non-increasing on $(\mu_-, \mu]$.
\end{lemma}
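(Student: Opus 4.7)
The plan is to fix $\mu'_1 < \mu'_2 \leq \mu$ in $(\mu_-, \mu]$ and show that $L(\mu'_1, \mu) \geq L(\mu'_2, \mu)$, exhibiting an admissible point for the $\mu'_2$-problem whose cost does not exceed $L(\mu'_1, \mu)$. The only ingredients needed are the monotonicity properties of $d$ recalled just before the lemma: for each fixed $y \in \cM$, the map $x \mapsto d(x, y)$ is decreasing on $(\mu_-, y]$, and for each fixed $x \in \cM$, the map $y \mapsto d(x, y)$ is decreasing on $(\mu_-, x]$. Continuity of $d$ and compactness of $[\mu'_1, \mu]$ ensure that a minimizer $x^\star_1 \in [\mu'_1, \mu]$ realizing $L(\mu'_1, \mu) = d(x^\star_1, \mu'_1) + d(x^\star_1, \mu)$ exists; I would then split the argument according to the location of $x^\star_1$ relative to $\mu'_2$.

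In the case $x^\star_1 \geq \mu'_2$, the point $x^\star_1$ already lies in $[\mu'_2, \mu]$, hence it is admissible for $L(\mu'_2, \mu)$. Moreover, since $\mu'_1 < \mu'_2 \leq x^\star_1$ and $y \mapsto d(x^\star_1, y)$ is decreasing on $(\mu_-, x^\star_1]$, one gets $d(x^\star_1, \mu'_2) \leq d(x^\star_1, \mu'_1)$; adding $d(x^\star_1, \mu)$ to both sides yields $L(\mu'_2, \mu) \leq L(\mu'_1, \mu)$.

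In the remaining case $x^\star_1 < \mu'_2$, the point $x^\star_1$ is no longer admissible, so I would use $x = \mu'_2$ as the candidate for $L(\mu'_2, \mu)$, giving $L(\mu'_2, \mu) \leq d(\mu'_2, \mu'_2) + d(\mu'_2, \mu) = d(\mu'_2, \mu)$. The monotonicity of $x \mapsto d(x, \mu)$ on $(\mu_-, \mu]$ applied with $x^\star_1 < \mu'_2 \leq \mu$ then gives $d(\mu'_2, \mu) \leq d(x^\star_1, \mu) \leq d(x^\star_1, \mu'_1) + d(x^\star_1, \mu) = L(\mu'_1, \mu)$, so again $L(\mu'_2, \mu) \leq L(\mu'_1, \mu)$. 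There is no real obstacle here—the proof is a short calculus-style argument resting entirely on the ``V-shape'' of $d$ in each of its arguments; the only mild subtlety is the case split on the position of the minimizer $x^\star_1$, which is needed because the feasibility interval shrinks from $[\mu'_1, \mu]$ to $[\mu'_2, \mu]$ when passing from one problem to the other.
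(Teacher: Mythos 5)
Your proof is correct and takes essentially the same route as the paper's: both arguments rest on the same two-case split according to the position of a single point relative to the middle parameter, and on the same ``V-shape'' monotonicity of $d$ in each argument. The only (cosmetic) difference is that the paper lower-bounds the cost $d(y,\mu_2)+d(y,\mu)$ at \emph{every} feasible point $y$ of the wider problem by $L(\mu_1,\mu)$, whereas you upper-bound $L(\mu'_2,\mu)$ by exhibiting \emph{one} good feasible point; your appeal to continuity and compactness to get a minimizer $x^\star_1$ is valid (since $[\mu'_1,\mu]\subset\cM$) but is not needed in the paper's formulation.
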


\begin{proof}
Fix $\mu_- < \mu_2 \leq \mu_1 \leq \mu$. To get the
desired inequality $L(\mu_2, \mu) \geq L(\mu_1, \mu)$,
it suffices to show, by~\eqref{eq:Lchernoff_Dexp}, that
\begin{equation}	
\label{eq:proof_monotony_chernoff_Dexp}
\forall y \in [\mu_2, \mu], \qquad
d(y, \mu_2) + d(y, \mu) \geq \min_{x \in [\mu_1, \mu]} d(x, \mu_1) + d(x, \mu) \eqdef L(\mu_1,\mu) \,.
\end{equation}
We distinguish two cases. If $\mu_2 \leq \mu_1 \leq y \leq \mu$, then, since $d(y, \,\cdot\,)$ is increasing on $(\mu_-, y]$,
we have $d(y,\mu_2) \geq d(y,\mu_1)$, from which the inequality~\eqref{eq:proof_monotony_chernoff_Dexp} follows by
considering $x = y$. If $\mu_2 \leq y \leq \mu_1 \leq \mu$, then similarly $d(y, \mu) \geq d(\mu_1, \mu)$,
which yields
\[
\underbrace{d(y, \mu_2)}_{\geq 0} + d(y, \mu) \geq d(\mu_1, \mu)
= \underbrace{d(\mu_1, \mu_1)}_{=0} + d(\mu_1, \mu)\,,
\]
from which the inequality~\eqref{eq:proof_monotony_chernoff_Dexp} follows by
considering $x = \mu_1$.
\end{proof}

\paragraph{A slightly weaker version of Lemma~\ref{lem:duality}, sufficient for our purposes.}
We may now come back  to the proof of the target equality~\eqref{eq:Phi=L} for canonical one-parameter exponential families.
The following slightly weaker version of Lemma~\ref{lem:duality} is enough to yield~\eqref{eq:Phi=L},
given the rewritings~\eqref{eq:cont-d1} and~\eqref{eq:cont-d2}.

\begin{lemma}
\label{lem:duality-dexp}
Consider a canonical one-parameter exponential family $\cD = \cDexp$.
For all $\nu \in \cDexp$,
\[
\forall x \in \cM, \qquad \phi^\star_\nu(x) = d\bigl(x,\Ed(\nu)\bigr)\,.
\]
\end{lemma}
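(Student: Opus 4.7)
{\sk}
Write $\nu = \nu_{\theta_0}$ for some $\theta_0 \in \Theta$, so that $\Ed(\nu) = b'(\theta_0)$. The strategy is to compute both $\phi^\star_\nu(x)$ and $d\bigl(x,\Ed(\nu)\bigr)$ by direct manipulation of the density~\eqref{eq:density_cDexp}, and to observe that they yield the same closed-form expression.

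\emph{Computation of $\phi^\star_\nu$.}~~Using the explicit form~\eqref{eq:expression_b(theta)} of $b$, for any $\lambda \in \R$,
\[
\phi_\nu(\lambda) = \ln \int_\R \e^{\lambda y} \exp\bigl(\theta_0 y - b(\theta_0)\bigr) \,\d\rho(y)
= \begin{cases} b(\lambda+\theta_0) - b(\theta_0) & \text{if } \lambda+\theta_0 \in \Theta, \\ +\infty & \text{otherwise.} \end{cases}
\]
Values of $\lambda$ with $\lambda+\theta_0 \notin \Theta$ contribute $-\infty$ to the supremum defining $\phi^\star_\nu(x)$, hence, via the change of variable $\theta = \lambda+\theta_0$,
\[
\phi^\star_\nu(x) = \sup_{\theta \in \Theta} \bigl\{ (\theta - \theta_0)\, x - b(\theta) + b(\theta_0) \bigr\}.
\]

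\emph{Identification of the maximizer.}~~Since $b'$ is (strictly) increasing by assumption, $b$ is strictly convex on the open interval $\Theta$, so the map $\theta \mapsto (\theta-\theta_0) x - b(\theta) + b(\theta_0)$ is strictly concave and smooth on $\Theta$, with derivative $x - b'(\theta)$. As $x \in \cM = b'(\Theta)$, there exists a unique $\theta_x \in \Theta$ such that $b'(\theta_x) = x$, and this $\theta_x$ is the unique critical point, hence the global maximizer. Plugging in,
\[
\phi^\star_\nu(x) = (\theta_x - \theta_0)\, x - b(\theta_x) + b(\theta_0).
\]

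\emph{Computation of $d\bigl(x,\Ed(\nu)\bigr)$ and conclusion.}~~By definition~\eqref{eq:d-KL}, $d\bigl(x,\Ed(\nu)\bigr) = \KL(\nu_{\theta_x}, \nu_{\theta_0})$. Using~\eqref{eq:density_cDexp} to compute the log-ratio of densities and the fact $\Ed(\nu_{\theta_x}) = b'(\theta_x) = x$,
\[
\KL(\nu_{\theta_x}, \nu_{\theta_0}) = \int_\R \bigl[ (\theta_x - \theta_0)\, y - b(\theta_x) + b(\theta_0) \bigr] \,\d\nu_{\theta_x}(y)
= (\theta_x - \theta_0)\, x - b(\theta_x) + b(\theta_0),
\]
which matches the expression obtained above for $\phi^\star_\nu(x)$.

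The only non-trivial point is ensuring that the supremum defining $\phi^\star_\nu(x)$ is indeed attained at the interior critical point $\theta_x$ rather than being approached as $\theta$ approaches the endpoints of $\Theta$; this is automatic from strict concavity and the existence of a critical point inside the open domain.
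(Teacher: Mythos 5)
Your proof is correct. It does not follow the paper's primary argument, but it coincides almost exactly with the ``more direct approach'' that the paper itself sketches in the remark immediately following its proof of Lemma~\ref{lem:duality-dexp}. The paper's main proof first invokes Lemma~\ref{lem:variational} (the variational formula for the Kullback--Leibler divergence, which gives $\phi^\star_\nu(x) \leq \Linfeq(x,\nu) = d\bigl(x,\Ed(\nu)\bigr)$ for free), so that it only needs to exhibit a single $\lambda^\star = \theta_2 - \theta_1$ with $\lambda^\star x - \phi_\nu(\lambda^\star) = d\bigl(x,\Ed(\nu)\bigr)$, i.e., only a lower bound on the supremum. You instead evaluate the supremum exactly: you derive the identity $\phi_\nu(\lambda) = b(\lambda+\theta_0)-b(\theta_0)$ on $\Theta - \theta_0$ (correctly noting that $\lambda$ with $\lambda + \theta_0 \notin \Theta$ contributes $-\infty$, since $\Theta$ is the natural parameter space), locate the interior maximizer $\theta_x = (b')^{-1}(x)$ via concavity, and match the resulting closed form with the KL computation. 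What your route buys is self-containedness---no appeal to the variational formula~\eqref{eq:KLmax} is needed---at the modest cost of having to justify that the supremum is attained at the interior critical point, which you do correctly using the monotonicity of $b'$ and the hypothesis $x \in \cM = b'(\Theta)$. Both arguments hinge on the same two computations (the translation identity for $\phi_\nu$ and the closed form of $\KL(\nu_{\theta_x},\nu_{\theta_0})$), so the approaches are close cousins rather than genuinely different.
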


The result of the lemma holds, by conventions, for $x < \mu_-$ or $x > \mu_+$, but does not hold
in general for $x \in \{ \mu_-, \mu_+ \}$. \\

\begin{proof}
By Lemma~\ref{lem:variational}, we only need to show that $\phi^\star_\nu(x) \geq d\bigl(x, \Ed(\nu)\bigr)$.
Given the definition~\eqref{eq:defphistar} of $\phi^\star_\nu$ as a supremum,
it suffices to exhibit a $\lambda^\star \in \R$ such that
\begin{equation} \label{eq:proofPhi=D1}
d\bigl(x, \Ed(\nu)\bigr) = \lambda^\star x - \phi_\nu(\lambda^\star) \,.
\end{equation}
Let $\theta_1 \in \Theta$ be such that $\nu = \nu_{\theta_1}$ and $\theta_2 = (b')^{-1}(x) \in \Theta$ be such that $\Ed(\nu_{\theta_2}) = x$. We will prove~\eqref{eq:proofPhi=D1} with $\lambda^\star = \theta_2 - \theta_1$. Given the closed-form expression of the densities~\eqref{eq:density_cDexp},
the distribution $\nu_{\theta_2}$ is absolutely continuous with respect to $\nu_{\theta_1}$,
with density given by $(\theta_2 - \theta_1) \idR - \bigl(b(\theta_2) - b(\theta_1)\bigr)$.
Therefore, by definition of the Kullback-Leibler divergence,
\begin{align}
\nonumber
d\bigl(x, \Ed(\nu)\bigr) &= \KL( \nu_{\theta_2}, \nu_{\theta_1}) = \E_{\nu_{\theta_2}} \! \left[ \ln \frac{\d \nu_{\theta_2}}{\d\nu_{\theta_1}} \right]
= \E_{\nu_{\theta_2}} \! \Bigl[ (\theta_2 - \theta_1) \, \idR - \bigl(b(\theta_2) - b(\theta_1)\bigr) \Bigr] \\
\label{eq:d:KL:calc}
&= (\theta_2 - \theta_1) \, \Ed(\nu_{\theta_2}) - \bigl(b(\theta_2) - b(\theta_1)\bigr) = \lambda^\star x - \bigl(b(\theta_2) - b(\theta_1)\bigr) \,.
\end{align}
To obtain~\eqref{eq:proofPhi=D1}, it only remains to show that $b(\theta_2) - b(\theta_1) =  \phi_\nu(\lambda^\star)$.
Using the closed-form expressions~\eqref{eq:expression_b(theta)} of $b$ at $\theta_2$ and~\eqref{eq:density_cDexp}
of the density at $\theta_1$, we obtain
\begin{align}
b(\theta_2) &= \ln \int_\R e^{\theta_2 y} \,\d\rho(y) = b(\theta_1) +
\ln \int_\R e^{(\theta_2-\theta_1) y} \overbrace{e^{\theta_1 y - b(\theta_1)} \,\d\rho(y)}^{= \d\nu_{\theta_1}(y) = \d\nu(y)} \nonumber \\
	&= b(\theta_1) + \ln \int_\R e^{\lambda^\star y} \,\d\nu(y) = b(\theta_1) + \phi_\nu(\lambda^\star) \,, 	\label{eq:proofPhi=D4}
\end{align}
which concludes the proof.
\end{proof}

\begin{remark}
A more direct approach bypassing Lemma~\ref{lem:variational} can be followed
with $\cDexp$ models, along the following lines.
The result~\eqref{eq:proofPhi=D4} can be generalized into
\begin{equation}
\label{eq:phi=diff-b}
\forall \theta \in \Theta, \qquad \phi_\nu(\theta-\theta_1) = b(\theta)-b(\theta_1) \,.
\end{equation}
As $b$ is differentiable on $\Theta$, the function $\phi_\nu$ is also differentiable; at $\lambda^\star =
\theta_2-\theta_1$, we have
\[
\phi_\nu'(\lambda^\star) = \phi_\nu'(\theta_2-\theta_1) = b'(\theta_2) = x \,.
\]
Thus, the derivative of the strictly concave function $\Lambda : \lambda \in \R \longmapsto \lambda x - \phi_{\nu}(\lambda)$
vanishes at $\lambda^\star$, which is therefore the argument of its maximum:
$\phi^\star_\nu(x) = \Lambda(\lambda^\star)$.
The closed-form calculation~\eqref{eq:d:KL:calc}
and the rewriting~\eqref{eq:phi=diff-b} then lead to Lemma~\ref{lem:duality-dexp}.
\end{remark}

\subsection{Conditions for general models}	
\label{app:Phi=L_condtion}

In this section, we extend Lemma~\ref{lem:duality}, and thus the target equality~\eqref{eq:Phi=L}, to more general models.
We did so by mimicing the proof of Lemma~\ref{lem:duality}: the result below can certainly be improved.
We extend as follows the definitions of the lower and upper ends $m(\nu)$ and $M(\nu)$ of the closed support $\Supp(\nu)$ of
a distribution $\nu$ over $\R$:
\[
m(\nu) = \inf\bigl(\Supp(\nu)\bigr) \in \R \cup \{-\infty\}
\qquad \mbox{and} \qquad
M(\nu) = \sup\bigl(\Supp(\nu)\bigr) \in \R \cup \{+\infty\}\,.
\]

\begin{lemma}
\label{lem:duality-ter}
Consider a model $\cD$ containing distributions $\nu$ over $\R$ with finite first moments
and with exponential moments: $\e^{\lambda \idR} \in \mathbb{L}^1(\nu)$ for all $\lambda \in \R$.
Assume that the model $\cD$ is stable by exponential reweighting of densities:
for all $\nu \in \cD$, for all $\lambda \in \R$, the distribution $\nu_{\lambda}$ with density
\begin{equation}
\label{eq:ass:nu-lambda}
\frac{\d \nu_\lambda}{\d \nu} = \frac{\e^{\lambda \idR}}{\E_{\nu\!}\big[\e^{\lambda \idR}\big]} \qquad \mbox{with respect to} \ \nu
\end{equation}
also belongs to~$\cD$. Assume also that
$\delta_{x}$, the Dirac mass at~$x$,
belongs to $\cD$ whenever there exists
$\nu \in \cD$ with $x \in \bigl\{ m(\nu), M(\nu)\bigr\} \cap \R$
and $\nu \{ x \} > 0$; put differently, if a distribution $\nu \in \cD$ puts
some probability mass on an end~$x$ of its closed support, then the
Dirac mass at $x$ belongs to $\cD$.

Then, for all $\nu \in \cD$,
\[
\forall x \leq \Ed(\nu), \quad \phi^\star_\nu(x) = \Linfeq(x,\nu)
\qquad \mbox{and} \qquad
\forall x \geq \Ed(\nu), \quad \phi^\star_\nu(x) = \Lsupeq(x,\nu) \,.
\]
\end{lemma}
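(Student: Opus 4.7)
The plan is to mimic the structure of the proof of Lemma~\ref{lem:duality}, exploiting the two new hypotheses (stability by exponential reweighting and presence of Diracs at attained endpoints) exactly where the $\cDall$-specific construction broke down. By Lemma~\ref{lem:variational}, we already know $\phi^\star_\nu(x) \leq \Linfeq(x,\nu)$ for $x \leq \Ed(\nu)$ and, symmetrically, $\phi^\star_\nu(x) \leq \Lsupeq(x,\nu)$ for $x \geq \Ed(\nu)$. It is therefore enough to establish the reverse inequalities. The reflection $\idR \leftrightarrow -\idR$ reduces the $\Lsupeq$ statement to the $\Linfeq$ one, and the point $x = \Ed(\nu)$ is settled by picking $\zeta = \nu$ (giving zero on both sides), so I focus on $\Linfeq(x,\nu) \leq \phi^\star_\nu(x)$ at $x < \Ed(\nu)$.

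I would next record the properties of $\phi_\nu$ that follow from the finite-exponential-moment assumption: $\phi_\nu$ is finite and convex on all of $\R$, of class $C^1$ by differentiation under the integral, and its derivative $\phi'_\nu(\lambda) = \E_\nu[\idR \, \e^{\lambda \idR}] / \E_\nu[\e^{\lambda \idR}]$ is non-decreasing and continuous. The derivation in Lemma~\ref{lem:duality} showing $\phi'_\nu(\lambda) \to m(\nu)$ as $\lambda \to -\infty$ extends verbatim---the boundedness of the support was never essential there; only the relations~\eqref{eq:limit=m(nu)1} and~\eqref{eq:limit=m(nu)2} were used, together with the fact that a positive $\nu$-mass on $\{\idR < \ell\}$ would force $\E_\nu[(\idR-\ell) \e^{\lambda(\idR - \ell)}]$ to diverge to $-\infty$, contradicting the non-negativity there.

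I would then split the analysis into the same three cases as in Lemma~\ref{lem:duality}. For $x \in (m(\nu), \Ed(\nu))$---the essential case---continuity and monotonicity of $\phi'_\nu$, together with the limits $\phi'_\nu(-\infty) = m(\nu) < x$ and $\phi'_\nu(0) = \Ed(\nu) > x$, produce some $\lambda^\star \in \R$ with $\phi'_\nu(\lambda^\star) = x$; by concavity of $\Lambda : \lambda \mapsto \lambda x - \phi_\nu(\lambda)$, this $\lambda^\star$ is its global maximizer, so $\phi^\star_\nu(x) = \Lambda(\lambda^\star)$. The stability hypothesis~\eqref{eq:ass:nu-lambda} is precisely what places the exponentially tilted law $\nu_{\lambda^\star}$ inside $\cD$; a direct calculation shows $\Ed(\nu_{\lambda^\star}) = \phi'_\nu(\lambda^\star) = x$ and $\KL(\nu_{\lambda^\star}, \nu) = \lambda^\star x - \phi_\nu(\lambda^\star) = \phi^\star_\nu(x)$, yielding $\Linfeq(x,\nu) \leq \phi^\star_\nu(x)$. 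For $x = m(\nu) > -\infty$, the argument of Lemma~\ref{lem:duality} still gives $\phi^\star_\nu(m(\nu)) = -\ln \nu\bigl\{m(\nu)\bigr\}$, via dominated convergence applied to $\e^{\lambda(\idR - m(\nu))} \in [0,1]$ for $\lambda \leq 0$; if $\nu\bigl\{m(\nu)\bigr\} > 0$, the Dirac hypothesis supplies $\delta_{m(\nu)} \in \cD$ and hence $\Linfeq(m(\nu),\nu) \leq \KL(\delta_{m(\nu)}, \nu) = -\ln \nu\bigl\{m(\nu)\bigr\}$, while $\nu\bigl\{m(\nu)\bigr\} = 0$ makes both sides $+\infty$. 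Finally, for $x < m(\nu)$, both quantities are $+\infty$, exactly as in Case~3 of Lemma~\ref{lem:duality}. The only genuinely non-routine point is confirming that the two model-level hypotheses are invoked in exactly the right places: stability by tilting furnishes the witness $\nu_{\lambda^\star}$ in the interior case, and the Dirac assumption handles the boundary case $x = m(\nu)$ with an atom---without either, the inequality $\Linfeq(x,\nu) \leq \phi^\star_\nu(x)$ can fail.
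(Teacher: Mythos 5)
Your proposal is correct and follows essentially the same route as the paper's proof: reduce to one inequality via Lemma~\ref{lem:variational} and symmetry, establish that $\phi'_\nu$ is continuous, non-decreasing, with limit $m(\nu)$ at $-\infty$, and then treat the three cases $x > m(\nu)$ (using the exponential-tilting stability to exhibit the witness $\nu_{\lambda^\star}$), $x = m(\nu)$ (using the Dirac assumption), and $x < m(\nu)$ (both sides infinite). The only cosmetic difference is that you invoke dominated convergence where the paper uses monotone convergence in the boundary case; both are valid.
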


\begin{proof}
By symmetry and by Lemma~\ref{lem:variational}, we only need to prove that
\begin{equation}	
\label{eq:C3_Phi<=Linf}
\forall x \leq \Ed(\nu), \quad \phi^\star_\nu(x) \geq \Linfeq(x,\nu) \,.
\end{equation}
For $x = \Ed(\nu)$, we have $\phi^\star_\nu\bigl( \Ed(\nu) \bigr) = 0 = \Linfeq\bigl(\Ed(\nu),\nu\bigr)$,
as stated at the end of Appendix~\ref{sec:app-SR-CR} and by taking $\zeta = \nu$ in the infimum defining $\Linfeq$,
respectively.
Before moving to the case $x < \Ed(\nu)$, we establish a few properties of $\phi_\nu$ based on the
assumptions of Lemma~\ref{lem:duality-ter}.
All random variables $\e^{\lambda \idR}$ are $\nu$--integrable, for $\lambda \in \R$, which entails, by
application of a standard theorem of differentiation under the integral sign together with local domination arguments
of the form
\[
\forall \lambda \in (\lambda_-, \lambda_+), \qquad \bigl| \idR \, \e^{\lambda \idR} \bigr| \leq \bigl| \idR \bigr| \, \bigl( \e^{\lambda_- \idR} + \e^{\lambda_+ \idR} \bigr) \leq \bigl( \e^{\idR} + \e^{- \idR} \bigr) \bigl( \e^{\lambda_- \idR} + \e^{\lambda_+ \idR} \bigr)\,,
\]
that $\phi_\nu$ is differentiable over $\R$, with derivative given by
\begin{equation}
\label{eq:cf-phiprimenu}
\phi'_{\nu} : \lambda \in \R \longmapsto \frac{\E_{\nu}\big[\idR \, \e^{\lambda \idR}\big]}{\E_{\nu}\big[\e^{\lambda \idR}\big]} \,.
\end{equation}
Hölder's inequality still entails that $\phi_\nu$ is convex, thus its derivative $\phi_\nu'$ is non-decreasing;
therefore, $\phi_\nu'$ admits a limit $\ell \in \{-\infty\} \cup \R$ at $-\infty$.
Actually, we have $\ell = m(\nu)$, as can be seen by combining the following facts.
First, by definition, $\idR \geq m(\nu)$ $\nu$-a.s., thus $\phi'_{\nu} \geq m(\nu)$, hence $\ell \geq m(\nu)$.
As a consequence, if $\ell = -\infty$, then we also have $m(\nu) = -\infty$.
Otherwise, if $\ell \in \R$, the same arguments as in~\eqref{eq:limit=m(nu)1}--\eqref{eq:limit=m(nu)2} show that
$\idR - \ell \geq 0$ $\nu$-a.s., i.e., $\ell \leq m(\nu)$.

We may now come back to establishing $\phi^\star_\nu(x) \geq \Linfeq(x,\nu)$ in the case $x < \Ed(\nu)$.
We consider three sub-cases,
depending on the respective positions of $x$ and $m(\nu)$. \smallskip

\emph{Case~1: $x > m(\nu)$.}~~The properties of $\phi_\nu$ ensure, exactly as in Case~1 of the proof of Lemma~\ref{lem:duality} (in Appendix~\ref{sec:appcDall}),
the existence of $\lambda^\star$ such that $\phi_\nu'(\lambda^\star) = x$ and
$\phi_\nu^\star(x) = \lambda^\star x - \phi_\nu(\lambda^\star)$.
Given the assumption~\eqref{eq:ass:nu-lambda}, we may consider the distribution $\nu_{\lambda^\star} \in \cD$.
We note, again exactly as in Case~1 of the proof of Lemma~\ref{lem:duality} and given the closed-form expression~\eqref{eq:cf-phiprimenu}
for $\phi_\nu'$, that $\Ed(\nu_{\lambda^\star}) = \phi_\nu'(\lambda^\star)$, thus $\Ed(\nu_{\lambda^\star}) = x$. Finally,
an explicit computation yields
\[
\KL(\nu_{\lambda^\star}, \nu) = \lambda^\star \Ed(\nu_{\lambda^\star})
- \ln \E_{\nu\!}\big[\e^{\lambda \idR}\big] = \lambda^\star x - \phi_\nu(\lambda^\star) = \phi_\nu^\star(x)\,.
\]
By the defining infimum of $\Linfeq(x, \nu)$, we have indeed $\Linfeq(x, \nu) \leq \KL(\nu_{\lambda^\star}, \nu) = \phi_\nu^\star(x)$.
\smallskip

\emph{Case~2: $x = m(\nu)$.}~~In particular, $m(\nu) \in \R$, which allows us to follow the
monotone-convergence arguments of Case~2
of the proof of Lemma~\ref{lem:duality} (in Appendix~\ref{sec:appcDall})
and get the equality $\phi_\nu^\star\bigl(m(\nu)\bigr) = - \ln \nu \bigl\{ m(\nu) \bigr\}$.
Now, for the second part of this sub-case,
we also adapt an argument of the second part of the proof of Lemma~\ref{lm:Linfstr-egal-Linfeq}
(in Appendix~\ref{sec:spec-cDall}), namely, the fact that either there exists at most one distribution $\zeta \in \cD$
absolutely continuous with respect to $\nu$ and satisfying $\Ed(\zeta) \leq m(\nu)$, namely,
$\zeta = \delta_{m(\nu)}$, the Dirac mass at $m(\nu)$. The latter is indeed absolutely continuous with
respect to $\nu$ if and only if $\nu \bigl\{ m(\nu) \bigr\} > 0$.
When $\nu \bigl\{ m(\nu) \bigr\} > 0$, we have $\delta_{m(\nu)} \in \cD$ by the Dirac assumption of the lemma,
so that
\[
\Linfeq\bigl( m(\nu) , \nu \bigr) =
\KL\bigl(\delta_{m(\nu)}, \nu\bigr) = - \ln \nu \bigl\{ m(\nu) \bigr\}\,.
\]
Otherwise, when $\nu \bigl\{ m(\nu) \bigr\} = 0$, the infimum defining $\Linfeq\bigl( m(\nu) , \nu \bigr)$
is either over an empty set or of $+\infty$ values, and thus equals $+\infty = - \ln \nu \bigl\{ m(\nu) \bigr\}$. In both situations,
we obtained $\Linfeq\bigl( m(\nu) , \nu \bigr) = \phi_\nu^\star\bigl(m(\nu)\bigr)$.
\smallskip

\emph{Case~3: $x < m(\nu)$.}~~In particular, $m(\nu) \in \R$ in this sub-case as well,
which allows us to repeat the exact same arguments as in Case~3
of the proof of Lemma~\ref{lem:duality} (in Appendix~\ref{sec:appcDall}):
we may show that both $\Linfeq(x,\nu)$ and $\phi^\star_\nu(x)$ are equal to $+\infty$.
\end{proof}

\section{Proofs for lower bounds (Section~\ref{sec:LB})}
\label{app:LB}

This section provides the detailed proofs that were omitted
when stating our various lower bounds in Section~\ref{sec:LB}.

\subsection{Proof of Lemma~\ref{lem:BI}}
\label{app:lem:BI}

We restate the lemma for the convenience of the reader.
The proof reveals that the inequality actually holds for
limits taken along subsequences $(T_n)_{n \geq 1}$.
Also, we may only relax the assumptions on the bandit models;
e.g., they do not need to be generic and it suffices that they
have different unique optimal arms. (The notion of a generic bandit
problem is defined in the first lines of Section~\ref{sec:LB}.)

\lmfunda*

\begin{proof}
The considered sequence of strategies being consistent on $\cD$, and as $a^\star(\ula) \neq a^\star(\unu)$,
\begin{align*}
& q_T \eqdef \P_{\ula}\bigl( I_T \ne a^\star(\unu) \bigr)
\geq \P_{\ula}\bigl( I_T = a^\star(\ula) \bigr) \lriT 1\,, \\
\mbox{while} \qquad &
p_T \eqdef \P_{\unu}\bigl( I_T \ne a^\star(\unu) \bigr) \lriT 0 \,.
\end{align*}
Note that we introduced above short-hand notation $p_T$ and $q_T$.

The fundamental inequality for lower bounds in bandit problems
(which is a consequence of the chain rule
and of the data-processing inequality for Kullback-Leibler
divergences, see \citealp{GMS19}),
applied for $Z = \ind{I_T \ne a^\star(\unu)}$, exactly states here that
\begin{equation}	\label{eq:chainrule+datacompressing}
\sum_{a=1}^K \E_{\ula}[N_a(T)] \, \KL(\lambda_a, \nu_a) \geq \KL\big(\Ber(q_T),\Ber(p_T)\bigr)\,,
\end{equation}
where we recall that $\Ber(p)$ refers to the Bernoulli distribution with parameter~$p$.
Given the asymptotics of $p_T$ and $q_T$,
\[
\KL\big(\Ber(q_T),\Ber(p_T)\bigr) = q_T \ln \frac{q_T}{p_T} +
(1-q_T) \ln \frac{1-q_T}{1-p_T} \sim - \ln p_T
\qquad \mbox{as} \ T \to +\infty\,.
\]
Put differently,
\[
\frac{1}{T} \ln \P_{\unu}\bigl(I_T \neq a^\star(\unu)\bigr) \sim - \frac{\KL\big(\Ber(q_T),\Ber(p_T)\bigr)}{T}\,.
\]
Combining this limit behavior with the previous inequality leads to the stated result, namely:
\[
\liminf_{T \to +\infty} \frac{1}{T} \ln \P_{\unu}\bigl(I_T \neq a^\star(\unu)\bigr)
\geq - \limsup_{T \to +\infty} \sum_{a=1}^K \frac{\E_{\ula}[N_a(T)]}{T} \, \KL(\lambda_a, \nu_a) \,. \vspace{-.8cm}
\]
\end{proof}

\subsection{Proof of Theorem~\ref{th:LB-ABM10}}
\label{app:th:LB-ABM10}

We restate the theorem for the convenience of the reader
(and recall that the notion of a generic bandit
problem is defined in the first lines of Section~\ref{sec:LB}).

\lbabm*

\begin{proof}
The proof consists of two steps.
The first step is to prove that for a generic bandit
problem $\unu$ in $\cD$ with $K \geq 2$ arms, we have,
\begin{equation}
\label{eq:LB-LinfK}
\liminf_{T \to +\infty} \frac{1}{T} \ln \P_{\unu}\bigl(I_T \ne a^\star(\unu)\bigr)
\geq - \frac{\Linfstr\bigl(\mu_{(K)},\nu^\star\bigr)}{K}\,.
\end{equation}
In the second step, we use this lower bound and the very definition of
the clever exploitation of the pruning of suboptimal arms to get the claimed
bound.

\paragraph{Step~1: lower bound~\eqref{eq:LB-LinfK}.}
We follow a well-established methodology and
consider an alternative bandit problem only differing
from $\unu$ at one arm, namely, at the best arm.
To do so, we set some distribution $\zeta \in \cD$ with $\Ed(\zeta) < \mu_{(K)}$,
if some exists, and define the bandit problem $\ula = (\lambda_1, \ldots, \lambda_K)$
as
\[
\lambda_a = \left\{
\begin{aligned}
\nonumber
& \zeta & \text{if } a = a^\star(\unu), \\
& \nu_a & \text{if } a \ne a^\star(\unu).
\end{aligned}\right.
\]
Observe that $\ula$ is also a generic bandit problem in $\cD$,
that $a^\star(\unu)$ is the worst arm in $\ula$ (and also
that the second best arm of $\unu$ is the optimal arm in $\ula$, but we will not
use this specific fact). Therefore, Lemma \ref{lem:BI} yields, as $\ula$ and $\unu$ only differ at arm $a^\star(\unu)$,
\[
\liminf_{T \to +\infty} \frac{1}{T} \ln \P_{\unu}\bigl(I_T \neq a^\star(\unu)\bigr) \geq - \limsup_{T \to +\infty} \frac{\E_{\ula}[N_{a^\star(\unu)}(T)]}{T}
\, \KL(\lambda_{a^\star(\unu)}, \nu^\star) \,,
\]
where we recall that $\nu^\star = \nu_{a^\star(\unu)}$.
Given that $a^\star(\unu)$ is the worst arm of $\ula$,
and since by assumption, the sequence of strategies is balanced against the worst arm,
\[
\limsup_{T \to +\infty} \frac{1}{T} \, \E_{\ula} \bigl[ N_{a^\star(\unu)}(T) \bigr]
\leq \frac{1}{K}\,,
\]
proving that
\[
\liminf_{T \to +\infty} \frac{1}{T} \ln \P_{\unu}\bigl(I_T \neq a^\star(\unu)\bigr) \geq - \frac{\KL(\zeta,\nu^\star)}{K}\,.
\]
The claimed inequality~\eqref{eq:LB-LinfK} follows from
taking the supremum in the right-hand side over
distributions $\zeta \in \cD$ with $\Ed(\zeta) < \mu_{(K)}$.

\paragraph{Step~2: clever exploitation of pruning.}
For each $k \in \{2,\ldots,K-1\}$,
define $\unu'_{1:k}$ as the subproblem of $\unu$ obtained
by keeping the $k$ best arms and dropping the $K-k$ worst arms.
Use the definition of clever exploitation of pruning of suboptimal arms
and apply~\eqref{eq:LB-LinfK} to $\unu'_{1:k}$ to get
\[
\liminf_{T \to +\infty} \frac{1}{T} \ln \P_{\unu}\bigl(I_T \ne a^\star(\unu)\bigr)
\geq
\liminf_{T \to +\infty} \frac{1}{T} \ln \P_{\unu'_{1:k}}\Bigl(I_T \ne a^\star\bigl(\unu'_{1:k}\bigr)\Bigr)
\geq
- \frac{\Linfstr\bigl(\mu_{(k)},\nu^\star\bigr)}{k}\,.
\]
Taking the maximum of all lower bounds exhibited as $k$ varies between $2$ and $K$,
we proved the claimed result.
\end{proof}

\subsection{Proof of the normality of the models $\cDall$ and $\cDexp$}
\label{sec:proof-normality}

In this section, we show that $\cDall$ and canonical one-parameter exponential models are normal.
For the convenience of the reader, we first restate the definition of normality.

\definormality*

\begin{proposition}
$\cDall$ is a normal model.
\end{proposition}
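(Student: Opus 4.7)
The plan is to establish both inequalities underlying the required equality. Since $\{\zeta \in \cDall : x + \varepsilon > \Ed(\zeta) > x\} \subseteq \{\zeta \in \cDall : \Ed(\zeta) > x\}$, taking infima of $\KL(\,\cdot\,,\nu)$ immediately gives $\Lsupstr(x,\nu) \leq \inf\{\KL(\zeta,\nu) : x + \varepsilon > \Ed(\zeta) > x\}$. All of the content therefore lies in the reverse inequality: given any $\zeta \in \cDall$ with $\Ed(\zeta) > x$, I must exhibit some $\zeta' \in \cDall$ with $x < \Ed(\zeta') < x + \varepsilon$ and $\KL(\zeta',\nu) \leq \KL(\zeta,\nu)$.

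The construction is a convex mixing with $\nu$ itself. For $\alpha \in [0, 1]$, define $\zeta_\alpha = (1-\alpha) \nu + \alpha \zeta$, which belongs to $\cDall$ by convexity of the model. By joint convexity of the Kullback-Leibler divergence, rewriting $\nu$ as the trivial mixture $(1-\alpha) \nu + \alpha \nu$,
\[
\KL(\zeta_\alpha, \nu) = \KL\bigl( (1-\alpha) \nu + \alpha \zeta,\, (1-\alpha) \nu + \alpha \nu \bigr) \leq (1-\alpha) \KL(\nu, \nu) + \alpha \KL(\zeta, \nu) = \alpha \, \KL(\zeta, \nu) \leq \KL(\zeta, \nu) \,.
\]
Meanwhile, the expectation $\Ed(\zeta_\alpha) = (1-\alpha) \Ed(\nu) + \alpha \Ed(\zeta)$ depends continuously on $\alpha$, equals $\Ed(\nu) \leq x$ at $\alpha = 0$, and equals $\Ed(\zeta) > x$ at $\alpha = 1$.

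Choosing the target value $y^\star = x + \tfrac{1}{2} \min \bigl( \varepsilon,\, \Ed(\zeta) - x \bigr) \in (x, x + \varepsilon)$, which also lies inside $\bigl[\Ed(\nu), \Ed(\zeta)\bigr]$, the intermediate value theorem supplies some $\alpha^\star \in (0, 1]$ with $\Ed(\zeta_{\alpha^\star}) = y^\star$. The mixture $\zeta_{\alpha^\star}$ then belongs to the restricted set and satisfies $\KL(\zeta_{\alpha^\star}, \nu) \leq \KL(\zeta, \nu)$. Taking the infimum over all $\zeta \in \cDall$ with $\Ed(\zeta) > x$ closes the proof, and the edge case $\Lsupstr(x,\nu) = +\infty$ is automatic since the restricted infimum can only be larger. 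I do not anticipate a real obstacle: the standing hypothesis $x \geq \Ed(\nu)$ in the definition of normality is precisely what guarantees that the continuous path $\alpha \mapsto \Ed(\zeta_\alpha)$ enters the target window $(x, x + \varepsilon)$, and joint convexity of $\KL$ handles the bound on the divergence.
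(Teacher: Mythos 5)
Your proof is correct, and it takes a genuinely different route from the paper's. The paper handles the two cases $x \geq M(\nu)$ and $x < M(\nu)$ separately: in the latter it invokes the duality machinery of Lemma~\ref{lem:duality} (the existence, for each target expectation $x'_\delta < M(\nu)$, of an exponentially tilted distribution $\zeta_\delta$ attaining $\Lsupeq(x'_\delta,\nu) = \phi^\star_\nu(x'_\delta) = \KL(\zeta_\delta,\nu)$) together with the continuity of $\Lsupstr(\,\cdot\,,\nu)$ on $\bigl(-\infty, M(\nu)\bigr)$ from Lemma~\ref{lm:Linfstr-egal-Linfeq}. You instead bypass duality entirely: for an arbitrary competitor $\zeta$ with $\Ed(\zeta) > x$ you shrink it towards $\nu$ along the segment $\zeta_\alpha = (1-\alpha)\nu + \alpha\zeta$, using joint convexity of $\KL$ (with the decomposition $\nu = (1-\alpha)\nu + \alpha\nu$) to get $\KL(\zeta_\alpha,\nu) \leq \alpha\,\KL(\zeta,\nu)$, and the affine dependence of $\Ed(\zeta_\alpha)$ on $\alpha$ to land the expectation in $(x, x+\varepsilon)$ --- the hypothesis $x \geq \Ed(\nu)$ being exactly what makes the window reachable. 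Your argument is more elementary (no case analysis on the support, no atoms, no Fenchel--Legendre transform) and strictly more general: it shows that \emph{every convex model} is normal, of which $\cDall$ is an instance; it would not, however, cover non-convex models such as the exponential families $\cDexp$, for which the paper's separate proposition still requires its own argument. One small bonus of the paper's route is that it identifies the degenerate regime $x \geq M(\nu)$ where both infima are $+\infty$; in your proof that case is silently absorbed, correctly, by the trivial inclusion direction.
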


\begin{proof}
We fix $\nu \in \cDall$, a real $x \geq \Ed(\nu)$, and $\varepsilon > 0$.
Recall the piece of notation $M(\nu)$ for the upper end of the support of~$\nu$, as introduced in Appendix~\ref{sec:spec-cDall}.
As in Case~3 of the proof of Lemma~\ref{lem:duality} (in Appendix~\ref{sec:appcDall}),
we note that when $x \geq M(\nu)$, there exists no distribution $\zeta \in \cDall$ absolutely continuous with respect to $\nu$ and
such that $\Ed(\zeta) > x$; hence, both infima in Definition~\ref{defi:normality} equal $+\infty$.
We now tackle the case where $\Ed(\nu) \leq x < M(\nu)$.
For all $\delta > 0$, we introduce
\[
x'_\delta = \min\biggl\{ x + \delta, \, \frac{x + M(\nu)}{2} \biggr\} < M(\nu) \,.
\]
Case~1 of the proof of Lemma~\ref{lem:duality} and Lemma~\ref{lem:variational}
reveal (by symmetry) that for each $\delta > 0$, there exists a distribution $\zeta_\delta \in \cDall$
with expectation $x'_\delta$ and such that $\Lsupeq(x'_\delta,\nu) = \phi^\star_\nu(x'_\delta) = \KL(\zeta_\delta,\nu)$.
By Lemma~\ref{lm:Linfstr-egal-Linfeq}, $\Lsupeq(x'_\delta,\nu) = \Lsupstr(x'_\delta,\nu)$ and
$\Lsupstr(\,\cdot\,,\nu)$ is continuous on $\bigl( - \infty, M(\nu) \bigr)$.
Putting all these elements together, we obtain
\begin{align*}
\Lsupstr(x,\nu) = \lim_{\delta \to 0} \Lsupstr(x'_\delta,\nu) = & \liminf_{\delta \to 0} \KL(\zeta_\delta,\nu) \\
& \geq \inf \bigl\{ \KL(\zeta_\delta,\nu) : \ \delta \in (0,\varepsilon)  \bigr\} \\
& \geq \inf \bigl\{ \KL(\zeta,\nu) : \ \zeta \in \cD \ \ \mbox{\rm s.t.} \ \ x + \varepsilon > \Ed(\zeta) > x  \bigr\}\,,
\end{align*}
where the first inequality is by the very definition of a $\liminf$.
\end{proof}

\begin{proposition}
All canonical one-parameter exponential models $\cDexp$ are normal.
\end{proposition}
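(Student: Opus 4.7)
The plan is to establish both inequalities defining normality for a canonical one-parameter exponential family $\cDexp$. Fix $\nu \in \cDexp$ with expectation $\mu_\nu = \Ed(\nu) \in \cM$, a threshold $x \geq \mu_\nu$, and $\varepsilon > 0$. Writing $A_\varepsilon = \{\zeta \in \cDexp : x + \varepsilon > \Ed(\zeta) > x\}$ and $A = \{\zeta \in \cDexp : \Ed(\zeta) > x\}$, we clearly have $A_\varepsilon \subseteq A$, so $\inf_{\zeta \in A_\varepsilon} \KL(\zeta,\nu) \geq \inf_{\zeta \in A} \KL(\zeta,\nu) = \Lsupstr(x,\nu)$. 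The substance of the proof therefore lies in the reverse inequality.

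First I would dispose of the degenerate case $x \geq \mu_+$: since every $\zeta \in \cDexp$ has expectation in $\cM = (\mu_-,\mu_+)$, the set $A$ (and hence $A_\varepsilon$) is empty. Both infima then equal $+\infty$ by the usual convention on empty sets, and the equality holds trivially.

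Otherwise $x \in [\mu_\nu, \mu_+)$, and I would proceed by a continuity argument. For each $\delta \in (0, \varepsilon)$ small enough that $x + \delta < \mu_+$, the bijection $b' : \Theta \to \cM$ produces a unique distribution $\zeta_\delta \in \cDexp$ with $\Ed(\zeta_\delta) = x + \delta$, and by construction $\zeta_\delta \in A_\varepsilon$. By the mean-parameterized representation~\eqref{eq:d-KL}, $\KL(\zeta_\delta,\nu) = d(x+\delta, \mu_\nu)$. Since $d$ is continuous on $\cM \times \cM$ and $(x, \mu_\nu) \in \cM \times \cM$, we obtain $\KL(\zeta_\delta, \nu) \longrightarrow d(x, \mu_\nu)$ as $\delta \to 0^+$. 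Invoking the rewriting~\eqref{eq:cont-d2}, the limit equals $\Lsupstr(x, \nu)$, whence $\inf_{\zeta \in A_\varepsilon} \KL(\zeta,\nu) \leq \Lsupstr(x,\nu)$, closing the proof.

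There is essentially no obstacle: the argument relies only on the bijection between $\cM$ and $\cDexp$ via $b'$ and on the continuity of $d$ on $\cM \times \cM$, both recorded in Appendix~\ref{app:cDexp}. The only point requiring minor care is keeping the perturbation $x + \delta$ inside $\cM$, which is automatic for sufficiently small $\delta$ once $x < \mu_+$.
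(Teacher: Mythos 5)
Your proof is correct and follows essentially the same route as the paper's: rewrite $\Lsupstr(x,\nu)$ as $d\bigl(x,\Ed(\nu)\bigr)$ via~\eqref{eq:cont-d2}, dispose of the empty-set case, and use continuity of $d$ along the witnesses $\zeta_\delta$ with $\Ed(\zeta_\delta)=x+\delta$. The only (harmless) difference is that you split the equality into the trivial inclusion inequality plus an upper bound from a convergent sequence of witnesses, whereas the paper identifies the limit with the infimum over $\delta\in(0,\varepsilon)$ by additionally invoking the monotonicity of $d\bigl(\,\cdot\,,\Ed(\nu)\bigr)$ on $(x,\mu_+)$.
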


\begin{proof}
The proof consists of rewriting $\Lsupstr$ as $d$, as indicated by~\eqref{eq:cont-d2}, and using the
regularity properties for~$d$ exhibited in Appendix~\ref{app:cDexp}.
We fix $\nu \in \cDexp$, a real $x \geq \Ed(\nu)$, and $\varepsilon > 0$.
When $x \geq M(\nu)$, the same argument as in the previous proposition shows that
both infima equal $+\infty$. For $x < M(\nu)$, we
introduce $\delta \in (0, \mu_+ - x)$
and write
\begin{align*}
\Lsupstr(x,\nu) = d\bigl(x, \Ed(\nu)\bigr)
& = \lim_{\delta \to 0} d\bigl(x+\delta, \Ed(\nu)\bigr) \\
& = \inf \Bigl\{ d\bigl(x+\delta, \Ed(\nu)\bigr) : \delta \in (0,\varepsilon) \Bigr\} \\
& = \inf \bigl\{ \KL(\zeta,\nu) : \ \zeta \in \cD \ \ \mbox{\rm s.t.} \ \ x + \varepsilon > \Ed(\zeta) > x  \bigr\}\,,
\end{align*}
where the second and third equalities follow, respectively, by continuity of $d\bigl(\,\cdot\,, \Ed(\nu)\bigr)$ on $\cM$
and by the fact that this function is non-decreasing on $(x, \mu_+) \subset [\Ed(\nu), \mu_+)$,
and the final equality is by the rewriting~\eqref{eq:d-KL}.
\end{proof}

\subsection{Proof of Theorem~\ref{th:LBmonotone}}
\label{app:th:LBmonotone}

We restate the theorem for the convenience of the reader
(and recall that the notion of a generic bandit
problem is defined in the first lines of Section~\ref{sec:LB}).

\thlbmonotone*

\begin{proof}
We fix a generic bandit $\unu$ in $\cD$ and consider the following sets of alternative bandit problems, indexed by triplets
$(k,j,x)$ satisfying $2 \leq k \leq K$ and $2 \leq j \leq k$, as well as $x \in [\mu_{(j)}, \mu_{(j-1)})$:
\[
\Alt_{k, j, x}(\unu) = \Bigl\{ \ula \text{ in } \cD : \ \Ed\bigl(\lambda_{(1)}\bigr) < x < \Ed\bigl(\lambda_{(k)}\bigr) < \mu_{(j-1)}
\ \text{ and } \ \lambda_a = \nu_a \,\, \text{ for } \,\, a \notin \big\{(1), (k) \big\} \Bigr\} \,;
\]
in particular, an alternative problem $\ula$ in $\Alt_{k, j, x}(\unu)$ only differ from the original bandit problem $\unu$ at the best arm $(1)$ and
at the $k$--th best arm $(k)$. Given $x \in \bigl[\mu_{(j)}, \mu_{(j-1)} \bigr)$ and $\Ed\bigl(\lambda_{(1)}\bigr) < x$, arm $(1)$ is at best the $j$--th best arm of $\ula$, but it can be possibly worse.
Similarly, the same condition on $x$ and the fact that $x < \Ed\bigl(\lambda_{(k)}\bigr)$ implies that arm $(k)$ is exactly the $j-1$--th
best arm of $\ula$. Both facts are illustrated on Figure~\ref{fig:Altkjx}.
\begin{figure}[t]
%
%
\centering \includegraphics[scale=0.75]{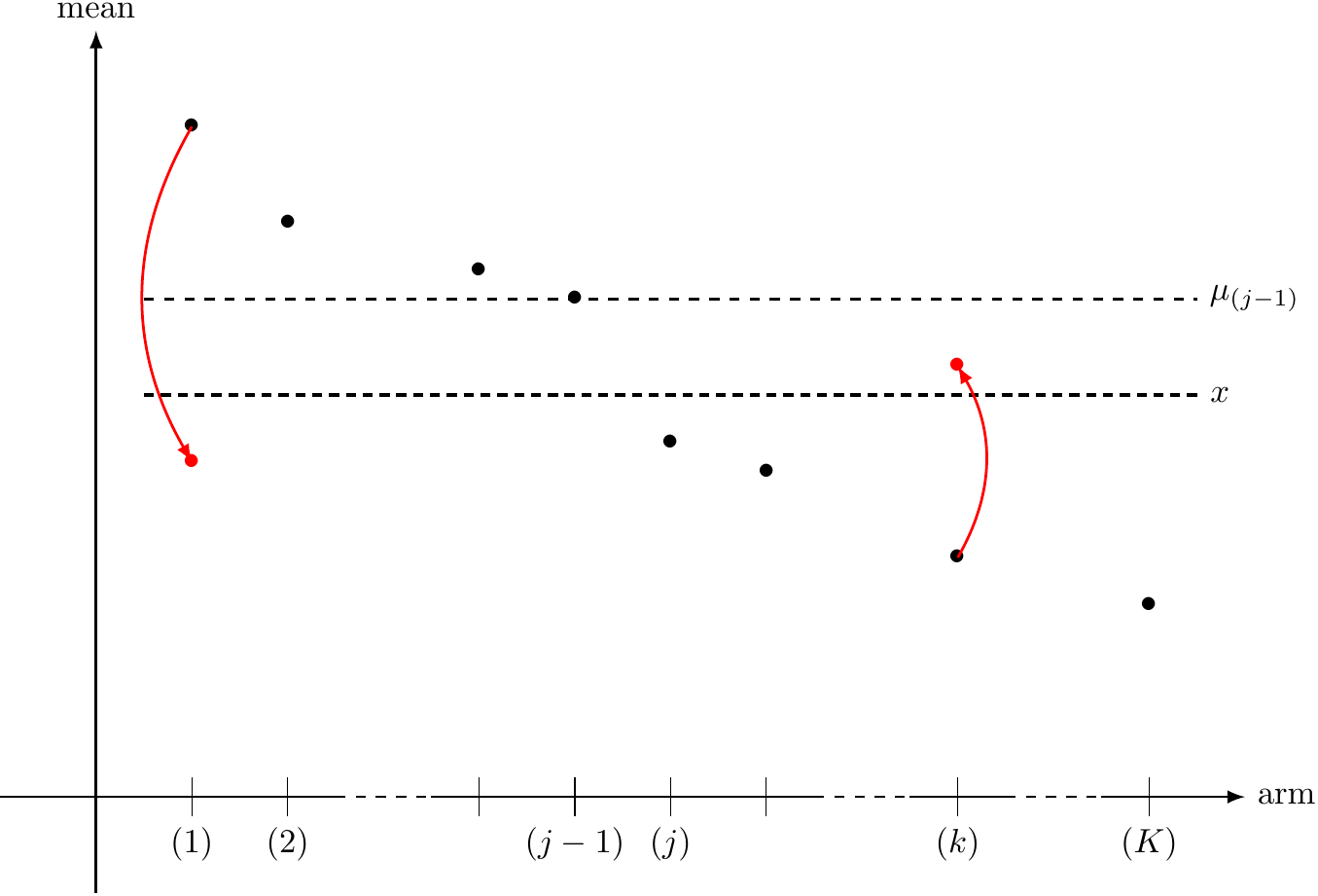}
\caption{Original bandit problem $\unu$ (in dark) and modifications made to arms $(1)$ and $(k)$ to obtain an alternative bandit problem
$\textcolor{red}{\ula} \in \Alt_{k, j, x}(\unu)$ (in {\textcolor{red}{red}}): in $\textcolor{red}{\ula}$, arm $(k)$ is the
$j-1$--th best arm, while arm $(1) = a^\star(\unu)$
is at best the $j$--th best arm. }
\label{fig:Altkjx}
\end{figure}
Thus, by monotonicity of the strategy,
\[
\limsup_{T \to +\infty} \frac{\E_{\ula}\bigl[N_{(k)}(T)\bigr]}{T} \leq \frac{1}{j-1}
\qquad \text{and} \qquad
\limsup_{T \to +\infty} \frac{\E_{\ula}\bigl[N_{(1)}(T)\bigr]}{T} \leq \frac{1}{j} \,.
\]
Given that the optimal arm in $\ula$ is different from the optimal arm $(1)$ of $\unu$,
Lemma~\ref{lem:BI} may be applied; together with the two upper bounds above, it yields
\[
\liminf_{T \to +\infty} \frac{1}{T} \ln \P_{\unu}\bigl(I_T \ne a^\star(\unu) \bigr) \geq - \biggl( \frac{\KL\bigl(\lambda_{(k)}, \nu_{(k)}\bigr)}{j-1} + \frac{\KL\bigl(\lambda_{(1)}, \nu^\star\bigr)}{j} \biggr) \,.
\]
We can now take the infimum over all bandit problems $\ula \in \Alt_{k, j, x}(\unu)$
and obtain the following lower bound, where we define a quantity $\cI_{k,j,x}(\unu)$:
\[
\liminf_{T \to +\infty} \frac{1}{T} \ln \P_{\unu}\bigl(I_T \ne a^\star(\unu) \bigr)
\geq - \inf_{\ula \in \Alt_{k, j, x}(\unu)} \bigg\{ \frac{\KL\bigl(\lambda_{(k)}, \nu_{(k)}\bigr)}{j-1} + \frac{\KL\bigl(\lambda_{(1)}, \nu^\star\bigr)}{j} \bigg\}
\eqdef - \cI_{k,j,x}(\unu) \,.
\]
We prove below that
\begin{equation}
\label{eq:Ijkx}
\cI_{k,j,x}(\unu) = \frac{\Lsupstr\bigl(x, \nu_{(k)}\bigr)}{j-1} + \frac{\Linfstr\bigl(x, \nu^\star\bigr)}{j}\,,
\end{equation}
from which the lower bound claimed in Theorem~\ref{th:LBmonotone} will follow, by taking the supremum
of $- \cI_{k,j,x}(\unu)$ first over $x \in \bigl[\mu_{(j)}, \mu_{(j-1)}\bigr)$, then the maximum over $2 \leq j \leq k$,
and finally, the maximum over $2 \leq k \leq K$.

We now prove~\eqref{eq:Ijkx}.
The infimum over $\ula \in \Alt_{k, j, x}(\unu)$ may be split into two separate infima, respectively
over $\lambda_{(k)}$ and $\lambda_{(1)}$; given that each term of the sum of $\KL$ only depends either on
$\lambda_{(k)}$, or on $\lambda_{(1)}$, but not on both, we may write
\begin{align*}
\cI_{k,j,x}(\unu) &= \inf_{\substack{ \lambda_{(1)}, \lambda_{(k)} \in \cD \,: \\ \Ed(\lambda_{(1)}) < x \\ x < \Ed(\lambda_{(k)}) < \mu_{(j-1)}}} \bigg\{ \frac{\KL\bigl(\lambda_{(k)}, \nu_{(k)}\bigr)}{j-1} + \frac{\KL\bigl(\lambda_{(1)}, \nu^\star\bigr)}{j} \bigg\} \\
	&= \frac{1}{j-1} \underbrace{\inf_{\substack{ \lambda_{(k)} \in \cD \,: \\ x < \Ed(\lambda_{(k)}) < \mu_{(j-1)}}} \KL\bigl(\lambda_{(k)}, \nu_{(k)}\bigr)}_{= \Lsupstr(x, \nu_{(k)})}
+ \frac{1}{j} \underbrace{\inf_{\substack{ \lambda_{(1)} \in \cD \,: \\ \Ed(\lambda_{(1)}) < x}}  \KL\bigl(\lambda_{(1)}, \nu^\star\bigr)}_{= \Linfstr(x, \nu^\star)}\,,
\end{align*}
where we obtained $\Linfstr\bigl(x, \nu^\star\bigr)$ by definition while we relied on the
normality of the model (Definition~\ref{defi:normality}) to obtain $\Lsupstr\bigl(x, \nu_{(k)}\bigr)$.
We did so with $\varepsilon = \mu_{(j-1)} - x$, which is indeed
positive as we considered $x < \mu_{(j-1)}$.
\end{proof}

\subsection{Proof of Theorem~\ref{th:LB-conj-GK16}}
\label{app:thLBconjGK}

We restate the theorem for the convenience of the reader
(and recall that the notion of a generic bandit
problem is defined in the first lines of Section~\ref{sec:LB}).

\thLBconjGK*

\begin{proof} Let $\unu$ be a generic bandit problem.
We fix $k \neq a^\star(\unu)$ and $x \in [\mu_{k}, \mu^\star]$,
and prove that
\[
\liminf_{T \to +\infty} \frac{1}{T} \ln \P_{\unu} \bigl(I_T \neq a^\star(\unu)\bigr) \geq
- \max \bigl\{ \Lsupstr(x, \nu_k),  \Linfstr(x, \nu^\star) \bigr\}\,,
\]
from which the stated lower bound follows, by taking suprema.
To do so, we consider the set of alternative bandit problems
\[
\Alt_{k, x}(\unu) = \Bigl\{ \ula \text{ in } \cD : \ \Ed\bigl(\lambda_{a^\star(\unu)}\bigr) < x <\Ed(\lambda_k)
\ \text{ and } \ \lambda_a = \nu_a \,\, \text{ for } \,\, a \notin \bigl\{ a^\star(\unu), k\bigr\} \Bigr\}\,;
\]
it is composed of bandit problems, only differing from $\unu$ at arms $a^\star(\unu)$ and $k$,
and for which arm $k$ is better than arm $a^\star(\unu)$, with associated expectations separated by $x$.
In particular, the optimal arm in $\ula$ is different from the optimal arm $a^\star(\unu)$ of $\unu$.
Lemma~\ref{lem:BI} may therefore be applied; it states that
\begin{align*}
\lefteqn{\liminf_{T \to +\infty} \frac{1}{T} \ln \P_{\unu} \bigl( I_T \neq a^\star(\unu) \bigr)} \\
& \geq - \limsup_{T \to +\infty}
\frac{\E_{\ula}\bigl[N_k(T)\bigr]}{T} \, \KL(\lambda_k, \nu_k) +
\frac{\E_{\ula}\bigl[N_{a^\star(\unu)}(T)\bigr]}{T} \, \KL\bigl(\lambda_{a^\star(\unu)}, \nu_{a^\star(\unu)}\bigr) \\
& \geq - \max \Big\{ \KL(\lambda_k, \nu_k) , \, \KL\bigl(\lambda_{a^\star(\unu)}, \nu_{a^\star(\unu)} \bigr) \Big\}\,,
\end{align*}
where we used, for the second inequality, the crude upper bound $N_k(T) + N_{a^\star(\unu)}(T) \leq T$.
Taking the supremum of the obtained lower bound over all $\ula \in \Alt_{k, x}(\unu)$ leads to
the following inequality, where we define the short-hand notation $\cI_{k,x}(\unu)$:
\[
\liminf_{T \to +\infty} \frac{1}{T} \ln \P_{\unu}(I_T \neq a^\star(\unu)) \geq
- \inf_{\ula \in \Alt_{k, x}(\unu)} \max \Bigl\{ \KL(\lambda_k, \nu_k) , \KL\bigl(\lambda_{a^\star(\unu)}, \nu_{a^\star(\unu)}\bigr) \Bigr\}
\eqdef - \cI_{k,x}(\unu) \,.
\]
The proof is concluded below by showing that $\cI_{k,x}(\unu) = \max \bigl\{ \Lsupstr(x, \nu_k),  \Linfstr(x, \nu^\star) \bigr\}$.

As in the proof of Theorem~\ref{th:LBmonotone} (see Appendix~\ref{app:th:LBmonotone}),
we use a separation of the infima, in the abstract form,
for two functions~$f$ and~$g$,
\[
\inf_{u,v} \,\, \max\bigl\{ f(u),\,g(v) \bigr\} = \max \Bigl\{ \inf_u f(u)\,,\, \inf_v g(v) \Bigr\}\,.
\]
Here, by definition of $\Alt_{k, x}(\unu)$,
\begin{align*}
\cI_{k,x}(\unu) &= \inf_{\substack{\lambda_{a^\star(\unu)}, \lambda_k \in \cD \\ \Ed(\lambda_{a^\star(\unu)}) < x \\ \Ed(\lambda_k) > x}}
\,\, \max \Big\{ \KL(\lambda_k, \nu_k) , \,\, \KL\bigl(\lambda_{a^\star(\unu)}, \nu_{a^\star(\unu)}\bigr) \Big\}  \\
& = \max \left\{
\inf_{\substack{\lambda_k \in \cD \\ \Ed(\lambda_k) > x}} \KL(\lambda_k, \nu_k), \,\,
\inf_{\substack{\lambda_{a^\star(\unu)} \in \cD \\ \Ed(\lambda_{a^\star(\unu)}) < x}} \KL\bigl(\lambda_{a^\star(\unu)}, \nu_{a^\star(\unu)}\bigr)
\right\} \\
& = \max \Bigl\{ \Lsupstr(x, \nu_k),  \Linfstr(x, \nu^\star) \Bigr\} \,,
\end{align*}
which concludes the proof.
\end{proof}

\section{Additional comments for the literature review}
\label{app:literature}

This appendix is devoted to additional discussions concerning the fixed-budget literature. More precisely, we discuss in detail two gap-based lower bounds that we believe are somewhat detached from the spirit of the article, namely, the minimax lower bound of \cite{CL16} in Appendix~\ref{app:CL16} and the Bretagnolle-Huber technique in Appendix~\ref{app:BH}.

\subsection{The minimax lower bound of~\cite{CL16}}
\label{app:CL16}

\citet[Theorem~1]{CL16} proved (slightly stronger versions of)
the following (non-asymptotic) minimax lower bound.
Consider the model $\cDber$ of Bernoulli distributions $\Ber(p)$ with parameters $p \in [1/4, \, 3/4]$.
For all sequences of strategies that are consistent on $\cDber$, for all $T \geq 0.14 \, K^4 \ln(6KT)$,
\begin{equation}	
\label{eq:LB-CL16}
\exists \, \unu \text{ \rm in } \cDber, \qquad \frac{1}{T} \ln \P_{\unu} \bigl( I_T \ne a^\star(\unu) \bigr)
\geq - \frac{400}{\ln K} \Biggl(\sum_{a \neq a^\star(\unu)} \frac{1}{\Delta_a^2} \Biggr)^{-1} - \frac{\ln 6}{T} \,,
\end{equation}
where, of course, we may rather use the weaker lower bound based on
\[
- \Biggl(\sum_{a \neq a^\star(\unu)} \frac{1}{\Delta_a^2} \Biggr)^{-1} \geq
- \min_{2 \leq k \leq K} \frac{\Delta_{(k)}^2}{k}\,.
\]
However, the bound~\eqref{eq:LB-CL16} is different in nature from the lower bounds considered in this article, as first and foremost,
it only guarantees a $1/\ln K$ improvement of the lower bound~\eqref{eq:LB-ABM10} of \cite{ABM10} for a single bandit problem $\unu$
(actually belonging to a known collection of $K$ bandit problems).
This is in strong contrast with the uniform instance-dependent lower bounds presented in this article:
bounds holding simultaneously for all bandit problems of a given model.
Second, the proof of the result (see the simpler proof provided below for Proposition~\ref{prop:CL16} stated next)
is truly gap-based and does not seem to extend in any obvious way to non-parametric models.

As mentioned above, the proof of~\eqref{eq:LB-CL16} in \cite{CL16} uses only
$K$ different bandit problems in $\cDber$.
We may therefore resort to the pigeonhole principle to
exchange, in some sense, the ``for all $T \geq 0.14 K^4 \ln(6KT)$'' and ``there exists $\unu$ in $\cDber$''
parts. More precisely, we obtain, from~\eqref{eq:LB-CL16}
the following proposition. For the sake of completeness, we provide a self-contained proof of this proposition
closely following the original arguments by \cite{CL16},
except for the change-of-measure argument, for which we rather resort to Lemma~\ref{lem:BI}.
Doing so, we are able to improve the numerical factor $400$ that would follow from~\eqref{eq:LB-CL16}
into a smaller factor of $30$.

\begin{proposition}
\label{prop:CL16}
Fix $K \geq 3$ and
consider the model $\cDber$ of Bernoulli distributions $\Ber(p)$ with parameters $p \in [1/4, \, 3/4]$.
For all consistent sequences of strategies on $\cDber$,
there exists an increasing sequence of budgets $(T_n)_{n \geq 1}$ such that
\begin{equation}
\label{eq:prop:CL16}
\exists \, \unu \text{ {\rm in} } \cDber, \qquad
\liminf_{n \to +\infty} \frac{1}{T_n} \ln \P_{\unu} \bigl( I_{T_n} \ne a^\star(\unu) \bigr)
\geq - \frac{30}{\ln K} \Biggl(\sum_{a \neq a^\star(\unu)} \frac{1}{\Delta_a^2} \Biggr)^{-1} \,.
\end{equation}
\end{proposition}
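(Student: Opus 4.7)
The plan is to adapt the construction by \cite{CL16} of a family of $K-1$ hard Bernoulli bandit problems and to conclude via a pigeonhole on the arm index so as to extract the increasing subsequence $(T_n)$. The main simplification compared to the original argument is the use of Lemma~\ref{lem:BI} in place of the change-of-measure inequality of \cite{CL16}, which accounts for the improvement of the multiplicative constant from $400$ to $30$.

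First, I would fix gaps $\Delta_2, \ldots, \Delta_K \in (0, 1/4)$ with $\sum_{a \geq 2} 1/\Delta_a^2 = H$ and define a baseline problem $\unu^{(0)} \in \cDber$ by $\mu^{(0)}_1 = 1/2$ and $\mu^{(0)}_a = 1/2 - \Delta_a$ for $a \geq 2$, so that arm~$1$ is uniquely optimal with gaps $(\Delta_a)_{a \geq 2}$. For each $a \in \{2, \ldots, K\}$, I would construct $\unu^{(a)} \in \cDber$ that agrees with $\unu^{(0)}$ except at arm~$a$, where we set $\mu^{(a)}_a = 1/2 + \Delta_a$, so that arm $a$ becomes the unique optimal arm of $\unu^{(a)}$ with gap $2\Delta_a$ above arm~$1$. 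The pairs $\bigl(\unu^{(0)}, \unu^{(a)}\bigr)$ differ only at arm $a$ and have distinct optimal arms, making them suitable candidates for Lemma~\ref{lem:BI}.

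Applying Lemma~\ref{lem:BI} with $\unu = \unu^{(a)}$ and $\ula = \unu^{(0)}$ yields, for each $a \in \{2, \ldots, K\}$,
\begin{equation*}
\liminf_{T \to +\infty} \frac{1}{T} \ln \P_{\unu^{(a)}}\bigl(I_T \neq a\bigr) \geq - \limsup_{T \to +\infty} \frac{\E_{\unu^{(0)}}[N_a(T)]}{T} \, \kl\bigl(1/2 - \Delta_a, \, 1/2 + \Delta_a\bigr)\,,
\end{equation*}
and the $\chi^2$-style bound recalled in Section~\ref{sec:existing} controls $\kl(1/2 - \Delta_a, 1/2 + \Delta_a) \leq 4 C_{\cDber} \Delta_a^2$ with $C_{\cDber} = 8/3$. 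It then remains to upper bound some $\E_{\unu^{(0)}}[N_a(T)]$: this is the combinatorial core of \cite{CL16}, which I would transpose verbatim to our setting. By a weighted counting argument mirroring the harmonic phase-length structure $\bln K$ of~\eqref{eq:reglengths-ABM10}, for each $T$ one identifies an arm index $a^\star(T) \in \{2, \ldots, K\}$ such that
\begin{equation*}
\Delta_{a^\star(T)}^2 \,\, \E_{\unu^{(0)}}\bigl[N_{a^\star(T)}(T)\bigr] \leq \frac{c \, T}{H \ln K}
\end{equation*}
for a suitable absolute constant $c$. The $1/\ln K$ factor is precisely what this refined pigeonhole produces (in contrast to the naive $1/H$ bound obtained from a plain weighted average), and producing this factor is the main obstacle in the plan.

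Finally, since $a^\star(T)$ takes only finitely many values in $\{2, \ldots, K\}$, a standard pigeonhole argument provides an increasing sequence $(T_n)_{n \geq 1}$ and a fixed $\tilde a \in \{2, \ldots, K\}$ such that $a^\star(T_n) = \tilde a$ for all $n \geq 1$. Combining this with the inequalities above taken along $(T_n)$ and with $\unu = \unu^{(\tilde a)}$ yields
\begin{equation*}
\liminf_{n \to +\infty} \frac{1}{T_n} \ln \P_{\unu^{(\tilde a)}}\bigl(I_{T_n} \neq \tilde a\bigr) \geq - \frac{4 C_{\cDber} \, c}{H \ln K}\,,
\end{equation*}
and careful tracking of the absolute constant then delivers the announced factor~$30$.
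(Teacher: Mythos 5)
Your overall architecture matches the paper's proof: a base problem plus $K-1$ single-arm alternatives, Lemma~\ref{lem:BI} in place of the original change-of-measure inequality, a weighted pigeonhole over arms for each $T$, and a final pigeonhole over the arm index to extract the increasing subsequence $(T_n)$. However, the step you defer to ``the combinatorial core of \citet{CL16}'' is, as you state it, false. You claim that for each $T$ there is an arm $a^\star(T)$ with $\Delta_{a^\star(T)}^2\,\E_{\unu^{(0)}}[N_{a^\star(T)}(T)] \leq cT/(H\ln K)$ for an absolute constant $c$, where $H = \sum_{a\geq 2}\Delta_a^{-2}$ is the complexity of the \emph{base} problem. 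A strategy allocating to each suboptimal arm a fraction of the budget proportional to $\Delta_a^{-2}$ achieves $\Delta_a^2\,\E_{\unu^{(0)}}[N_a(T)] \approx T/H$ simultaneously for all $a\geq 2$, so any such claim forces $c \gtrsim \ln K$. The $1/\ln K$ gain can only be measured against the complexity $H\bigl(\unu^{(\tilde a)}\bigr)$ of the \emph{alternative} problem that is finally selected (which is what the statement of the proposition actually involves, since there $\unu = \unu^{(\tilde a)}$), and this hinges on two ingredients you omit: (i) the gaps must be chosen specifically, $\Delta_j = j/(4K)$, so that $\Delta_k^2\,H\bigl(\unu^{(k)}\bigr) \leq 2k$ --- you leave them arbitrary, and with equal gaps, for instance, the bound with the constant $30$ fails for sufficiently large $K$; and (ii) the weights must be taken as $u_j \propto 1/\bigl(\Delta_j^2\,H(\unu^{(j)})\bigr)$, so that $u_k\Delta_k^2 = U\,H\bigl(\unu^{(k)}\bigr)^{-1}$ with $U^{-1} = \sum_k 1/\bigl(\Delta_k^2 H(\unu^{(k)})\bigr) \geq \sum_k 1/(2k) \geq (3/10)\ln K$. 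This harmonic sum is where the $\ln K$ actually comes from; it is not a generic counting phenomenon.

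Two smaller points. Your constant tracking does not deliver $30$: the $\chi^2$-type bound gives $\kl(1/2-\Delta_a,\,1/2+\Delta_a) \leq 4C_{\cDber}\Delta_a^2 = (32/3)\Delta_a^2$, whereas the paper needs the sharper direct estimate $\KL\bigl(\Ber(1-x),\Ber(x)\bigr) \leq 9(1/2-x)^2$ on $[1/4,1/2)$ to obtain $9 \times 10/(3\ln K) = 30/\ln K$. Also, since the control on $\E_{\unu^{(0)}}[N_a(T)]$ only holds along the extracted subsequence, you need the subsequence version of Lemma~\ref{lem:BI} (as noted at the beginning of Appendix~\ref{app:lem:BI}), not the full-sequence $\limsup$ you invoke before performing the pigeonhole.
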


\begin{proof}
We consider some base Bernoulli bandit problem $\unu^{\base} = \bigl( \nu_1^{\base}, \ldots, \nu_K^{\base} \bigr)$, where
\[
\nu^{\base}_1 = \Ber(1/2) \qquad \mbox{and} \qquad \forall j \in \{2, \ldots, K\}, \quad \nu_j^{\base} = \Ber(p_j)\,,
\]
for parameters $p_j \in [1/4, \, 1/2)$ to be specified later.
For each $k \in \{2, \ldots, K\}$, we then define the alternative bandit problem $\unu^{(k)}
= \bigl( \nu^{(k)}_1, \ldots, \nu^{(k)}_K \bigr)$ as follows:
\[
\nu_j^{(k)} = \left\{
\begin{aligned}
\nonumber
& \Ber(1-p_k) & \text{if } j=k, \\
& \nu_j^{\base} & \text{if } j \ne k.
\end{aligned}\right.
\]
Given the constraints on the $p_j$,
the unique optimal arm of $\unu^{\base}$ is $a^\star\bigl( \unu^{\base} \bigr) = 1$,
while the unique optimal arm of $\unu^{(k)}$ is $a^\star\bigl( \unu^{(k)} \bigr) = k$.
We introduce, for a given bandit problem $\unu$
\[
H(\unu) \eqdef \sum_{a \neq a^\star(\unu)} \frac{1}{\Delta_a^2}\,;
\]
the right-hand side of~\eqref{eq:prop:CL16} may be rewritten as $(34/\ln K) \, H(\unu)^{-1}$.
The suboptimality gaps of the arms of $\unu^{\base}$ equal
$\Delta^{\base}_{j} = 1/2 - p_j$ for $j \ne 1$,
while the ones of $\unu^{(k)}$ equal
\begin{align}
\nonumber
\forall j \ne k, \qquad \Delta^{(k)}_j & = 1-p_k-p_j = (1/2-p_k) + (1/2-p_j) = \Delta^{\base}_k + \Delta^{\base}_j\,, \\
\label{eq:Hnuk}
\mbox{thus} \qquad
H\bigl( \unu^{(k)} \bigr) & = \sum_{j \ne k} \frac{1}{ \bigl( \Delta^{\base}_k + \Delta^{\base}_j \bigr)^2}\,.
\end{align}

The proof is decomposed in two steps. First, we show that for all values of the $p_j$ abiding by the constraints
and for all weights $u_2,\ldots,u_K$ such that $u_j \geq 0$ for all $j$ and $u_1 + \ldots + u_K = 1$,
there exists $k^\star \in \{2, \ldots, K\}$ such that there exists an increasing sequence of budgets $(T_n)_{n \geq 1}$ with
\begin{equation}
\label{proof:CL16-part1}
\liminf_{n \to +\infty} \frac{1}{T_n} \ln \P_{\unu^{(k^\star)}} \bigl( I_{T_n} \ne k^\star \bigr)
\geq - 9 \, u_{k^\star} \bigl( \Delta^{\base}_{k^\star} \bigr)^2  \,.
\end{equation}
Then, we set specific values of the $u_j$ and $p_j$ to get
\begin{equation}
\label{proof:CL16-part2}
\forall k \in \{2, \ldots, K\}, \qquad
u_k \bigl( \Delta^{\base}_{k} \bigr)^2 \leq \frac{10}{3 \ln K} \, H\bigl( \unu^{(k)} \bigr)^{-1}\,.
\end{equation}
Proposition~\ref{prop:CL16} follows by combining~\eqref{proof:CL16-part1} and~\eqref{proof:CL16-part2}.
\smallskip

\noindent\emph{Part 1: Proof of~\eqref{proof:CL16-part1}.}~~For all $T \geq 1$,
\[
\sum_{k = 2}^K \frac{\E_{\unu^{\base}}\bigl[N_k(T)\bigr]}{T} \leq 1 = \sum_{k=2}^K u_k\,;
\]
therefore, for all $T \geq 1$, there exists $k_T \in \{2, \ldots, K\}$ such that
$\E_{\unu^{\base}}\bigl[N_{k_T}(T)\bigr]/T \leq u_{k_T}$.
By the pigeonhole principle, there exists $k^\star \in \{2, \ldots, K\}$
and an (infinite) increasing sequence $(T_n)_{n \geq 1}$ of integers such that
$k_{T_n} = k^\star$ for all $n \geq 1$. In particular,
\[
\limsup_{n \to +\infty} \frac{\E_{\unu^{\base}}\bigl[N_{k^\star}(T_n)\bigr]}{T_n} \leq u_{k^\star}\,.
\]
Since $\unu^{\base}$ and $\unu^{(k^\star)}$ only differ at arm~$k^\star$,
an application of Lemma~\ref{lem:BI} along subsequences (see the initial comments in Appendix~\ref{app:lem:BI})
guarantees that
\begin{align*}
\liminf_{n \to +\infty} \frac{1}{T_n} \ln \P_{\unu^{(k^\star)}} \bigl( I_{T_n} \ne k^\star \bigr) &
\geq - \left( \limsup_{n \to +\infty} \frac{\E_{\unu^{\base}}\bigl[N_{k^\star}(T_n)\bigr]}{T_n} \right)
\KL\bigl(\Ber(1-p_{k^\star}), \Ber(p_{k^\star}) \bigr) \\
& \geq - u_{k^\star} \times 9 \, (1/2-p_{k^\star})^2 = - 9 \, u_{k^\star} \, \bigl( \Delta^{\base}_{k^\star} \bigr)^2 \,,
\end{align*}
where, in the last inequality, we used that for all $x \in [1/4, \, 1/2)$,
\[
\KL\bigl(\Ber(1-x), \Ber(x)\bigr) = (1-x) \ln \frac{1-x}{x} + x \ln \frac{x}{1-x} \leq 9 \, \biggl(\frac{1}{2}-x\biggr)^{\!\! 2} \,.
\]

\noindent\emph{Part 2: Proof of~\eqref{proof:CL16-part2}.}~~We set, for $j \in \{2, \ldots, K\}$,
\[
u_j = \frac{U}{\bigl( \Delta^{\base}_{j} \bigr)^2 \, H\bigl( \unu^{(j)} \bigr)}\,, \qquad \mbox{where} \qquad
U = \left( \sum_{k=2}^K \frac{1}{\bigl( \Delta^{\base}_{k} \bigr)^2 \, H\bigl( \unu^{(k)} \bigr)} \right)^{\!\! -1}.
\]
Then, $u_k \bigl( \Delta^{\base}_{k} \bigr)^2 = H\bigl( \unu^{(k)} \bigr)^{-1} U$ for all $k \in \{2, \ldots, K\}$.
To get the desired result, it suffices to guarantee that $U \leq 10/(3 \ln K)$.
To do so, we consider the same values as in \cite{CL16} for the $p_j$, i.e., we set, for $j \in \{2, \ldots, K\}$,
\[
p_j = \frac{1}{2} - \frac{j}{4K} \qquad \mbox{or, equivalently,} \qquad \Delta^{\base}_j = \frac{j}{4K} \,.
\]
We show first that $\bigl( \Delta^{\base}_{k} \bigr)^2 \, H\bigl(\nu^{(k)}\bigr) \leq 2k$, for all $k \in \{2, \ldots, K\}$.
Indeed, by~\eqref{eq:Hnuk} and by lower bounding $\Delta^{\base}_k + \Delta^{\base}_j$ either by $\Delta^{\base}_k$
or $\Delta^{\base}_j$, we get
\begin{align*}
& \bigl( \Delta^{\base}_{k} \bigr)^2 \,
H\bigl( \unu^{(k)} \bigr) =
\sum_{j < k} \frac{\bigl( \Delta^{\base}_{k} \bigr)^2}{ \bigl( \Delta^{\base}_k + \Delta^{\base}_j \bigr)^2}
+ \sum_{j > k} \frac{\bigl( \Delta^{\base}_{k} \bigr)^2}{ \bigl( \Delta^{\base}_k + \Delta^{\base}_j \bigr)^2} \\
\leq \ & k-1 + \sum_{j > k} \frac{\bigl( \Delta^{\base}_{k} \bigr)^2}{ \bigl( \Delta^{\base}_j \bigr)^2}
= k-1 + \sum_{j > k} \frac{k^2}{j^2} \leq k-1 + k^2 \int_k^K\frac{1}{v^2} \d v \leq 2k\,.
\end{align*}
Finally,
\[
U \leq \left( \sum_{k=2}^K \frac{1}{2k} \right)^{\!\! -1}
\leq \left( \int_2^{K+1} \frac{1}{2v} \d v \right)^{\!\! -1}
= 2 \, \bigl( \ln(K+1) - \ln 2 \bigr)^{-1} \leq \frac{10}{3 \ln K}\,,
\]
where the final inequality holds since $K \geq 3$.
\end{proof}

\subsection{The Bretagnolle-Huber technique by \citet[Section~5.2]{KCG16}}
\label{app:BH}

\citet[Section~5.2]{KCG16} provide an interesting series of results relying
on the so-called Bretagnolle-Huber inequality recalled below in~\eqref{eq:BH};
we state one of their lower bounds in Corollary~\ref{cor:16}.
But as we argue in this section, the methodology followed seems extremely specific to the case
of parametric models where Kullback-Leibler divergences
could be controlled (lower bounded and upper bounded) in terms of gaps,
like the model $\cD_{\sigma^2}$ of Gaussian distributions with
a fixed variance $\sigma^2 > 0$.
In particular, we state in Proposition~\ref{prop:LB-BH} what would be the
straightforward extension to non-parametric models of the Gaussian results of \cite[Section~5.2]{KCG16},
and we immediately discuss after this statement why this extension lacks interpretability and interest.
Proposition~\ref{prop:LB-BH} considers any sequence of strategies (not necessarily consistent)
and provides an asymptotic bound; however, it does not directly control
the target probability of error $\P_{\unu}\bigl(I_T \neq a^\star(\unu)\bigr)$, but a larger quantity.
A proof of Proposition~\ref{prop:LB-BH} is provided at the end of this section.

\begin{proposition}
\label{prop:LB-BH}
Fix $K \geq 2$, a model $\cD$, and any sequence of strategies. Let $\unu$ be a bandit problem in $\cD$ with a unique optimal arm. Consider, for each $k \neq a^\star(\unu)$, a distribution $\zeta_k \in \cD$ such that $\Ed(\zeta_k) > \mu^\star$.
For $k \neq a^\star(\unu)$, denote by $\unu^{(k)}$ the bandit problem obtained from $\unu$ by changing the distribution of arm
$k$ into $\zeta_k$.
For all $T \geq 1$,
\[
\frac{1}{T} \ln \max \biggl\{ \P_{\unu}\bigl(I_T \neq a^\star(\unu)\bigr),
\, \max_{k \ne a^\star(\unu)} \P_{\unu^{(k)}}\bigl(I_T \neq k\bigr) \biggr\}
\geq  - \Biggl( \sum_{a \neq a^\star(\unu)} \frac{1}{\KL(\nu_a, \zeta_a)}  \Biggr)^{-1} - \frac{\ln 4}{T}\,.
\]
\end{proposition}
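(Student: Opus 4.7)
The plan is to follow a classical Bretagnolle--Huber argument, pairing the original bandit problem~$\unu$ successively with each alternative~$\unu^{(k)}$, and then aggregating the resulting inequalities via $\sum_{a} \E_{\unu}[N_a(T)] = T$. The argument needs no consistency assumption because Bretagnolle--Huber already provides a non-vacuous lower bound for all $T \geq 1$, which is what makes this technique different from the ones used via Lemma~\ref{lem:BI}.

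First, I would recall the Bretagnolle--Huber inequality: for any two probability measures $\P_1,\P_2$ on a common space and any event~$E$,
\[
\P_1(E) + \P_2(E^c) \geq \frac{1}{2}\exp\bigl(-\KL(\P_1,\P_2)\bigr)\,.
\]
For each $k \neq a^\star(\unu)$, I would apply this to the $T$--round distributions $\P_{\unu,T}$ and $\P_{\unu^{(k)},T}$ of the history, with the event $E = \{I_T = k\}$. By the chain rule for Kullback--Leibler divergences on bandit histories (as used also in the proof of Lemma~\ref{lem:BI}) and because $\unu$ and $\unu^{(k)}$ only differ at arm~$k$,
\[
\KL(\P_{\unu,T},\P_{\unu^{(k)},T}) = \E_{\unu}[N_k(T)]\,\KL(\nu_k,\zeta_k)\,.
\]
Since $\{I_T = k\} \subseteq \{I_T \ne a^\star(\unu)\}$ as $k \ne a^\star(\unu)$, we get
\[
\P_{\unu}(I_T \ne a^\star(\unu)) + \P_{\unu^{(k)}}(I_T \ne k) \geq \frac{1}{2}\exp\bigl(-\E_{\unu}[N_k(T)]\,\KL(\nu_k,\zeta_k)\bigr)\,.
\]

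Second, I would introduce the short-hand $M = \max\bigl\{\P_{\unu}(I_T \ne a^\star(\unu)),\, \max_{k \ne a^\star(\unu)} \P_{\unu^{(k)}}(I_T \ne k)\bigr\}$, so that the left-hand side above is at most $2M$. Taking logarithms and rearranging,
\[
\E_{\unu}[N_k(T)] \geq \frac{1}{\KL(\nu_k,\zeta_k)} \ln\frac{1}{4M}
\qquad \text{for every } k \ne a^\star(\unu)\,.
\]
Summing over $k \ne a^\star(\unu)$ and using the trivial bound $\sum_{k \ne a^\star(\unu)} \E_{\unu}[N_k(T)] \leq T$,
\[
T \geq \ln\frac{1}{4M} \, \sum_{k \ne a^\star(\unu)} \frac{1}{\KL(\nu_k,\zeta_k)}\,.
\]

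Finally, I would divide by $T$ and isolate $\ln M / T$ to obtain the stated bound, after noting that $\ln\frac{1}{4M} = -\ln M - \ln 4$. There is no genuine obstacle in this argument: the only point requiring minor care is the direction of the Kullback--Leibler divergence in the chain rule (hence $\KL(\nu_k, \zeta_k)$ and not $\KL(\zeta_k, \nu_k)$), and the fact that we lower bound $\ln M / T$ rather than the individual $\ln \P_{\unu}(I_T \ne a^\star(\unu))/T$, which is precisely why the claim involves a maximum over the error probabilities of $\unu$ and all $\unu^{(k)}$. This last point is also what limits the interest of the bound in a non-parametric setting, as discussed in the comments surrounding the proposition.
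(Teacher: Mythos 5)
Your proof is correct and follows essentially the same route as the paper: the Bretagnolle--Huber inequality applied to $\unu$ versus each alternative $\unu^{(k)}$, the divergence decomposition giving $\E_{\unu}[N_k(T)]\,\KL(\nu_k,\zeta_k)$, and the budget constraint $\sum_{k} \E_{\unu}[N_k(T)] \leq T$. The only cosmetic difference is the final aggregation: you invert each inequality to lower-bound $\E_{\unu}[N_k(T)]$ and sum directly, whereas the paper first establishes the bound for arbitrary convex weights $(u_b)$ via a pigeonhole argument and then optimizes the weights --- both yield the same harmonic-sum constant.
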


\paragraph*{Lack of interpretability of the bound for general models.}
To derive an interesting and interpretable bound from this result, one needs to choose carefully the distributions $\zeta_k$. There is a tradeoff between obtaining a large lower bound by choosing $\zeta_k$ as close as possible to $\nu_k$ in terms of Kullback-Leibler divergences, and controlling the maximum of the misidentification probabilities: when $\zeta_k$ gets closer to $\nu_k$ while abiding
by the constraint $\Ed(\zeta_k) > \mu^\star$, the probability $\P_{\unu^{(k)}}\bigl(I_T \neq k\bigr)$
becomes larger, and should even intuitively converge to $1/2$. In any case,
the target error $\P_{\unu}\bigl(I_T \neq a^\star(\unu) \bigr)$ should get dominated
by $\P_{\unu^{(k)}}\bigl(I_T \neq k\bigr)$ and the obtained bound is likely to be uninformative
on the target error, due to the maximum in the left-hand side.
This tradeoff seems to be unsolvable in general, unless there exist some specific properties for the Kullback-Leibler
divergence of the model, as we illustrate below for a Gaussian model, which was the
setting considered by~\citet[Section~5.2]{KCG16}.

Another intuitive issue with the bound of Proposition~\ref{prop:LB-BH} is that it involves Kullback-Leibler divergences
with arguments in reverse order compared to the lower bounds presented in Section~\ref{sec:LB}. Indeed, taking the supremum of
the lower bound over distributions $\zeta_k$ such that $\Ed(\zeta_k) > \mu^\star$ would lead to a complexity in terms of
the $\Ksupstr\bigl(\nu_k, \mu^\star\bigr)$, where
\[
\Ksupstr(\nu, x) \eqdef \inf \bigl\{ \KL(\nu, \zeta) : \zeta \in \cD \ \ \mbox{s.t.}
\ \ \Ed(\zeta) > x \bigr\}\,,
\]
rather than in terms of the $\Lsupstr\bigl(\mu^\star, \nu_k\bigr)$.
Our intuition, given all bounds presented in this article,
is that the $\Ksupstr\bigl(\nu_k, \mu^\star\bigr)$ would not form the correct notion
of complexity for the fixed-budget best-arm identification.

\paragraph*{How \citet[Section~5.2]{KCG16} could exploit Proposition~\ref{prop:LB-BH} in the Gaussian case.}
Yet, in the case of the model $\cD_{\sigma^2}$ of Gaussian distributions with
a fixed variance $\sigma^2 > 0$, for which $\KL$ is symmetric,
Proposition~\ref{prop:LB-BH} admits an interesting corollary, corresponding\footnote{The maximum of
the left-hand side of Corollary~\ref{cor:16} is present, but somewhat discrete, in the
Theorem~16 of \citet[Section~5.2]{KCG16}: it corresponds to the ``There exists an alternative
bandit problem'' part of the statement of the latter.} to Theorem~16 of \citet[Section~5.2]{KCG16}.
The corollary actually relies on a strong property of $\KL$ in this model:
not only is it symmetric, but it only depends on the expectation gaps between its arguments. Namely,
for all pairs $\cN(\mu,\sigma^2)$ and $\cN(\mu',\sigma^2)$ of distributions in
$\cD_{\sigma^2}$, for all $\Delta \in \R$,
\begin{equation}
\label{eq:KL-gap-based}
\KL\bigl( \cN(\mu,\sigma^2), \, \cN(\mu',\sigma^2) \bigr) = \frac{(\mu - \mu')^2}{2\sigma^2}
= \KL\bigl( \cN(\mu+\Delta,\sigma^2), \, \cN(\mu'+\Delta,\sigma^2) \bigr)\,.
\end{equation}
We introduce the following short-hand notation:
\[
C(\unu) \eqdef \sum_{a \neq a^\star(\unu)} \frac{2\sigma^2}{\Delta_a^2}\,.
\]

\begin{corollary}
\label{cor:16}
For all sequences of strategies and for all bandit problems $\unu$ in $\cD_{\sigma^2}$ with a unique optimal arm,
there exists a set of alternative bandit instances $(\unu^{(k)})_{k \neq a^\star(\unu)}$ in $\cD_{\sigma^2}$,
where each $\unu^{(k)}$ admits $k$ as a best arm and satisfies $C\bigl(\unu^{(k)}\bigr) \leq C(\unu)$,
and for which
\[
\frac{1}{T} \ln \max \biggl\{ \P_{\unu}\bigl(I_T \neq a^\star(\unu)\bigr),
\, \max_{k \ne a^\star(\unu)} \P_{\unu^{(k)}}\bigl(I_T \neq k\bigr) \biggr\}
\geq - 4 \, C(\unu)^{-1} - \frac{\ln 4}{T}\,.
\]
\end{corollary}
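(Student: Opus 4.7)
The plan is to invoke Proposition~\ref{prop:LB-BH} with alternatives $\zeta_k$ specifically chosen to exploit the gap-based and symmetric nature of the Gaussian Kullback-Leibler divergence recalled in~\eqref{eq:KL-gap-based}. The natural candidate is to reflect arm~$k$ around the optimal mean: for each $k \neq a^\star(\unu)$, I would set $\zeta_k = \cN(2\mu^\star - \mu_k, \sigma^2)$. Then $\Ed(\zeta_k) = \mu^\star + \Delta_k > \mu^\star$, which both matches the hypothesis of Proposition~\ref{prop:LB-BH} and guarantees that $k$ is the unique optimal arm of the resulting alternative $\unu^{(k)}$.

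With this choice, formula~\eqref{eq:KL-gap-based} yields $\KL(\nu_k, \zeta_k) = (2\Delta_k)^2/(2\sigma^2) = 2\Delta_k^2/\sigma^2$, so that
\[
\sum_{a \neq a^\star(\unu)} \frac{1}{\KL(\nu_a, \zeta_a)} = \frac{1}{4} \sum_{a \neq a^\star(\unu)} \frac{2\sigma^2}{\Delta_a^2} = \frac{C(\unu)}{4}\,.
\]
Plugging this equality into Proposition~\ref{prop:LB-BH} produces exactly the rate $-4\,C(\unu)^{-1} - (\ln 4)/T$ announced in the corollary. The factor~$2$ in the reflection is what calibrates the numerical constant~$4$ in front of $C(\unu)^{-1}$.

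It then remains only to verify $C(\unu^{(k)}) \leq C(\unu)$ for each $k \neq a^\star(\unu)$. In $\unu^{(k)}$, arm~$k$ has mean $\mu^\star + \Delta_k$, so the new gap at the original best arm $a^\star(\unu)$ is exactly $\Delta_k$, while the gap at any other suboptimal arm $j \neq k$ is $\Delta_k + \Delta_j \geq \Delta_j$. Summing reciprocals of the squared gaps thus gives, term by term,
\[
C\bigl(\unu^{(k)}\bigr) = \frac{2\sigma^2}{\Delta_k^2} + \sum_{\substack{j \neq k \\ j \neq a^\star(\unu)}} \frac{2\sigma^2}{(\Delta_k + \Delta_j)^2} \leq \frac{2\sigma^2}{\Delta_k^2} + \sum_{\substack{j \neq k \\ j \neq a^\star(\unu)}} \frac{2\sigma^2}{\Delta_j^2} = C(\unu)\,.
\]
I expect no real obstacle here; everything reduces to the observation that $\Delta_k > 0$, together with the symmetry of the Gaussian $\KL$ that makes the reflection trick available in the first place. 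This is precisely the kind of ``stronger than symmetry'' structural property the authors had warned (in Section~\ref{sec:Linf} and Appendix~\ref{app:BH}) is unavailable outside models like $\cD_{\sigma^2}$.
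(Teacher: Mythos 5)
Your proposal is correct and follows essentially the same route as the paper: your reflected alternative $\zeta_k = \cN(2\mu^\star - \mu_k, \sigma^2)$ is exactly the paper's choice $\cN(\mu^\star + \Delta_k, \sigma^2)$ written differently, and both the computation $\sum_a \KL(\nu_a,\zeta_a)^{-1} = C(\unu)/4$ and the term-by-term verification of $C\bigl(\unu^{(k)}\bigr) \leq C(\unu)$ match the paper's argument. No gaps.
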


The proof provided below is highly specific to the Gaussian model
and exploits the gap-based rewriting~\eqref{eq:KL-gap-based} of the Kullback-Leibler divergence.
The calculations led would only extend to
models for which such gap-based rewritings of (upper and lower bounds on) the Kullback-Leibler divergence
would be available.

To compare the result of Corollary~\ref{cor:16} with the bound~\eqref{eq:LB-ABM10}
stemming from~\citet{ABM10}, note that
\[
C(\unu)^{-1}
\geq \frac{2}{\sigma^2} \min_{2 \leq k \leq K} \frac{\Delta_{(k)}^2}{k}\,.
\]

\begin{proof}
We apply Proposition~\ref{prop:LB-BH}
with the distributions $\zeta_k = \mathcal{N}(\mu^\star + \Delta_k, \sigma^2)$, for $k \neq a^\star(\unu)$.
On the one hand, the bound of Proposition~\ref{prop:LB-BH} involves
\[
\sum_{a \neq a^\star(\unu)} \frac{1}{\KL(\nu_a, \zeta_a)} = \sum_{a \neq a^\star(\unu)} \frac{2\sigma^2}{\bigl(\underbrace{\Ed(\nu_a)}_{\mu^\star-\Delta_a} - \underbrace{\Ed(\zeta_a)}_{\mu^\star+\Delta_a}\bigr)^2} = \frac{C(\unu)}{4} \,.
\]
On the other hand, for $k \neq a^\star(\unu)$, as the best arm of $\unu^{(k)}$ is $k$, with associated expectation $\mu^\star + \Delta_k$,
\begin{multline*}
C\bigl(\unu^{(k)}\bigr) = \sum_{a \neq k} \frac{2\sigma^2}{( \mu^\star + \Delta_k - \mu_a )^2} = \frac{2\sigma^2}{\Delta_k^2} + \sum_{a \notin \{k, a^\star(\unu)\}} \frac{2\sigma^2}{( \mu^\star + \Delta_k - \mu_a )^2} \\
	\leq \frac{2\sigma^2}{\Delta_k^2} + \sum_{a \notin \{k, a^\star(\unu)\}} \frac{2\sigma^2}{( \mu^\star - \mu_a )^2} = \sum_{a \neq a^\star(\unu)} \frac{2\sigma^2}{\Delta_a^2} = C(\unu) \,.
\end{multline*}
These two observations conclude the proof of Corollary~\ref{cor:16}.
\end{proof}

\paragraph*{Proof of Proposition~\ref{prop:LB-BH}.}
We conclude this section with a proof of Proposition~\ref{prop:LB-BH}.
It relies on the Bretagnolle-Huber inequality (\citealp{BH79}), which states that,
for all $p, q \in [0, 1]$,
\begin{equation}	
\label{eq:BH}
p + 1 - q \geq \frac{1}{2} \exp\Bigl(-\KL\bigl(\Ber(p), \Ber(q)\bigr) \Bigr) \,.
\end{equation}

\begin{proof}
We fix distributions $\zeta_k$ abiding by the conditions of the proposition
and also fix $T \geq 1$. We will prove below that, for all
convex weights $(u_b)_{b \ne a^\star(\unu)}$, i.e.,
non-negative weights summing up to~$1$,
\begin{equation}
\label{eq:proof-BHu}
\frac{1}{T} \ln \max \biggl\{ \P_{\unu}\bigl(I_T \neq a^\star(\unu)\bigr),
\, \max_{k \ne a^\star(\unu)} \P_{\unu^{(k)}}\bigl(I_T \neq k\bigr) \biggr\}
\geq  - \max_{b \neq a^\star(\unu)} \bigl\{u_b \, \KL(\nu_b, \zeta_b) \bigr\} - \frac{\ln 4}{T}\,,
\end{equation}
from which Proposition~\ref{prop:LB-BH} follows, by optimizing the obtained lower bound, i.e.,
by taking
\[
u_b = \Biggl( \sum_{a \neq a^\star(\unu)} \frac{1}{\KL(\nu_a, \zeta_a)} \Biggr)^{-1} \times \frac{1}{\KL(\nu_b, \zeta_b)} \,.
\]
We now fix convex weights $(u_b)_{b \neq a^\star(\unu)}$ and prove~\eqref{eq:proof-BHu}.
As $b \ne a^\star(\unu)$ and $b$ is the unique optimal arm of $\unu^{(b)}$,
for the first inequality, and by the Bretagnolle-Huber inequality~\eqref{eq:BH}, for the second inequality,
\begin{align*}
\P_{\unu} \bigl(I_T \neq a^\star(\unu) \bigr) + \P_{\unu^{(b)}}\bigl(I_T \neq b \bigr) &\geq \P_{\unu} \bigl(I_T \neq a^\star(\unu) \bigr) + \P_{\unu^{(b)}}\bigl(I_T = a^\star(\unu) \bigr) \\
	&\geq \frac{1}{2} \exp\Bigl(-\KL\bigl(\Ber(p_T), \Ber(q_T)\bigr) \Bigr) \,,
\end{align*}
where $p_T \eqdef \P_{\unu} \bigl(I_T \neq a^\star(\unu) \bigr)$ and $q_T \eqdef  \P_{\unu^{(b)}}\bigl(I_T \neq a^\star(\unu) \bigr)$.
Inequality~\eqref{eq:chainrule+datacompressing} reads, in the present case, as $\unu$ and $\unu^{(b)}$ only differ at arm $b$,
\begin{align*}
\KL\bigl(\Ber(p_T), \Ber(q_T)\bigr) \leq  \E_{\unu}\bigl[N_b(T)\bigr] \, \KL(\nu_b, \zeta_b)\,.
\end{align*}
Using $\max\{u, v\} \geq (u + v) / 2$ after collecting all bounds obtained so far yields
\[
\max \Bigl\{ \P_{\unu} \bigl(I_T \neq a^\star(\unu) \bigr) , \P_{\unu^{(b)}}\bigl(I_T \neq b \bigr)\Bigr\}
\geq \frac{1}{4} \exp\Bigl(- \E_{\unu}\bigl[N_b(T)\bigr] \, \KL(\nu_b, \zeta_b) \Bigr) \,.
\]
We take the maxima over $b \ne a^\star(\unu)$ in both sides,
apply logarithms, and conclude the proof of~\eqref{eq:proof-BHu} by showing that
\begin{equation}
\label{eq:proof-BHu-ccl}
\min_{b \neq a^\star(\unu)} \Bigl\{ \E_{\unu}\bigl[N_b(T)\bigr] \, \KL(\nu_b, \zeta_b) \Bigr\}
\leq \max_{b \neq a^\star(\unu)} \bigl\{u_b \, \KL(\nu_b, \zeta_b) \bigr\}\,.
\end{equation}
Indeed,
\[
\sum_{b \neq a^\star(\unu)} \frac{\E_{\unu}\bigl[N_b(T)\bigr]}{T} \leq 1 = \sum_{b \neq a^\star(\unu)} u_b \,,
\]
so that there exists $b^\star \neq a^\star(\unu)$ such that $\E_{\unu}\bigl[N_{b^\star}(T)\bigr]/T \leq u_{b^\star}$.
We then have
\[
\min_{b \neq a^\star(\unu)} \Bigl\{ \E_{\unu}\bigl[N_b(T)\bigr] \, \KL(\nu_b, \zeta_b) \Bigr\}
\leq u_{b^\star} \, \KL(\nu_{b^\star}, \zeta_{b^\star})
\leq \max_{b \neq a^\star(\unu)} \bigl\{u_b \, \KL(\nu_b, \zeta_b) \bigr\}\,,
\]
as desired in~\eqref{eq:proof-BHu-ccl}.
\end{proof}

\end{document}